\title{Beyond Black-Box Advice: Learning-Augmented Algorithms for MDPs with Q-Value Predictions}
\author{
  Tongxin~Li \\
  School of Data Science \\
  CUHK-SZ, China \\
  \texttt{litongxin@cuhk.edu.cn}
  \And 
  Yiheng Lin \\
  Computing + Mathematical Sciences \\
  Caltech, USA \\
  \texttt{yihengl@caltech.edu}
  \And
  Shaolei Ren \\
  Electrical \& Computer Engineering \\
  UC Riverside, USA \\
  \texttt{shaolei@ucr.edu}
  \And
  Adam Wierman \\
  Computing + Mathematical Sciences \\
  Caltech, USA \\
  \texttt{adamw@caltech.edu}
}
\newenvironment{myprocedure}[1][htb]{%
    \renewcommand{\ALG@name}{Procedure}
   \begin{algorithm}[#1]%
  }{\end{algorithm}}
\newtheorem{assumption}{Assumption}
\newcommand{\nt}{T}
\newcommand{\nh}{H}
\newcommand{\decayfactor}{\lambda}
\DeclareMathOperator*{\argmin}{arg\,min}
\DeclareMathOperator*{\arginf}{arg\,inf}
\newcommand{\norm}[1]{\left\lVert#1\right\rVert}
\newcommand{\MPC}{\mathsf{MPC}}
\newcommand{\DIAG}{\mathsf{diag}}
\newcommand{\TV}{\mathsf{TV}}
\newcommand{\abs}[1]{\left\lvert#1\right\rvert}
\newcommand{\yiheng}[1]{{\textcolor{purple}{(Yiheng says:  #1)}}}
\newcommand{\ouralg}{{\textsc{PROP}}\xspace}
\newtheorem{theorem}{Theorem}
\numberwithin{theorem}{section}
\newtheorem{lemma}{Lemma}
\numberwithin{corollary}{section}
\newtheorem{definition}{Definition}
\begin{document}

\maketitle

\begin{abstract}
We study the tradeoff between consistency and robustness in the context of a single-trajectory time-varying Markov Decision Process (MDP) with untrusted machine-learned advice. Our work departs from the typical approach of
treating advice as coming from black-box sources by instead considering a setting where additional information about  how the advice is generated is available.
We prove a first-of-its-kind consistency and robustness tradeoff given Q-value advice under a general MDP model that includes both continuous and discrete state/action spaces.
Our results highlight that utilizing Q-value advice enables dynamic pursuit
of the better of machine-learned advice and a robust baseline, thus result in near-optimal performance guarantees, which provably 
 improves what can be obtained solely with black-box advice.
\end{abstract}

\paragraph{Revision note.}
This version corrects published Lemma 4 and the proof of Theorem 5.4 from arXiv version 2. The revised Grey Box Procedure accounts for transition noise in the sampled temporal difference statistic and retains the stated asymptotic consistency and robustness conclusion under the explicit conditions below. The correction was prompted by Anders Wikum, whom we thank for identifying the expectation versus pathwise issue.


\section{Introduction}
\label{sec:intro}

Machine-learned predictions and hand-crafted algorithmic advice are both crucial in online decision-making problems, driving a growing interest in \textit{learning-augmented algorithms}~\cite{mitzenmacher2022algorithms,li2023learning} that exploit the benefits of predictions to improve the performance for typical problem instances while bounding
the worst-case performance
~\cite{purohit2018improving,christianson2022chasing}. To this point, the study of learning-augmented algorithms has primarily viewed machine-learned advice as potentially untrusted information generated by black-box     
models. Yet, in many real-world problems, additional knowledge of the machine learning models used to produce advice/predictions is often available and can potentially improve the performance of learning-augmented algorithms.

 

A notable example that motivates our work is the problem of minimizing costs (or maximizing rewards) in a single-trajectory Markov Decision Process (MDP). More concretely, 
a value-based machine-learned policy $\widetilde{\pi}$ can be queried to provide suggested actions as advice to the agent at each step ~\cite{nachum2017bridging,yang2020function,golowich2022can}.
Typically, the suggested actions are chosen to minimize (or  maximize, in case of rewards) estimated cost-to-go functions (known as {Q}-value predictions) based on the current state.

Naturally,  in addition to suggested actions, the {Q}-value function itself can also provide additional information 
(e.g., the long-term impact of choosing a certain action) potentially useful to the design of a learning-augmented algorithm.
Thus, this leads to two different designs for learning-augmented algorithms in MDPs: \textit{black-box} algorithms and \textit{grey-box} algorithms. 
%
%
A learning-augmented algorithm using $\widetilde{\pi}$ is black-box if $\widetilde{\pi}$ provides only the suggested action $\widetilde{u}$ to the learning-augmented algorithm, whereas it is value-based (a.k.a., grey-box) if $\widetilde{\pi}$ provides an estimate of the {Q}-value function $\smash{\widetilde{Q}}$ (that also implicitly includes
a suggested action $\widetilde{u}$ obtained by minimizing $\smash{\widetilde{Q}}$) to the learning-augmented algorithm.


Value-based policies $\widetilde{\pi}$ often perform well empirically in stationary environments in practice \cite{nachum2017bridging,yang2020function}.  However, they may not have performance guarantees in all environments
and can perform poorly at times due to a variety of factors, such as non-stationary environments~\cite{wei2021non,mao2021near,luo2022dynamic,zhao2022dynamic}, policy collapse~\cite{scheller2020sample}, sample inefficiency~\cite{botvinick2019reinforcement}, and/or when  training data is biased~\cite{bai2019model}. As a consequence, such policies often are referred to as ``untrusted advice'' in the literature on learning-augmented algorithms, where the notion of ``untrusted'' highlights the lack of performance guarantees. In contrast, recent studies in competitive online control~\cite{shi2020online,goel2022competitive,li2022robustness,sabag2022optimal,lin2022bounded,li2021information,li2021learning} have begun to focus on worst-case analysis and provide control policies $\overline{\pi}$ with strong performance guarantees even in adversarial settings, referred to as \emph{robustness}, i.e., $\overline{\pi}$ provides ``trusted advice.'' Typically, the goal of a learning-augmented online algorithm~\cite{mitzenmacher2022algorithms,purohit2018improving} is to perform nearly as well as the untrusted advice when the machine learned policy performs well, a.k.a., achieve \emph{consistency}, while also ensuring worst-case robustness. Combining the advice of an untrusted machine-learned policy $\widetilde{\pi}$ and a robust policy $\overline{\pi}$
 naturally leads to a tradeoff between consistency and robustness. In this paper, we explore this tradeoff in a time-varying MDP setting and seek to answer the following key question for learning-augmented online algorithms: 


\textit{Can Q-value advice from an untrusted machine-learned policy, $\widetilde{\pi}$, in a \textbf{grey-box} scenario provide more benefits than the \textbf{black-box} action advice generated by $\widetilde{\pi}$ in the context of \textbf{consistency and robustness tradeoffs} for MDPs?}
 

\subsection{Contributions}

We answer the question above in the affirmative by presenting and analyzing a unified projection-based learning-augmented online algorithm (\textsc{PRO}jection \textsc{P}ursuit policy, simplified as \ouralg in Algorithm~\ref{alg:ppp}) that combines action feedback from a trusted, robust policy $\overline{\pi}$ with an untrusted ML policy $\widetilde{\pi}$.
In addition to offering a consistency and robustness tradeoff for MDPs with black-box advice, our work moves beyond the black-box setting. Importantly, by considering the grey-box setting, the design of \ouralg demonstrates that the \textit{structural information} of the untrusted machine-learned advice can be leveraged to determine the trust parameters dynamically, 
which would otherwise be challenging (if not impossible) in a black-box setting.
To our best knowledge, \ouralg is the first-of-its-kind learning-augmented algorithm that applies to general MDP models, which allow continuous or discrete state and action spaces.

Our main results characterize the tradeoff between consistency and robustness for both black-box and grey-box settings in terms of the ratio of expectations, $\mathsf{RoE}$, built upon the traditional consistency and robustness metrics in~\cite{purohit2018improving,wei2020optimal,banerjee2020improving,christianson2022chasing} for the competitive ratio.
We show in Theorem~\ref{thm:black_consistency_robustness} that for the black-box setting, $\ouralg$  
is $\left(1+\mathcal{O}((1-\lambda)\gamma)\right)$-consistent and $(\mathsf{ROB}+\mathcal{O}(\lambda\gamma))$-robust where $0\leq\lambda\leq 1$ is a hyper-parameter. Moreover, for the black-box setting, $\ouralg$
cannot be both $\left(1+o(\lambda\gamma)\right)$-consistent and $(\mathsf{ROB}+o((1-\lambda)\gamma))$-robust for any $0\leq\lambda\leq 1$ where $\gamma$ is the diameter of the action space. In sharp contrast, by using the confidence-corrected robustness
budget in \ouralg with Q-value advice (grey-box setting), \ouralg is $(1+o(1))$-consistent and $(\mathsf{ROB}+o(1))$-robust under the conditions of Theorem~\ref{thm:grey_robustness_consistency}.

Our result highlights the benefits of exploiting the additional
information informed by the estimated {Q}-value functions,
showing that the ratio of expectations
can approach the better of the two policies $\widetilde{\pi}$ and
$\overline{\pi}$ for any single-trajectory time-varying, and even possibly adversarial environments --- if the value-based policy $\widetilde{\pi}$ is near-optimal, then the worst-case $\mathsf{RoE}(\ouralg)$ can approach $1$ as governed by a consistency parameter; otherwise, $\mathsf{RoE}(\ouralg)$ can be bounded by the ratio of expectations of $\overline{\pi}$ subject to an additive term $o(1)$ that decreases when the time horizon $\nt$ increases.

A key technical contribution of our work is
to provide the first quantitative
characterization of the consistency and robustness tradeoff
for a learning-augmented algorithm (\ouralg) in a general MDP model, under both
standard black-box and novel grey-box settings.  
Importantly, \ouralg is able to leverage
a  broad class of robust policies, called 
\textit{Wasserstein robust} policies, which
generalize the well-known 
contraction principles
that are satisfied by various robust policies~\cite{tu2022sample}
and have been used to derive regrets for online control~\cite{lin2022bounded,tsukamoto2021contraction}.
A few concrete examples of Wasserstein robust policies applicable for \ouralg are provided in Table~\ref{table:baseline}(Section~\ref{sec:wasserstein-robust}).


\subsection{Related Work}
\label{sec:related_work}

\textbf{Learning-Augmented Algorithms with Black-Box Advice.}
The concept of integrating black-box machine-learned guidance into online algorithms was initially introduced by~\cite{mahdian2012online}. \cite{purohit2018improving} coined terms ``robustness" and ``consistency" with formal mathematical definitions based on the competitive ratio.
Over the past few years, the consistency and robustness approach has gained widespread popularity and has been utilized to design online algorithms with black-box advice for various applications, including ski rental~\cite{purohit2018improving,wei2020optimal,banerjee2020improving}, caching~\cite{rohatgi2020near,lykouris2021competitive,im2022parsimonious}, bipartite matching~\cite{antoniadis2020online}, online covering~\cite{bamas2020primal,anand2022online}, convex body chasing~\cite{christianson2022chasing}, nonlinear quadratic control~\cite{li2023certifying}. 
The prior studies on learning-enhanced algorithms have mainly focused on creating meta-strategies that combine online algorithms with black-box predictions, and typically require manual setting
of a trust hyper-parameter  
to balance consistency and robustness.
A more recent learning-augmented algorithm in~\cite{li2023certifying} investigated the balance between competitiveness and stability in nonlinear control in a black-box setting. However, this work limits the robust policy to a linear quadratic regulator and does not provide a theoretical basis for the selection of the trust parameters.~\cite{diakonikolas2021learning} generalized the black-box advice setting by considering distributional advice.



\textbf{Online Control and Optimization with Structural Information.} Despite the lack of a systematic analysis, recent studies have explored the usage of structural information in online control and optimization problems. Closest to our work,~\cite{golowich2022can} considered a related setting where the {Q}-value function is available as advice, and shows that such information can be utilized to reduce regret in a tabular MDP model.
In contrast, our analysis applies to more general models that allow continuous state/action spaces. In~\cite{li2022robustness}, the dynamical model and the 
predictions of disturbances in a linear control system are shown to be useful in achieving a near-optimal consistency and robustness tradeoff.
The predictive optimization problem solved by MPC~\cite{lin2021perturbation,linbounded,goel2022competitive,hoeller2020deep} can be regarded as a special realization of grey-box advice, where an approximated cost-to-go function is constructed from structural information that includes the (predicted) dynamical model, costs, and disturbances.

\vspace{3pt}

\noindent\textbf{MDP with External Feedback.} Feedback from external sources such as control baselines~\cite{cheng2019control,brunke2022safe}, visual explanations~\cite{guan2021widening}, and human experts~\cite{christiano2017deep,macglashan2017interactive,gao2022scaling} is often available in MDP.
This external feedback can be beneficial for various purposes, such as ensuring safety~\cite{berkenkamp2017safe}, reducing variance~\cite{cheng2019control}, training human-like chatbots~\cite{christiano2017deep}, and enhancing overall trustworthiness~\cite{xu2022trustworthy}, among others. The use of control priors has been proposed by~\cite{cheng2019control} as a way to guarantee the Lyapunov stability of the training process in reinforcement learning. They used the Temporal-Difference method to tune a coefficient that combines a RL policy and a control prior, but without providing a theoretical foundation. Another related area is transfer learning in RL, where external {Q}-value advice from previous tasks can be adapted and utilized in new tasks. Previous research has shown that this approach can outperform an agnostic initialization of Q, but these results are solely based on empirical observations and lack theoretical support~\cite{taylor2009transfer,higgins2017darla,chen2022transferred}.


\vspace{3pt}


\section{Problem Setting}
\label{sec:model}


We consider a finite-horizon, single-trajectory, time-varying MDP with $\nt$ discrete time steps. The state space $\mathcal{X}$ is a subset of a normed vector space embedded with a norm $\|\cdot\|_{\mathcal{X}}$. The actions are chosen from a convex and compact set $\mathcal{U}$ in a normed vector space characterized by some norm $\|\cdot\|_{\mathcal{U}}$. Notably, 
$\mathcal{U}$ can represent either continuous actions or the probability distributions used when choosing actions from a finite set.\footnote{The action space $\mathcal{U}$ is assumed to be a continuous, convex, and compact set for more generality. When the actions are discrete, $\mathcal{U}$ can be defined as the set of all probability distributions on a finite action space. We relegate the detailed discussions in Appendix~\ref{appendix:Wasserstein-TV-distance} and~\ref{appendix:MDP-proof}.}
The diameter of the action space $\mathcal{U}$ is denoted by $\gamma\coloneqq \sup_{u,v\in\mathcal{U}}\|u-v\|_{\mathcal{U}}$.
Denote $[\nt]\coloneqq \{0,\ldots,\nt-1\}$.
For each time step $t\in [\nt]$, let ${P}_t:\mathcal{X}\times\mathcal{U}\rightarrow \mathcal{P}_{\mathcal{X}}$ be the transition probability, where $\mathcal{P}_{\mathcal{X}}$ is a set of probability measures on $\mathcal{X}$. We consider time-varying costs $c_t:\mathcal{X}\times\mathcal{U}\rightarrow\mathbb{R}_+$, while rewards can be treated similarly by adding a negative sign. An initial state $x_0\in\mathcal{X}$ is fixed.
This MDP model is compactly represented by $\mathsf{MDP}(\mathcal{X},\mathcal{U},\nt,{P},c)$. 


The goal of a policy in this MDP setting is to minimize the total cost over all $\nt$ steps. The policy agent has no access to the full MDP. At each time step $t\in [\nt]$, only the incurred cost value $c_t(x_t,u_t)$ and the next state $x_{t+1}\sim P_t(\cdot|x_t,u_t)$ are revealed to the agent after playing an action $u_t\in \mathcal{U}$. We denote a policy by $\pi=\left(\pi_t:t\in [\nt]\right)$ where each $\pi_t:\mathcal{X}\rightarrow\mathcal{U}$ chooses an action $u_t$ when observing $x_t$ at step $t\in [\nt]$. Note that our results can be generalized to the setting when $\pi_t$ is stochastic and outputs a probability distribution on $\mathcal{U}$.
Given $\mathsf{MDP}(\mathcal{X},\mathcal{U},\nt,{P},c)$, we consider an optimization with time-varying costs and transition dynamics. Thus, our goal is to find a policy $\pi$ that minimizes the following expected total cost:
\begin{equation}
\label{eq:offline}
J(\pi)\coloneqq \mathbb{E}_{{P},\pi}\Big[\sum_{t\in [\nt]}c_t\left(x_t,\pi_t(x_t)\right) \Big]
\end{equation}
where the randomness in $\mathbb{E}_{{P},\pi}$ is from the transition dynamics ${P}=\left(P_t:t\in [\nt]\right)$ and the policy $\pi=\left(\pi_t:t\in [\nt]\right)$. We focus our analysis on the expected dynamic regret and the ratio of expectations, defined below, as the performance metrics for our policy design.

\begin{definition}[Expected dynamic regret]
 \label{def:dr}
Given $\mathsf{MDP}(\mathcal{X},\mathcal{U},\nt,{P},c)$, the (expected) dynamic regret
of a policy $\pi=\left(\pi_t:t\in [\nt]\right)$ is defined as the difference between the expected cost induced by the policy $\pi$, $J(\pi)$ in~\eqref{eq:offline}, and the optimal expected cost $J^{\star}\coloneqq \inf_{\pi} J(\pi)$, i.e., $ \mathsf{DR}(\pi)\coloneqq J(\pi)-J^{\star}$.
 \end{definition}

Dynamic regret is a more general (and often more challenging to analyze) measure than classical static regret, which has been mostly used for stationary environments~\cite{auer2008near,azar2017minimax}.  The following definition of the ratio of expectations~\cite{borodin2005online,devanur2009adwords} will be used as an alternative performance metric in our main results.
\begin{definition}[Ratio of expectations]
 \label{def:roe}
Given $\mathsf{MDP}(\mathcal{X},\mathcal{U},\nt,{P},c)$, the ratio of expectations 
of a policy $\pi=\left(\pi_t:t\in [\nt]\right)$ is defined as $\mathsf{RoE}(\pi)\coloneqq {J(\pi)}/{J^{\star}}$ where $J(\pi)$ and $J^{\star}$ are the same as in Definition~\ref{def:dr}.
 \end{definition}

Dynamic regret and the ratio of expectations defined above also depend on the error of the untrusted ML advice; we make this more explicit in Section~\ref{sec:q-value_advice}. Next, we state the following
continuity assumption, which is standard in MDPs with continuous action and state spaces~\cite{agarwal2019online,hazan2019provably, gradu2020non}. Note that our analysis can be readily adapted to general H\"older continuous costs with minimal modifications.
\begin{assumption}[Lipschitz costs]
\label{ass:Lip}
For any time step $t\in [\nt]$, the cost function $c_t:\mathcal{X}\times\mathcal{U}\rightarrow\mathbb{R}_+$ is Lipschitz continuous with a Lipschitz constant $L_{\mathrm{C}}<\infty$, i.e., for any $t\in [\nt]$, $\left|c_t(x,u)-c_t(x',u')\right| \leq L_C\left(\|x-x'\|_{\mathcal{X}} + \|u-u'\|_{\mathcal{U}}\right)$. Moreover, $0<c_t(x,u)<\infty$ for all $t\in [\nt]$, $x\in\mathcal{X}$, and $u\in\mathcal{U}$.
\end{assumption}

\section{Consistency and Robustness in MDPs}
\label{sec:robustness_consistency}


\begin{figure}[h]
    \centering
\includegraphics[scale=0.166]{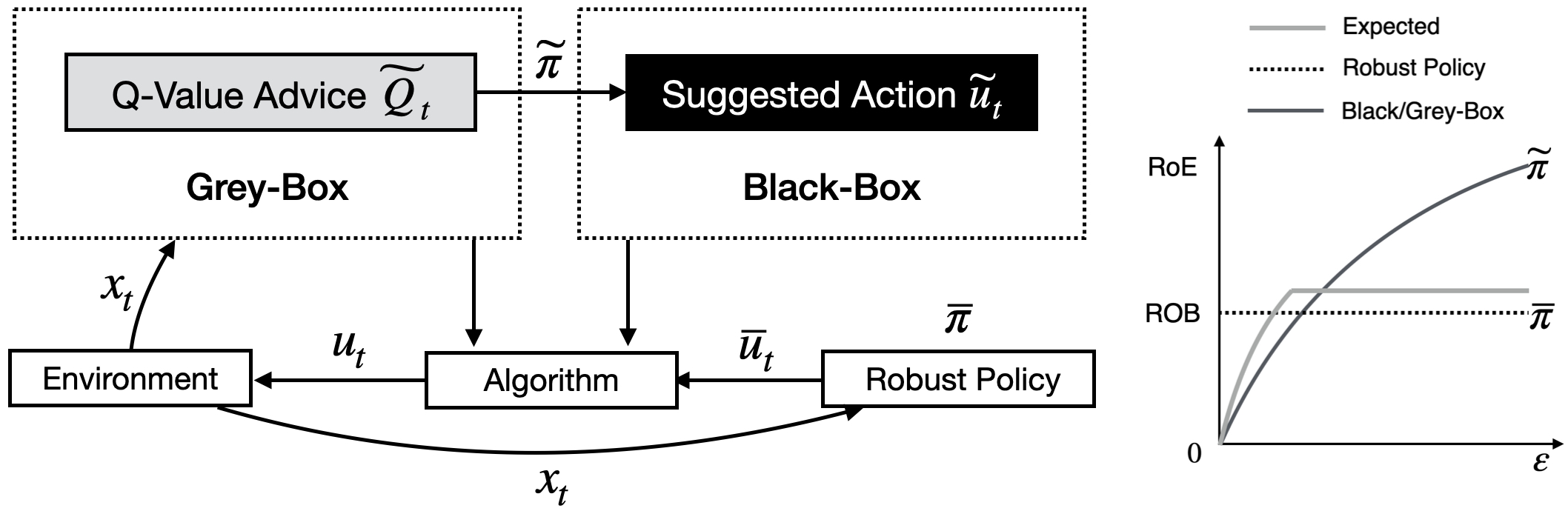}
\caption{\textit{Left}: Overview of settings in our problem. \textit{Right}: consistency and robustness tradeoff, with $\mathsf{RoE}$ and $\varepsilon$ defined in Definition~\ref{def:roe} and Equation~\eqref{eq:def_epsilon}.}
\label{fig:system}
\end{figure}

Our objective is to achieve a balance between the worst-case guarantees on cost minimization in terms of dynamic regret provided by a robust policy, $\overline{\pi}$, and the average-case performance of a valued-based policy, $\widetilde{\pi}$, in the context of $\mathsf{MDP}(\mathcal{X},\mathcal{U},\nt,{P},c)$. In particular, we denote by $\mathsf{ROB}\geq 1$ a ratio of expectation bound of the robust policy $\overline{\pi}$ such that the worst case $\mathsf{RoE}(\overline{\pi})\leq \mathsf{ROB}$. In the learning-augmented algorithms literature, these two goals are referred to as consistency and robustness~\cite{purohit2018improving,mitzenmacher2022algorithms}. Informally, robustness refers to the goal of ensuring worst-case guarantees on cost minimization comparable to those provided by $\overline{\pi}$ and consistency refers to ensuring performance nearly as good as $\widetilde{\pi}$ when $\widetilde{\pi}$ performs well (e.g., when the instance is not adversarial).  Learning-augmented algorithms seek to achieve consistency and robustness by combining $\overline{\pi}$ and $\widetilde{\pi}$, as illustrated in Figure~\ref{fig:system}.  

Our focus in this work is to design robust and consistent algorithms for two types of advice: black-box advice and grey-box advice.  The type of advice that is nearly always the focus in the learning-augmented algorithm literature is black-box advice --- only
providing a suggested action $\widetilde{u}_t$ without additional information.
In contrast, on top of the action $\smash{\widetilde{u}_t}$,
grey-box advice can also reveal the internal state of the learning algorithm, e.g., the {Q}-value $\smash{\widetilde{Q}_t}$ in our setting. This contrast is illustrated in Figure~\ref{fig:system}. 

Compared to black-box advice, grey-box advice has received much less attention in the literature,
despite its potential to improve tradeoffs between consistency and robustness as recently shown in~\cite{diakonikolas2021learning,li2022robustness}.  
Nonetheless, the extra information on top of the suggested action in a grey-box setting
potentially allows the learning-augmented algorithm to 
make a better-informed decision based on the advice, 
thus achieving a better tradeoff between consistency and robustness than  otherwise possible. 


In the remainder of this section, we discuss the robustness properties for the algorithms we consider in our learning-augmented framework (Section~\ref{sec:wasserstein-robust}), 
and introduce the notions of consistency 
in our grey-box  and black-box  models in Section~\ref{sec:q-value_advice}.  

\subsection{Locally Wasserstein-Robust Policies}
\label{sec:wasserstein-robust}



We begin with constructing a novel notion of robustness for our learning-augmented framework based on the Wasserstein distance as follows.  Denote the robust policy by $\overline{\pi} \coloneqq (\overline\pi_t: t\in [\nt])$, where each $\overline\pi_t$ maps a system state to a deterministic action (or a probability of actions in the stochastic setting).
Denote by $\rho_{t_1:t_2}(\rho)$ the joint distribution of the state-action pair $(x_t,u_t)\in\mathcal{X}\times \mathcal{U}$ at time $t_2\in [\nt]$ when implementing the baselines $\overline{\pi}_{t_1},\ldots,\overline{\pi}_{t_2}$ consecutively with an initial state-action distribution $\rho$. We use $\|\cdot\|_{\mathcal{X}\times\mathcal{U}}\coloneqq\|\cdot\|_{\mathcal{X}}+\|\cdot\|_{\mathcal{U}}$ as the included norm for the product space $\mathcal{X}\times \mathcal{U}$. Let ${W}_p(\mu,\nu)$ denote the Wasserstein $p$-distance between distributions $\mu$ and $\nu$ whose support set is $\mathcal{X}\times \mathcal{U}$:
\begin{equation*}
 {W}_p(\mu,\nu) \coloneqq \left(\inf_{J\in \mathcal{J}(\mu,\nu)} \int \|(x,u)-(x',u')\|_{\mathcal{X}\times\mathcal{U}}^p \mathrm{d} J\left((x,u),(x',u')\right)\right)^{1/p}  
\end{equation*}
where $p\in [1,\infty)$ and $\mathcal{J}(\mu,\nu)$ denotes a set of all joint distributions $J$ with a support set $ \mathcal{X}\times \mathcal{U}$ that have marginals $\mu$ and $\nu$. 
Next, we define a robustness condition for our learning-augmented framework. 

\begin{table*}[t]
\caption{Examples of models covered in this paper and the associated control baselines. For the right column, bounds on the ratio of expectations $\mathsf{RoE}$ are exemplified, where $\mathsf{ROB}$ is defined in Section~\ref{sec:robustness_consistency} and $\mathcal{O}$ omits inessential constants.
}
    \footnotesize
\renewcommand{\arraystretch}{1.3}
    \centering
    \begin{tabular}{c|c|c}
\specialrule{.09em}{.1em}{.1em} 
    Model  & \textbf{Robust Baseline  $\overline{\pi}$} & $\mathsf{RoE}$\\
        \hline    \hline
Time-varying MDP (Our General Model) & Wasserstein Robust Policy (Definition~\ref{def:robust}) & $\mathsf{ROB}$ \\
    \hline
    Discrete MDP (Appendix~\ref{appendix:Wasserstein-TV-distance})  & Any  Policy that Induced a Regular Markov Chain & --- \\
   Time-Varying LQR (Appendix~\ref{sec:mpc_baseline}) & MPC with Robust Predictions (Algorithm~\ref{alg:mpc-baseline}) & $\mathcal{O}(1)$ \\
\specialrule{.09em}{.1em}{.1em} 
    \end{tabular}
\label{table:baseline}
\end{table*}

\begin{definition}[$r$-locally $p$-Wasserstein robustness] \label{def:robust}  A policy $\overline{\pi}=(\pi_t:t\in [\nt])$ is \textbf{$r$-locally $p$-Wasserstein-robust} if for any $0\leq t_1\leq t_2<\nt$ and any pair of 
 state-action distributions $\rho,\rho'$ where the  the $p$-Wasserstein distance between them is bounded by $W_p(\rho,\rho')\leq r$,
for some radius $r>0$, the following inequality holds: 
\begin{align} 
\label{eq:def_robustness}
{W}_p\left(\rho_{t_1:t_2}(\rho),\rho_{t_1:t_2}(\rho')\right)\leq & s(t_2-t_1) {W}_p\left(\rho,\rho'\right)\end{align} for some function $s:[\nt]\rightarrow \mathbb{R}_+$ satisfying $\sum_{t\in [\nt]}s(t)\leq C_{s}$ where $C_{s}>0$ is a constant.  \end{definition}

\begin{figure}[h]
    \centering
\includegraphics[scale=0.32]{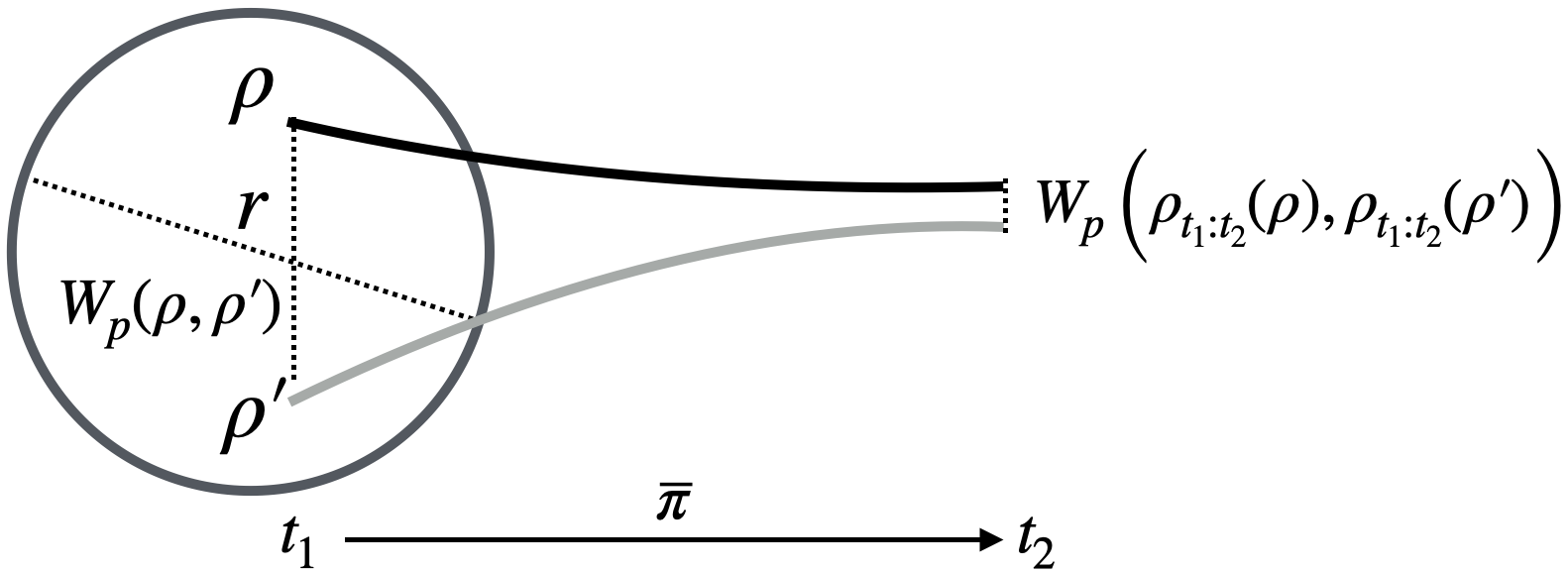}
\caption{An illustration of an \textit{$r$-locally $p$-Wasserstein-robust} policy.}
\label{fig:robust}
\end{figure}

Our robustness definition is naturally more relaxed than the usual contraction property in the control/optimization literature~\cite{tsukamoto2021contraction,lin2021perturbation} ---  if any two different state-action distributions converge exponentially with respect to the Wasserstein $p$-distance, then a policy $\overline{\pi}$ is \textit{$r$-locally $p$-Wasserstein-robust}. This is illustrated in Figure~\ref{fig:robust}. Note that,
although the Wasserstein robustness in Definition~\ref{def:robust} well captures a variety
of distributional robustness metrics such as the total variation robustness defined on finite state/action spaces,
 it can also be further generalized to other metrics for probability distributions.


As shown in Appendix~\ref{sec:application} (provided in the supplementary material), by establishing a connection between the Wasserstein distance and the total variation metric, any policy that induces a regular Markov chain satisfies the fast mixing property and the state-action distribution will converge with respect to the total variation distance to a stationary distribution~\cite{ross1995stochastic}. A more detailed discussion can be found in Appendix~\ref{appendix:Wasserstein-TV-distance}. Moreover, the Wasserstein-robustmess in Definition~\ref{def:robust} 
includes
a set of contraction properties in control theory as special cases. For example, for a locally Wasserstein-robust policy, if the transition kernel ${P}$ and the baseline policy $\overline{\pi}$ are deterministic, then the state-action distributions become point masses,
reducing Definition~\ref{def:robust} to 
a state-action perturbation bound in terms of the $\ell_2$-norm when implementing the policy $\overline{\pi}$ from different starting states
\cite{lin2021perturbation,lin2022bounded}.


The connections discussed above highlight the existence
of several well-known robust policies that satisfy Definition~\ref{def:robust}.  Besides the case of discrete MDPs discussed in Appendix~\ref{appendix:Wasserstein-TV-distance}, another prominent example is model predictive control (MPC), for which robustness follows from the results in~\cite{lin2022bounded} (see Appendix~\ref{sec:mpc_baseline} for details). The model assumption below will be useful in our main results.
\begin{assumption}
\label{ass:robustness}
There exists a $\gamma$-locally $p$-Wasserstein-robust baseline control policy (Definition~\ref{def:robust}) $\overline{\pi}$ for some $p\geq 1$, where $\gamma$ is the diameter of the action space $\mathcal{U}$.
\end{assumption}








\subsection{Consistency and Robustness for RoE}
\label{sec:q-value_advice}



In parallel with the notation of ``consistency and robustness'' in the existing literature on learning-augmented algorithms~\cite{purohit2018improving,mitzenmacher2022algorithms},
we define a new metric of consistency and robustness in terms of $\mathsf{RoE}$. To do so, we first introduce an optimal policy $\pi^{\star}$.
Based on $\mathsf{MDP}(\mathcal{X},\mathcal{U},\nt,{P},c)$, let $\pi_{t}^{\star}=(\pi_{t}^{\star}:t\in [\nt])$ denote the optimal policy at each time step $t\in [\nt]$, whose optimal {Q}-value function is
\begin{equation*}
   Q_{t}^{\star}(x,u)\coloneqq\inf_{\pi}\mathbb{E}_{{P},\pi}\left[\sum_{\tau=t}^{\nt-1}c_{\tau}\left(x_{\tau},u_{\tau}\right)\Big |x_t=x,u_t=u\right],
\end{equation*}
where $\mathbb{E}_{{P},\pi}$ denotes an expectation with respect to the randomness of the trajectory $\{(x_t, u_t):t\in [\nt]\}$ obtained by following a policy $\pi$ and the transition probability ${P}$ at each step $t\in [\nt]$.
The Bellman optimality equations can then be expressed as
\begin{align}
\label{eq:bellman_optimal}
  Q_{t}^{\star}(x,u)=&\left(c_t + \mathbb{P}_t V^{\star}_{t+1}\right)\left(x,u\right), & 
 & V^{\star}_{t}(x) = \inf_{v\in\mathcal{U}}Q_{t}^{\star}(x,v), & & V_{\nt}^{\star}(x)=0
\end{align}
for all $(x,u)\in\mathcal{X}\times\mathcal{U}$, $t\in [\nt]$ and $t\in [\nt]$, where we write $\left(\mathbb{P}_t V^{\star}\right)(x,u)\coloneqq \mathbb{E}_{x'\sim{P}_{t}(\cdot|x,u)}\left[V^{\star}(x')\right]$. 
This indicates that for each time step $t\in [\nt]$, $\pi_t^{\star}$ is the greedy policy with respect to its optimal {Q}-value functions $(Q_{t}^{\star}:t\in [\nt])$.
Note that for any $t\in [\nt]$, $Q_{t}^{\star}(x,u)=0$. 
Given this setup, the value-based policies $\widetilde{\pi}\coloneqq \left(\widetilde{\pi}_t: t\in [\nt]\right)$ take the following form. For any $t\in [\nt]$, a value-based policy $\widetilde{\pi}_t:\mathcal{X}\rightarrow \mathcal{U}$ produces an action $\smash{\widetilde{u}_t \in \argmin_{v\in\mathcal{U}} \widetilde{Q}_{t}\left(x_t,v\right)}$ by minimizing an estimate of the optimal {Q}-value function $\smash{\widetilde{Q}_{t}}$.

We make the following assumption on the machine-learned untrusted policy $\widetilde{\pi}$ and the {Q}-value advice. 
\begin{assumption}
\label{ass:Q-Lip}
The machine-learned untrusted policy $\widetilde{\pi}$ is value-based.
The {Q}-value advice $\widetilde{Q}_{t}:\mathcal{X}\times \mathcal{U}\rightarrow\mathbb{R}$ is Lipschitz continuous with respect to $u\in\mathcal{U}$ for any $x\in\mathcal{X}$, with a Lipschitz constant $L_Q>0$ for all $t\in [\nt]$. Moreover, there is a horizon-dependent bound $q_{\nt}=o(\nt)$ such that $\sup_{t\in[\nt]}\|\widetilde Q_t-Q_t^\star\|_\infty\leq q_{\nt}$ uniformly over the admissible advice class.
\end{assumption}

For the Grey-Box result, robustness is taken over the admissible advice class in Assumption~\ref{ass:Q-Lip}.

We can now define a consistency measure for {Q}-value advice $\widetilde{Q}_{t}$, which measures the error of the estimates of the {Q}-value functions due to approximation error and time-varying environments, etc.
Let $p\in (0,\infty]$. Fix a sequence of distributions $\rho=(\rho_t:t\in [\nt])$ whose support set is $\mathcal{X}\times\mathcal{U}$ and let $\phi_t$ be the marginal distribution of $\rho_t$ on $\mathcal{X}$.  
We define a quantity representing the error of the {Q}-value advice
\begin{equation}
\label{eq:def_epsilon}
 \varepsilon (p,\rho)\coloneqq \sum_{t\in [\nt]}\left(\Big\|\widetilde{Q}_{t}-Q_{t}^{\star}\Big\|_{p,\rho_t} + \Big\|\inf_{v\in\mathcal{U}}\widetilde{Q}_{t}-\inf_{v\in\mathcal{U}}Q_{t}^{\star}\Big\|_{p,\phi_t}\right)
\end{equation}
where $\smash{\|\cdot\|_{p,\rho}\coloneqq\left(\int\left|\cdot\right|^p \mathrm{d}{\rho}\right)^{1/p}}$ denotes the $L_{p,\rho}$-norm. A policy with {Q}-value functions $\{Q_t:t\in [\nt]\}$ is said to be \textit{$(\varepsilon,p,\rho)$-consistent} if there exists an $\varepsilon$ satisfying~\eqref{eq:def_epsilon}. 
In addition, a policy is  $(0,\infty)$-consistent if $\widetilde{Q}_{t}$ is a Lebesgue-measurable function for all $t\in [\nt]$ and $(\infty,\varepsilon)$-consistent if the $L_{\infty}$-norm satisfies $\sum_{t\in [\nt]}\|\widetilde{Q}_{t}-Q_{t}^{\star}\|_{\infty}\leq \varepsilon$.
The consistency error of a policy in~\eqref{eq:def_epsilon} quantifies how the {Q}-value advice is close to optimal {Q}-value functions. It depends on various factors such the function approximation error or training error due to the distribution shift, and has a close connection to a rich literature on value function approximation~\cite{bertsekas1995neuro,munos2003error, antos2008learning, geist2019theory,agarwal2021theory}. 
The results in \cite{antos2008learning} generalized the worst-case $L_{\infty}$ guarantees to arbitrary $L_{p,\rho}$-norms under some mixing assumptions via policy iteration for a stationary Markov decision process (MDP) with a continuous state space and a discrete action space.
Recently, approximation
guarantees for the average case for parametric policy classes (such as a neural network) of value functions have started to appear ~\cite{bertsekas1995neuro,munos2003error,geist2019theory}.  These bounds are useful in lots of supervised machine learning methods such as classification and regression,
whose bounds are typically given on the expected error under some distribution. These results exemplify richer instances of the consistency definition (see~\eqref{eq:def_epsilon}) and a summary of these bounds can be found  in~\cite{agarwal2021theory}.

Now, we are ready to introduce our definition of consistency and robustness with respect to the ratio of expectations, similar to the growing literature on learning-augmented algorithms~\cite{purohit2018improving,wei2020optimal,banerjee2020improving,christianson2022chasing}. We write the ratio of expectations $\mathsf{RoE}(\varepsilon)$ of a policy $\pi$ as a function of the {Q}-value advice error $\varepsilon$ in terms of the $L_\infty$ norm, defined in~\eqref{eq:def_epsilon}.
\begin{definition}[Consistency and Robustness]
\label{def:tradeoff}
An algorithm $\pi$ is said to be \textbf{$k$-consistent} if its worst-case (with respect to the MDP model $\mathsf{MDP}(\mathcal{X},\mathcal{U},\nt,{P},c)$) ratio of expectations satisfies $\mathsf{RoE}(\varepsilon)\leq k$ for $\varepsilon=0$. On the other hand, it is \textbf{$l$-robust} if $\mathsf{RoE}(\varepsilon)\leq l$ for any $\varepsilon>0$.
\end{definition}

\section{The Projection Pursuit Policy (\ouralg)}
\label{sec:p3}

In this section we introduce our proposed algorithm (Algorithm~\ref{alg:ppp}), which
achieves near-optimal consistency while bounding the robustness by
leveraging a robust baseline  (Section~\ref{sec:wasserstein-robust}) in combination with value-based advice (Section~\ref{sec:q-value_advice}).
 A key challenge in the design is how to exploit the benefits of good value-based advice while avoiding following it too closely when it performs poorly. To address this challenge, we propose
to judiciously project the value-based advice into a neighborhood of the robust baseline. By doing so, the actions we choose can follow the value-based advice for consistency while staying close to the robust baseline for robustness.
 More specifically,
at each step $t\in [\nt]$, we choose $u_t=\mathrm{Proj}_{\overline{\mathcal{U}}_t}\left( \widetilde{u}_t\right)$
where a projection operator $\mathrm{Proj}_{\overline{\mathcal{U}}_t}(\cdot):\mathcal{U}\rightarrow\mathcal{U}$ is defined as
\begin{equation}
\label{eq:projection_definition}
\mathrm{Proj}_{\overline{\mathcal{U}}_t}(u) \coloneqq \argmin_{v\in\mathcal{U}} \|u-v\|_{\mathcal{U}} \ \text{subject to } \left\|v-\overline{\pi}_t\left(x_t\right)\right\|_{\mathcal{U}}\leq R_t,
\end{equation}
corresponding to the projection of $u$ onto a ball  $\overline{\mathcal{U}}_t\coloneqq \left\{u\in\mathcal{U}:\left\|u-\overline{\pi}_t\left(x_t\right)\right\|_{\mathcal{U}}\leq R_t\right\}$. Note that when the optimal solution of~\eqref{eq:projection_definition} is not unique, we choose the one on the same line with $\overline{\pi}_t\left(x_t\right)-u$.

The PROjection Pursuit policy, abbreviated as \ouralg, 
can be described as follows. For a time step $t\in [\nt]$, let $\widetilde{\pi}_t:\mathcal{X}\rightarrow\mathcal{U}$ denote a policy that chooses an action $\widetilde{u}_t$ (arbitrarily choose one if there are multiple minimizers of $\smash{\widetilde{Q}_{t}}$), given the current system state $x_t$ at time $t\in [\nt]$ and step $t\in [\nt]$. An action $u_t=\mathrm{Proj}_{\overline{\mathcal{U}}_t}\left(\widetilde{u}_t(x_t)\right)$ is selected by projecting the machine-learned action $\widetilde{u}_t(x_t)$ onto a norm ball $\overline{\mathcal{U}}_t$ defined by the robust policy $\overline{\pi}$ given a radius $R_t\geq0$. Finally, \ouralg applies to both black-box and grey-box settings (which differ from each other in terms of how the radius $R_t$ is decided). The results under
both settings are provided in Section~\ref{sec:main_result}, revealing
a tradeoff between consistency and robustness.

\begin{algorithm}[t]
\SetAlgoLined
\SetKwInOut{Input}{Initialize}\SetKwInOut{Output}{output}
\Input{Untrusted policy $\widetilde{\pi}=\left(\widetilde{\pi}_t:t\in [\nt]\right)$ and baseline policy $\overline{\pi}=\left(\overline{\pi}_t:t\in [\nt]\right)$}
\BlankLine
\For{$t=0,\ldots,\nt-1$}{

{\color{gray}  \textit{//Implement black-box (Section~\ref{sec:black_implementation}) or grey-box (Section~\ref{sec:gray_implementation}) procedures}}

$(\widetilde{u}_t,R_t)\leftarrow$\textsc{Black-Box}$(x_t)$ or $(\widetilde{u}_t,R_t)\leftarrow $\textsc{Grey-Box}$(x_t)$

Set action $u_{t}=\mathrm{Proj}_{\overline{\mathcal{U}}_t}\left( \widetilde{u}_t\right)$ 
where $\overline{\mathcal{U}}_t\coloneqq \left\{u\in\mathcal{U}:\left\|u-\overline{\pi}_t\left(x_t\right)\right\|_{\mathcal{U}}\leq R_t\right\}$

Sample next state $x_{t+1} \sim P_t\left(\cdot|x_t,u_t\right)$

}

\caption{\textbf{\textsc{PRO}}jection \textbf{\textsc{P}}ursuit Policy (\textbf{\ouralg})}
\label{alg:ppp}
\end{algorithm}


The radii $(R_t:t\in [\nt])$ can be interpreted
as  \emph{robustness budgets} and are key design parameters 
that determine the consistency and robustness tradeoff. Intuitively,
the robustness budgets reflect 
the trustworthiness on the value-based policy $\widetilde{\pi}$ --- the larger budgets, the more trustworthiness 
 and hence the more freedom for \ouralg to follow $\widetilde{\pi}$. 
How the robustness budget is chosen
differentiates the grey-box setting from the black-box one. 



\subsection{Black-Box Setting}
\label{sec:black_implementation}

In the black-box setting, 
the only information provided by
$\widetilde{\pi}$ is a suggested action $\widetilde{u}$ for the learning-augmented algorithm. Meanwhile, the robust policy $\overline{\pi}$ can also
be queried to provide advice $\overline{u}$.
Thus, without additional information, a natural way to utilize
both $\widetilde{\pi}$ and $\overline{\pi}$ is to decide a projection radius at each time based on the how the obtained $\widetilde{u}$ and $\overline{u}$. More concretely, at each time $t\in [\nt]$,
the robustness budget $R_t$ is chosen by
the following \textsc{Black-Box} 
Procedure, where we set $R_{t} = \lambda \eta_t$ with
$\eta_t\coloneqq \left\|\widetilde{u}_t-\overline{u}_t\right\|_{\mathcal{U}}$ representing the difference between
the two advice measured in terms of the norm $\|\cdot\|_{\mathcal{U}}$
and $0\leq\lambda\leq 1$ being a tradeoff hyper-parameter that measures the trustworthiness on the machine-learned advice.
The choice of $R_{t} = \lambda \eta_t$ can be explained as follows.
The value of $\eta_t$ indicates the intrinsic discrepancy between the
robust advice and the machine-learned untrusted advice --- the larger discrepancy,
the more difficult to achieve good consistency and robustness simultaneously. 
Given a robust policy and an untrusted policy, by setting a larger $\lambda$, we allow the actual action to deviate more from the robust advice and to follow the untrusted advice more closely, and vice versa. $\lambda$ is a crucial hyper-parameter that can be pre-determined to yield a desired consistency and robustness tradeoff. The computation of $R_t$ is summarized in Procedure~\ref{prodecure:black-box} below.



\setcounter{algorithm}{0}
\begin{myprocedure}[h]



\vspace{1pt}

Implement $\widetilde{\pi}_t$ 
and $\overline{\pi}_t$ to obtain $\widetilde{u}_t$
and $\overline{u}_t$, respectively.

\vspace{1pt}


\vspace{1pt}

Set robustness budget 
 $R_{t} = \lambda \eta_t$  where $\eta_t\coloneqq \left\|\widetilde{u}_t-\overline{u}_t\right\|_{\mathcal{U}}$; Return $(\widetilde{u}_t,R_t)$

\caption{\textsc{Black-Box} 
Procedure at $t\in [\nt]$ (Input: state $x_t$ and hyper-parameter $0\leq \lambda\leq 1$)}

\label{prodecure:black-box}
\end{myprocedure}

\subsection{Grey-Box Setting}
\label{sec:gray_implementation}

In the grey-box setting, along with the suggested action $\widetilde{u}$,
the value-based untrusted policy $\widetilde{\pi}$ also provides an estimate of the {Q}-value function $\widetilde{Q}$ that indicates the long-term cost impact of an action. To utilize such additional information informed
by $\widetilde{Q}_t$ at each time $t\in [\nt]$, we propose a novel algorithm that dynamically adjusts
the budget $R_t$ to further improve the consistency and robustness tradeoff.
More concretely, let us
consider the Temporal-Difference (TD) error $\smash{\mathsf{TD}_t = c_{t-1} + \mathbb{P}_{t-1} \widetilde{V}_{t} -\widetilde{Q}_{t-1}}$. Intuitively, if positive TD error accumulates, the budget $R_t$ should decrease to limit the impact of the learning error. However, the exact TD-error is difficult to compute in practice, since it requires complete knowledge of the transition kernels $(P_t:t\in [\nt])$. 
To address this challenge, we use the following estimated TD-error based on previous trajectories:
\begin{align}
\label{eq:td_error_approximate}
\delta_t\left(x_{t},x_{t-1},u_{t-1}\right) \coloneqq & c_{t-1}\left(x_{t-1},u_{t-1}\right) + \inf_{v\in\mathcal{U}}\widetilde{Q}_{t}\left(x_{t},v\right) - \widetilde{Q}_{t-1}\left(x_{t-1},u_{t-1}\right).
\end{align}
The original construction accumulates this sampled error. In a stochastic MDP, we additionally account for the Q-value gap created by projection and for transition noise. Let $\widetilde V_t(x)\coloneqq\inf_{v\in\mathcal U}\widetilde Q_t(x,v)$ and define
\begin{align}
g_{t-1}&\coloneqq \widetilde Q_{t-1}(x_{t-1},u_{t-1})-\widetilde V_{t-1}(x_{t-1}),
\widehat\delta_t&\coloneqq\delta_t+g_{t-1}\notag\\
&=c_{t-1}(x_{t-1},u_{t-1})+\widetilde V_t(x_t)-\widetilde V_{t-1}(x_{t-1}).
\label{eq:corrected_td}
\end{align}
Let $b_{\nt,t}$ be the deterministic confidence boundary specified in Appendix~\ref{appendix:thm_grey_consistency}. Set $G_0=Z_0=0$ and, for $t\geq1$, define
\begin{equation}
\label{eq:grey_evidence}
G_t\coloneqq\sum_{s=1}^t\widehat\delta_s,
\qquad
Z_t\coloneqq\max_{0\leq s\leq t}[G_s-b_{\nt,s}]^+,
\end{equation}
where $b_{\nt,0}=0$. Denote by $\beta>0$ a hyper-parameter. The \textit{robustness budget} in Algorithm~\ref{alg:ppp} retains the original projection form
\begin{align}
\label{eq:budget}
R_t\coloneqq\Bigg[\underbrace{\left\|\widetilde{\pi}_t\left(x_t\right)-\overline{\pi}_t\left(x_t\right)\right\|_{\mathcal{U}}}_{\color{gray}\text{Decision Discrepancy } \eta_t}-\frac{\beta}{L_Q}Z_t\Bigg]^+.
\end{align}
Equation~\eqref{eq:budget} constitutes two terms. The first term measures the decision discrepancy between the untrusted and baseline policies, which normalizes the total budget as in the black-box setting. The second term uses $Z_t$ from~\eqref{eq:grey_evidence}, normalized by $L_Q$. The boundary removes fluctuations attributable to transition noise, while the running maximum makes the decrease permanent. Since $\eta_t\leq\gamma$, once $Z_t\geq L_Q\gamma/\beta$, the budget is zero at that and every later time. With these terms defined, the \textsc{Grey-Box} Procedure below first chooses a suggested action by minimizing $\widetilde Q_t$ and then determines $R_t$ using~\eqref{eq:budget}.


\begin{myprocedure}[h]

Obtain advice $\widetilde{Q}_{t}$ and $\widetilde{u}_t$ where
$\widetilde{u}_t\in \arginf_{v\in\mathcal{U}}\widetilde{Q}_{t}\left(x_t,v\right)$ 

\vspace{1pt}

Update $G_t$ and $Z_t$ as~\eqref{eq:corrected_td}--\eqref{eq:grey_evidence}

\vspace{1pt}

Implement $\overline{\pi}_t$ and obtain $\overline{u}_t$

\vspace{1pt}

Set robustness budget 
 $R_t$ as~\eqref{eq:budget}; 
Return $(\widetilde{u}_t,R_t)$
\caption{\textsc{Grey-Box} 
Procedure at $t\in [\nt]$ (Input: state $x_t$, hyper-parameter $\beta>0$, and confidence boundary $b_{\nt,t}$)}
\end{myprocedure}

\vspace{-10pt}

\section{Main Results}
\label{sec:main_result}

We now formally present the main results for both the black-box and grey-box settings.
Our results not only quantify the tradeoffs between consistency and robustness formally stated in Definition~\ref{def:tradeoff} with respect to the ratio of expectations, but also emphasize a crucial role that additional information about the estimated {Q}-values plays toward improving the consistency and robustness tradeoff.



\subsection{Black-Box Setting}


In the existing learning-augmented algorithms, the untrusted machine-learned policy $\widetilde{\pi}$ is often treated as a black-box that generates action advice $\widetilde{u}_t$ at each time $t\in [\nt]$. 
Our first result is the following general dynamic regret bound for the black-box setting (Section~\ref{sec:black_implementation}). We utilize the Big-O notation, denoted as $\mathcal{O}(\cdot)$ and $o(\cdot)$ to disregard inessential constants.

\begin{theorem}
\label{thm:black_dr}
Suppose the machine-learned policy $\widetilde{\pi}$ is $(\infty,\varepsilon)$-consistent. For any MDP model satisfying Assumption~\ref{ass:Lip},\ref{ass:robustness}, and~\ref{ass:Q-Lip}, the expected dynamic regret of \ouralg with the \textsc{Black-Box} Procedure is bounded by $\mathsf{DR}(\ouralg)\leq\min\{\mathcal{O}(\varepsilon)+ \mathcal{O}((1-\lambda)\gamma\nt),
    \mathcal{O}\left(\left(\mathsf{ROB}+ \lambda\gamma-1\right)\nt\right)\}$
where $\varepsilon$ is defined in~\eqref{eq:def_epsilon}, $\gamma$ is the diameter of the action space $\mathcal{U}$, $\nt$ is the length of the time horizon, $\mathsf{ROB}$ is the ratio of expectations of the robust baseline $\overline{\pi}$, and $0\leq\lambda\leq 1$ is a hyper-parameter.
\end{theorem}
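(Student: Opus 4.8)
The plan is to establish the two quantities inside the minimum as separate upper bounds on $\mathsf{DR}(\ouralg)$ and then combine them, since the claimed estimate is just their pointwise minimum. The observation that drives both arguments is that in the black-box regime the projection in~\eqref{eq:projection_definition} onto the ball of radius $R_t = \lambda\eta_t$ centered at $\overline{u}_t = \overline{\pi}_t(x_t)$, applied to the point $\widetilde{u}_t$ which lies at distance exactly $\eta_t$ from the center, returns the explicit convex combination $u_t = \lambda\widetilde{u}_t + (1-\lambda)\overline{u}_t$. Consequently $\|u_t - \overline{u}_t\|_{\mathcal{U}} = \lambda\eta_t \le \lambda\gamma$ and $\|u_t - \widetilde{u}_t\|_{\mathcal{U}} = (1-\lambda)\eta_t \le (1-\lambda)\gamma$ (using $\eta_t\le\gamma$), the two "closeness" facts that feed the consistency and robustness analyses respectively.

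For the consistency bound $\mathcal{O}(\varepsilon) + \mathcal{O}((1-\lambda)\gamma\nt)$, I would first invoke the performance-difference identity against the optimal policy, writing $\mathsf{DR}(\ouralg) = \mathbb{E}_{\ouralg}\big[\sum_{t\in[\nt]} (Q_t^\star(x_t,u_t) - V_t^\star(x_t))\big]$, which follows by telescoping the Bellman relation $Q_t^\star = c_t + \mathbb{P}_t V_{t+1}^\star$ along the trajectory induced by \ouralg and using $V_{\nt}^\star=0$. I would then bound each nonnegative advantage by inserting the advice $\widetilde{Q}_t$ as an intermediary, namely $Q_t^\star(x_t,u_t) - V_t^\star(x_t) \le \|Q_t^\star - \widetilde{Q}_t\|_\infty + (\widetilde{Q}_t(x_t,u_t) - \widetilde{Q}_t(x_t,\widetilde{u}_t)) + (\inf_v\widetilde{Q}_t(x_t,v) - \inf_v Q_t^\star(x_t,v))$. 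The middle term is at most $L_Q\|u_t - \widetilde{u}_t\|_{\mathcal{U}} \le L_Q(1-\lambda)\gamma$ by Assumption~\ref{ass:Q-Lip} and the closeness above, while the first and third terms sum over $t$ to at most $2\varepsilon$, since $\inf$ is $1$-Lipschitz in the sup norm and $\widetilde{\pi}$ is $(\infty,\varepsilon)$-consistent. Summing over the horizon yields $\mathsf{DR}(\ouralg) \le 2\varepsilon + L_Q(1-\lambda)\gamma\nt$.

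For the robustness bound $\mathcal{O}((\mathsf{ROB} + \lambda\gamma - 1)\nt)$, I would split $\mathsf{DR}(\ouralg) = (J(\ouralg) - J(\overline{\pi})) + (J(\overline{\pi}) - J^\star)$. The second difference is at most $(\mathsf{ROB}-1)J^\star = \mathcal{O}((\mathsf{ROB}-1)\nt)$ by the definition of $\mathsf{ROB}$ together with $J^\star = \mathcal{O}(\nt)$ (bounded per-step costs, absorbed into $\mathcal{O}$). For the first difference I would telescope over hybrid policies $\pi^{(k)}$ that follow \ouralg for the first $k$ steps and $\overline{\pi}$ thereafter, so that $J(\ouralg) - J(\overline{\pi}) = \sum_{k} (J(\pi^{(k+1)}) - J(\pi^{(k)}))$. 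Consecutive hybrids share the law of $x_k$ and differ only in the action at step $k$, by at most $\lambda\gamma$; coupling on $x_k$ shows their state-action laws at step $k$ are within $W_p \le \lambda\gamma \le \gamma = r$, so Assumption~\ref{ass:robustness} applies. The immediate cost gap is at most $L_C\lambda\gamma$, and each downstream step $\tau > k$ contributes at most $L_C W_1 \le L_C W_p \le L_C\, s(\tau-k)\lambda\gamma$ by Kantorovich--Rubinstein duality and Definition~\ref{def:robust}; summing downstream via $\sum_t s(t) \le C_s$ bounds each switch by $L_C(1+C_s)\lambda\gamma$, and summing over the $\nt$ switches gives $J(\ouralg) - J(\overline{\pi}) = \mathcal{O}(\lambda\gamma\nt)$.

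The main obstacle is the downstream control in the robustness bound: turning the single-step action perturbation of size $\lambda\gamma$ into a summable bound on the per-step cost discrepancy over the entire remaining trajectory. This is precisely where the locally Wasserstein-robust property is essential — it guarantees that the perturbation introduced at step $k$ does not amplify but contracts with the summable factor $s(\tau-k)$, and verifying the local radius condition $\lambda\gamma \le r=\gamma$ (using $\lambda\le 1$) is what lets us invoke Definition~\ref{def:robust} at every switch. Care is also required to pass from a Wasserstein distance between distributions to a scalar cost gap, which relies on Lipschitzness of $c_t$ through Kantorovich--Rubinstein together with the ordering $W_1 \le W_p$.
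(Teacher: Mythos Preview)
Your proposal is correct and follows essentially the same route as the paper. The paper packages the two halves into two lemmas --- a Projection Lemma that gives $Q_t^\star(x_t,u_t)-\inf_v Q_t^\star(x_t,v)\le L_Q[\eta_t-R_t]^+ + (\zeta_t^V-\zeta_t^Q)$ and a Perturbation Lemma that gives $J(\ouralg)-J(\overline{\pi})\le L_C C_s\sum_t\mathbb{E}[R_t^p]^{1/p}$ --- but your inline derivations (performance-difference identity with $\widetilde{Q}_t$ inserted as intermediary; hybrid-policy telescoping combined with Kantorovich--Rubinstein and Definition~\ref{def:robust}) reproduce exactly those arguments, and your verification of the local radius condition $\lambda\gamma\le r=\gamma$ is the same check the paper makes.
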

When $\lambda$ increases, the actual action can deviate more from
the robust policy, making the dynamic regret potentially closer to 
that of the value-based policy.
While the regret bound in Theorem~\ref{thm:black_dr} clearly shows the role of $\lambda$ 
in terms of controlling how closely we follow the robust policy, 
the dynamic regret given a fixed $\lambda\in [0,1]$ grows linearly 
in $\mathcal{O}(\nt)$. In fact, the linear growth of dynamic regret holds even
if the black-box policy $\widetilde{\pi}$ is consistent, i.e., $\varepsilon$ is small. This can be explained by noting the lack of
dynamically tuning $\lambda$ to follow the better of the two policies --- even when one policy is nearly perfect, the actual action still always deviates from it due to the fixed choice of $\lambda$.

Consider any MDP model satisfying Assumptions~\ref{ass:Lip},\ref{ass:robustness}, and~\ref{ass:Q-Lip}. Following the classic definitions of consistency and robustness (see Definition~\ref{def:tradeoff}), we summarize the following characterization of \ouralg, together with a negative result in Theorem~\ref{thm:black_impossibility}. Proofs of Theorem~\ref{thm:black_dr}, \ref{thm:black_consistency_robustness}, and \ref{thm:black_impossibility} are detailed in Appendix~\ref{app:black_box}.

\begin{theorem}[\textsc{Black-Box} Consistency and Robustness]
\label{thm:black_consistency_robustness}
\ouralg with the \textsc{Black-Box} Procedure is $\left(1+\mathcal{O}((1-\lambda)\gamma)\right)$-consistent and $(\mathsf{ROB}+\mathcal{O}(\lambda\gamma))$-robust where $0\leq\lambda\leq 1$ is a hyper-parameter.
\end{theorem}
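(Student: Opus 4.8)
The plan is to convert the additive dynamic-regret guarantee of Theorem~\ref{thm:black_dr} into the multiplicative (ratio-of-expectations) statement demanded by Definition~\ref{def:tradeoff}. The bridge is the elementary identity $\mathsf{RoE}(\ouralg)=J(\ouralg)/J^\star=1+\mathsf{DR}(\ouralg)/J^\star$, so everything reduces to controlling $\mathsf{DR}(\ouralg)/J^\star$. The one ingredient not yet in hand is a lower bound on the denominator: since by Assumption~\ref{ass:Lip} the per-step costs lie in a fixed positive range $[c_{\min},c_{\max}]$ with $c_{\min}>0$ (the constants absorbed by $\mathcal{O}(\cdot)$), every admissible policy pays $\Theta(1)$ per step, whence $J^\star=\Theta(\nt)$, and in particular $J^\star=\Omega(\nt)$. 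This $\Omega(\nt)$ scaling is exactly what cancels the $\nt$ factors appearing in Theorem~\ref{thm:black_dr}.

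For the consistency claim I would set $\varepsilon=0$ and read off the first branch of the minimum in Theorem~\ref{thm:black_dr}; the $\mathcal{O}(\varepsilon)$ term vanishes, leaving $\mathsf{DR}(\ouralg)\le\mathcal{O}((1-\lambda)\gamma\nt)$. Dividing by $J^\star=\Omega(\nt)$ collapses the horizon and yields $\mathsf{RoE}(\ouralg)\le 1+\mathcal{O}((1-\lambda)\gamma)$, i.e.\ $\left(1+\mathcal{O}((1-\lambda)\gamma)\right)$-consistency.

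The robustness claim is the more delicate half, because naively dividing the second branch $\mathcal{O}((\mathsf{ROB}+\lambda\gamma-1)\nt)$ by $J^\star$ would fold $\mathsf{ROB}$ inside a big-$\mathcal{O}$ and lose the clean leading constant. Instead I would compare $\ouralg$ directly to the robust baseline $\overline{\pi}$. The projection step forces $\norm{u_t-\overline{\pi}_t(x_t)}_{\mathcal{U}}\le R_t=\lambda\eta_t=\mathcal{O}(\lambda\gamma)$ at every step (using $\eta_t\le$ the diameter of $\mathcal{U}$), so the executed trajectory is a bounded perturbation of the baseline's. Feeding this per-step action deviation through the $\gamma$-locally $p$-Wasserstein-robust contraction of Assumption~\ref{ass:robustness}/Definition~\ref{def:robust} bounds the induced drift of the state--action distributions by $\sum_t s(t)\cdot\mathcal{O}(\lambda\gamma)\le C_s\cdot\mathcal{O}(\lambda\gamma)$, and Lipschitz costs (Assumption~\ref{ass:Lip}) then give $J(\ouralg)\le J(\overline{\pi})+\mathcal{O}(\lambda\gamma\nt)$. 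Dividing by $J^\star=\Omega(\nt)$ and using $\mathsf{RoE}(\overline{\pi})\le\mathsf{ROB}$ produces $\mathsf{RoE}(\ouralg)\le\mathsf{ROB}+\mathcal{O}(\lambda\gamma)$, as claimed.

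The step I expect to be the main obstacle is the Wasserstein propagation argument in the robustness half: I must track how a per-step action perturbation of size $\mathcal{O}(\lambda\gamma)$ enters the $p$-Wasserstein distance between the $\ouralg$-induced and $\overline{\pi}$-induced state--action distributions under the (possibly stochastic) kernel, and then invoke Definition~\ref{def:robust} summably via $C_s$ rather than step-by-step, so that the horizon enters only linearly and no factor of $\nt$ blows up. This estimate is essentially the one already carried out inside the proof of Theorem~\ref{thm:black_dr}, so in the final write-up it can be imported rather than redone; the remaining care is purely the bookkeeping that keeps $\mathsf{ROB}$ outside the big-$\mathcal{O}$.
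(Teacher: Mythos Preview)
Your proposal is correct and follows essentially the same route as the paper's own proof: convert $\mathsf{RoE}=1+\mathsf{DR}/J^\star$, invoke Theorem~\ref{thm:black_dr}, and use $J^\star=\Omega(\nt)$ (implicit in Assumption~\ref{ass:Lip}) to kill the horizon. Your care in the robustness half---going back to the perturbation estimate $J(\ouralg)\le J(\overline{\pi})+\mathcal{O}(\lambda\gamma\nt)$ (i.e.\ Lemma~\ref{lemma:robustness}) rather than the already-$\mathcal{O}$-wrapped second branch of Theorem~\ref{thm:black_dr}, so that $\mathsf{ROB}$ stays outside the big-$\mathcal{O}$---is exactly the step the paper carries out tacitly when it writes the $\mathsf{ROB}+\mathcal{O}(\lambda\gamma)$ term; you have simply made the bookkeeping explicit.
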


\begin{theorem}[\textsc{Black-Box} Impossibility]
\label{thm:black_impossibility}
\ouralg with the \textsc{Black-Box} Procedure cannot be both $\left(1+o((1-\lambda)\gamma)\right)$-consistent and $(\mathsf{ROB}+o(\lambda\gamma))$-robust for any $0\leq\lambda\leq 1$.
\end{theorem}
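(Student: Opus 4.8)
The plan is to prove the two lower bounds separately by constructing explicit MDP instances on which \ouralg with a fixed $\lambda$ and the \textsc{Black-Box} Procedure necessarily incurs the cost predicted by Theorem~\ref{thm:black_consistency_robustness}, thereby showing those upper bounds are tight up to constants. The structural fact I would exploit is that the \textsc{Black-Box} projection is \emph{oblivious} to which policy is actually good: since the robustness budget is the fixed fraction $R_t=\lambda\eta_t$ of the decision discrepancy $\eta_t=\norm{\widetilde u_t-\overline u_t}_{\mathcal U}$ (Procedure~\ref{prodecure:black-box}), whenever the two pieces of advice are far apart the projection lands on the boundary of the ball and yields $u_t=\overline u_t+\lambda(\widetilde u_t-\overline u_t)$. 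Hence $\norm{u_t-\widetilde u_t}_{\mathcal U}=(1-\lambda)\eta_t$ and $\norm{u_t-\overline u_t}_{\mathcal U}=\lambda\eta_t$ hold simultaneously, so the algorithm is \emph{forced} to pay a $(1-\lambda)$ fraction of the discrepancy away from the machine-learned advice and a $\lambda$ fraction away from the robust advice, regardless of which one is optimal.

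For the consistency lower bound I would take transition dynamics independent of the action---so that after one step all state distributions coincide and any baseline is trivially $\gamma$-locally $p$-Wasserstein-robust, satisfying Assumption~\ref{ass:robustness} with $s(t)=0$ for $t\geq 1$---and let $\mathcal U$ be a segment of length $\gamma$ with a per-step cost that is Lipschitz and monotone along the segment. I would set $\widetilde Q_t=Q^\star_t$ exactly, so that $\varepsilon=0$ and the optimal action $\widetilde u_t$ sits at the cost-minimizing endpoint, and place the robust advice $\overline u_t$ at the opposite endpoint, giving $\eta_t=\gamma$. Because \ouralg plays a point at distance $(1-\lambda)\gamma$ from $\widetilde u_t$ toward the higher-cost end, it pays $\Omega((1-\lambda)\gamma)$ extra per step while $J^\star=\Theta(\nt)$, producing $\mathsf{RoE}(\ouralg)\geq 1+\Omega((1-\lambda)\gamma)$ even at $\varepsilon=0$ and contradicting $(1+o((1-\lambda)\gamma))$-consistency.

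For the robustness lower bound I would use the same action-independent, monotone-cost template, now with the cost minimizer (the optimal action) at one endpoint, the robust action $\overline u_t$ placed in the interior so that $\overline\pi$ realizes its worst-case ratio $\mathsf{ROB}$, and the adversarial machine-learned advice $\widetilde u_t$ at the far endpoint in the cost-\emph{increasing} direction, so that $\eta_t=\Theta(\gamma)$ and moving from $\overline u_t$ toward $\widetilde u_t$ strictly raises cost. The projection then drags \ouralg a distance $\lambda\eta_t=\Theta(\lambda\gamma)$ beyond $\overline u_t$, adding $\Omega(\lambda\gamma)$ per step on top of the baseline cost; normalizing by $J^\star=\Theta(\nt)$ gives $\mathsf{RoE}(\ouralg)\geq\mathsf{ROB}+\Omega(\lambda\gamma)$, contradicting $(\mathsf{ROB}+o(\lambda\gamma))$-robustness. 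Since the two instances may be chosen independently, either one already breaks the conjunction for every fixed $\lambda\in[0,1]$, and exhibiting both shows both bounds of Theorem~\ref{thm:black_consistency_robustness} are tight.

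The main obstacle is the calibration needed to make each penalty a genuine $\Theta(\cdot)$ rather than an artifact of the normalization: I must choose the base (optimal) per-step cost to be $\Theta(1)$ so that the additive deviation penalties $\Omega((1-\lambda)\gamma)$ and $\Omega(\lambda\gamma)$ neither blow up nor vanish after dividing by $J^\star$, place $\overline u_t$ so that $\overline\pi$'s ratio is simultaneously exactly $\mathsf{ROB}$ and leaves discrepancy $\eta_t=\Theta(\gamma)$ (which constrains the regime to $\gamma$ large relative to $\mathsf{ROB}-1$), and verify throughout that the costs stay Lipschitz and strictly positive (Assumption~\ref{ass:Lip}) and that $\widetilde Q_t$ remains Lipschitz in $u$ (Assumption~\ref{ass:Q-Lip}). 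Using action-independent transitions removes the otherwise delicate need to verify the Wasserstein contraction for a nontrivial baseline, so this calibration is the only substantive difficulty.
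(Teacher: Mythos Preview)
Your proposal is correct and aligns with the paper's own argument: both proofs construct hard instances using transition kernels that are independent of actions (so every policy is trivially Wasserstein-robust) together with Lipschitz costs that penalize the forced deviation $u_t=\overline u_t+\lambda(\widetilde u_t-\overline u_t)$ from whichever advice is optimal.

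The presentation differs in a useful way. The paper routes the lower bound through the performance-difference identity $\mathsf{DR}(\ouralg)=\sum_t\mathbb{E}[Q_t^\star(x_t,u_t)-\inf_v Q_t^\star(x_t,v)]$ and then argues abstractly that one can choose $\widetilde Q_t$, $Q_t^\star$, and $c_t$ so that this sum is at least $\Omega((1-\lambda)L_Q\gamma\nt)$ (consistency side) and so that the ratio picks up the $\Omega(\lambda\gamma)$ robustness penalty; the explicit instance is left implicit. Your approach bypasses the $\Delta Q$ bookkeeping and instead writes down the geometry directly---a one-dimensional action segment with a monotone Lipschitz cost and endpoints assigned to $\widetilde u_t$ and $\overline u_t$---which makes the $\Omega((1-\lambda)\gamma)$ and $\Omega(\lambda\gamma)$ penalties transparent. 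The explicit construction is cleaner and self-contained; the paper's route has the minor advantage of connecting the lower bound to the same decomposition used for the upper bound. Your calibration remarks (keeping $J^\star=\Theta(\nt)$, ensuring $\eta_t=\Theta(\gamma)$ while $\overline\pi$ still attains $\mathsf{ROB}$) are exactly the points the paper glosses over with ``we can construct,'' so your treatment is if anything more complete.
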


\subsection{Grey-Box Setting}


To overcome the impossibility result in the black-box setting, we dynamically tune the robustness budgets by tapping into additional information from the estimated {Q}-value functions using the \textsc{Grey-Box} Procedure (Section~\ref{sec:gray_implementation}). An analogous dynamic regret bound is given in Appendix~\ref{app:grey_box}. Consider any MDP model satisfying Assumptions~\ref{ass:Lip}, \ref{ass:robustness}, and~\ref{ass:Q-Lip}. Our main result below shows that structural information about an untrusted black-box policy can improve the consistency and robustness tradeoff.

\begin{theorem}[\textsc{Grey-Box} Consistency and Robustness]
\label{thm:grey_robustness_consistency}
Under the class-uniform conditions stated in Appendix~\ref{appendix:thm_grey_consistency}, \ouralg with the \textsc{Grey-Box} Procedure is $(1+o(1))$-consistent under exact Q-value advice and $(\mathsf{ROB}+o(1))$-robust over the admissible advice class for every fixed $\beta>0$. More generally, the same conclusion holds for every horizon-dependent $\beta_{\nt}>0$ satisfying $1/\beta_{\nt}=o(\nt)$. For deterministic transitions, exact Q-value advice gives $1$-consistency at every horizon.
\end{theorem}

Theorem~\ref{thm:black_impossibility} shows that the \textsc{Black-Box} Procedure cannot attain the same simultaneous asymptotic consistency and robustness limits, while the \textsc{Grey-Box} Procedure can do so under the stated conditions. On the one hand, this result shows the effectiveness of \ouralg with value-based machine-learned advice that may not be fully trusted. On the other hand, the contrast between the black-box and grey-box settings reveals that access to value information can improve the consistency and robustness tradeoff non-trivially.

The stochastic result additionally uses the sublinear Q-value error, a time uniform confidence bound for transition noise with vanishing failure contribution, and a baseline guarantee that remains valid after the permanent switch. The precise conditions, finite horizon factors, and proof are given in Appendix~\ref{appendix:thm_grey_consistency}. Applications of our main results are discussed in Appendix~\ref{sec:application}.

\section{Concluding Remarks}

Our results contribute to the growing body of literature on learning-augmented algorithms for MDPs and highlight the importance of considering consistency and robustness in this context. In particular, we have shown that by utilizing the \textit{structural information} of machine learning methods, it is possible to achieve improved performance over a black-box approach. The results demonstrate the potential benefits of utilizing value-based policies as advice; however, there remains room for future work in exploring other forms of structural information.

\textbf{Limitations and Future Work. }One limitation of our current work is the lack of analysis of more general forms of black-box procedures. Understanding and quantifying the available structural information in a more systematic way is another future direction that could lead to advances in the design of learning-augmented online algorithms and their applications in various domains.  

\section*{Acknowledgement}

We would like to thank the anonymous reviewers for their helpful comments.
This work was supported in part by the National Natural Science Foundation of China (NSFC) under grant No. 72301234, the Guangdong Key Lab of Mathematical Foundations for Artificial Intelligence, and the start-up funding UDF01002773 of CUHK-Shenzhen. Yiheng Lin was supported by the Caltech Kortschak Scholars program. Shaolei Ren was supported in part by the U.S. U.S. National Science Foundation (NSF) under grant CNS--1910208. 
Adam Wierman was supported in part by the U.S. NSF under grants CNS--2146814, CPS--2136197, CNS--2106403, NGSDI--2105648.

\addcontentsline{toc}{section}{Bibliography}
\bibliographystyle{unsrt}
{\bibliography{main}}

\newpage

\appendix

\section{Application Examples}
\label{sec:application}

In this section, we delve deeper into the practical applications of our main results, which provide a general consistency and robustness tradeoff. By presenting concrete examples, we aim to demonstrate the versatility and relevance of our findings to various real-world problems and scenarios. We consider the settings summarized in Table~\ref{table:baseline}. The analytical examples verify model and baseline components of the framework, while the later numerical studies use the separate empirical heuristic stated below. Additionally, these examples highlight the significance of considering the tradeoff between consistency and robustness in the design and implementation of decision-making algorithms in the learning-augmented framework, and the impact of the structural information in the grey-box setting.

\subsection{MPC baseline in time-varying dynamical systems}
\label{sec:mpc_baseline}

The first application is an online optimal control problem, which is a special case of the general MDP in Section \ref{sec:model}. Suppose that the dynamics and cost function in time $t \in [\nt]$ are given by 
\begin{align}
    \label{eq:canonical}
    x_{t+1} = A_t x_t + B_t u_t + w_t
\end{align}
and 
\begin{align}
\label{eq:quadratic_costs}
     c_t(x_t, u_t) = \frac{1}{2} \left((x_t)^\top Q_t x_t + (u_t)^\top R_t u_t\right).
\end{align}
Here, $\left(w_t:t\in [\nt]\right)$ is a sequence of bounded and oblivious disturbances that is unknown to the online controller. \footnote{By oblivious, we mean the sequence $\left(w_t:t\in [\nt]\right)$ is determined by the environment before the game starts and are not random.} At each time step, the controller observes $(A_t, B_t, Q_t, R_t)$ for future $k$ time steps but all future disturbances are unknown. Since we assume $c_{\nt} \equiv 0$, the optimal $u_{\nt-1}$ is always $0$ and the online control problem in episode $t$ actually terminates after the state $x_{\nt-1}$ is revealed. 

We show how to apply Model Predictive Control (MPC) with robust predictions as the robust baseline in our framework \cite{li2022robustness,lin2022online}. To define MPC with robust predictions, we first need to define the finite-time optimal control problem (FTOCP) solved by MPC at every step: For $t, t' \in [\nt]$, we define
\begin{align*}
\psi_{t, t'}\left(x_t, \left(w_{\tau\mid t}:\tau \in [t:t'-1]\right); P_{t'}\right) = &\argmin_{u_{t:(t'-1)\mid t}} \sum_{\tau = t}^{t' - 1} c_\tau\left(x_{\tau\mid t}, u_{\tau\mid t}\right) + \frac{1}{2} x_{t'\mid t}^\top P_{t'} x_{t'\mid t}\\
\text{s.t. }& x_{\tau+1\mid t} = A_{t} x_{\tau\mid t} + B_{t} u_{\tau\mid t} + w_{\tau\mid t}, \forall \tau \in [t:t'-1];\\
& x_{t\mid t} = x_t,
\end{align*}
Here, $\left(w_{\tau\mid t}:{\tau \in [t:t'-1]}\right)$ can be viewed as the predicted future disturbances, and MPC with robust predictions sets them to zero vectors, therefore becomes robust against (potentially) adversarial environments with large disturbances $\left(w_t:t\in [nt]\right)$. The term $ x_{t'\mid t}^\top P_{t'} x_{t'\mid t}/2$ is a terminal cost that regularizes the last predictive state. To simplify the notation, we use the shorthand $\psi_{t, t'}(x_t; P) \coloneqq \psi_{t, t'}(x_t, 0_{\times (t' - t)}; P)$, where $0_{\times (t' - t)}$ denotes a sequence of $(t'-t)$ zeros, and
\[\psi_{t, t'}(x_t) \coloneqq \begin{cases}
\psi_{t, t'}(x_t; P_{t'}) &\text{ if } t' < \nt-1,\\
\psi_{t, t'}(x_t; Q_{t'}) &\text{ if } t' = \nt-1.
\end{cases}\]
With this notation, we define MPC with robust predictions formally in Algorithm \ref{alg:mpc-baseline}. At each time step, it solves a $k$-step predictive FTOCP and commits the first control action in the optimal solution. Since the future disturbances are unknown, MPC predicts them to zero vectors. The terminal cost matrices $P_t$ are pre-determined.

\begin{algorithm}[t]
\SetAlgoLined
\SetKwInOut{Input}{Initialize}\SetKwInOut{Output}{output}
\Input{Prediction horizon $k$ and terminal cost matrix $P$.}
\BlankLine
\For{$t=0,\ldots,\nt-1$}{

Set $t' \gets \min\{t + k, \nt-1\}$

Observe $x_t$ and $\left(A_{\tau}, B_{\tau}, Q_{\tau}, R_{\tau}:{\tau \in [t:t'-1]}\right)$



Set action $u_t = \psi_{t, t'}(x_t)[u_{t\mid t}]$
}

\caption{MPC with Robust Predictions ($\MPC_k$)}
\label{alg:mpc-baseline}
\end{algorithm}

We make some standard assumptions, following those in the literature of online control \cite{lin2021perturbation,zhang2021regret,lin2022bounded}.
The first assumption is that the cost functions are well-conditioned and the dynamical matrices are uniformly bounded.
\begin{assumption}\label{assump:bounded-costs-and-dynamics}
For any $t \in [\nt]$, we have $\norm{A_t} \leq a, \norm{B_t} \leq b, \norm{w_t} \leq d$, and
\[\mu I_n \preceq Q_t \preceq \ell I_n, \mu I_m \preceq R_t \preceq \ell I_m, \mu I_n \preceq P \preceq \ell I_n.\]
\end{assumption}

The second assumption guarantees that for arbitrary bounded disturbance sequences $\left(w_t:t\in [\nt]\right)$, there exists a controller that can stabilize the system.
\begin{assumption}\label{assump:uniform-stability}
For any $t, t' \in [\nt], t \leq t'$, define a block matrix
\begin{align*}
\Xi_{t, t'}^t \coloneqq \begin{bmatrix}
    I & & & & \\
    - A_t & - B_t & I & & \\
     & & \ddots & & & & \\
     & & - A_{t'-1} & - B_{t'-1} & I
    \end{bmatrix}.
\end{align*}
We assume $\sigma_{\min}\left(\Xi_{t, t'}\right) \geq \sigma$ for some positive constant $\sigma$, where $\sigma_{\min}(\cdot)$ denotes the smallest singular value of a matrix.
\end{assumption}

The interpretation of Assumption \ref{assump:uniform-stability} is as follows. It holds with $\sigma$ if and only if for any sequence $x_t, w_{t:t'-1}$ that satisfies $\norm{\left((x_t)^\top, (w_t)^\top, \ldots, (w_{t'-1})^\top\right)} \leq 1$, there exists a feasible trajectory $x_t, u_t, x_{t+1}, \ldots, u_{t'-1}, x_{t'}$ subject to \[\norm{\left((x_t)^\top, (u_t)^\top, (x_{t+1})^\top, \ldots, (u_{t'-1})^\top, (x_{t'})^\top\right)}\leq \frac{1}{\sigma}.\] 
Thus, Assumption~\ref{assump:uniform-stability} holds provided that there is an exponentially stabilizing controller. With these assumptions, we are ready to present our main result about $\MPC_k$ in Theorem \ref{thm:W-Robustness-and-ROB}.

\begin{theorem}\label{thm:W-Robustness-and-ROB}
Suppose Assumptions \ref{assump:bounded-costs-and-dynamics} and \ref{assump:uniform-stability} hold. Consider the case when the robust baseline policy $\overline{\pi}$ is $\MPC_k$ (Algorithm \ref{alg:mpc-baseline}), and for some $\overline{\decayfactor} \in (\decayfactor, 1)$, the prediction horizon $k$ satisfies that $$k \geq \min\left\{\nt, \frac{1}{2}\log\left(C^3 b a \decayfactor / (\overline{\decayfactor} - \decayfactor)\right)/\log(1/\decayfactor)\right\}.$$ We also assume that $x_0 = 0$, and $R_t \leq \overline{R}$. Then, the following holds for the robust baseline $\overline{\pi}$:
\begin{enumerate}[(i)]
    \item The Wasserstein robustness (Definition \ref{def:robust}) holds globally with $s(t) = C (1 + C)(a + b) \overline{\decayfactor}^{t-1}$.
    \item The \ouralg controller (Algorithm \ref{alg:ppp}) is always stable in the sense that
    \begin{align*}
        \norm{x_t} \leq \overline{R}_x \coloneqq \frac{C(d + b \overline{R})}{1 - \overline{\lambda}} \text{ and } \norm{u_t} \leq \overline{R}_u \coloneqq C \overline{R}_x + \overline{R}.
    \end{align*}
    \item The competitive ratio of the robust baseline $\MPC_k$ satisfies that
    \[\mathsf{ROB} \leq \frac{2\ell C^2(1 + C^2)(1 + a^2 + b^2)}{\mu (1 - \overline{\decayfactor})^2}.\]
\end{enumerate}
Here, the coefficients $C$ and $\lambda$ are given by
$\decayfactor = \left(\frac{\overline{\sigma} - \underline{\sigma}}{\overline{\sigma} + \underline{\sigma}}\right)^{\frac{1}{2}}, C = \frac{4(\ell + 1 + a + b)}{\underline{\sigma}^2 \cdot \decayfactor},$
where
\[\underline{\sigma} \coloneqq \min(\mu, 1) \cdot (a + b + 1) \cdot \sqrt{\frac{\ell}{2\mu \ell + \mu \sigma^2}}, \text{ and } \overline{\sigma} \coloneqq \sqrt{2} (\ell + a + b + 1).\]
\end{theorem}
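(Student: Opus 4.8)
The plan is to reduce all three parts to a single deterministic trajectory-perturbation bound for $\MPC_k$ and then layer standard control arguments on top. First I would exploit that the dynamics in~\eqref{eq:canonical} are deterministic once the oblivious disturbance sequence $(w_t)$ is fixed, and that $\MPC_k$ is a deterministic state-feedback policy: since it predicts all future disturbances to be zero, the action it commits at time $t$ is linear in the current state, i.e. $\overline{\pi}_t(x_t) = K^{(k)}_{t} x_t$ for a gain $K^{(k)}_t$ determined by the FTOCP $\psi_{t,t'}$ on the window $[t,t']$. Consequently, propagating any two state-action distributions $\rho,\rho'$ through $\overline{\pi}$ keeps them pushforwards of a common coupling, so $W_p$ reduces to the pointwise norm distance of two trajectories driven by the \emph{same} disturbances. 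Crucially, the shared disturbances cancel in the difference variables $\delta x_t \coloneqq x_t - x_t'$ and $\delta u_t \coloneqq u_t - u_t'$, giving the disturbance-free recursion $\delta x_{t+1} = (A_t + B_t K^{(k)}_t)\delta x_t$ with $\delta u_t = K^{(k)}_t \delta x_t$. Thus Definition~\ref{def:robust} follows once the closed-loop transition matrices are shown to contract.

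For part (i) I would establish this contraction through the perturbation analysis of the FTOCP. Writing the KKT system of $\psi_{t,t'}$ as a (near-)banded matrix and invoking Assumptions~\ref{assump:bounded-costs-and-dynamics} and~\ref{assump:uniform-stability} (the latter lower-bounding the relevant singular values by $\sigma$), the machinery of~\cite{lin2022bounded} gives that the optimal solution map decays exponentially in the distance to the perturbed entry, with rate $\decayfactor$ and constant $C$ as defined in the statement. This yields two facts: the finite-horizon gain is exponentially close to the full-horizon gain, $\norm{K^{(k)}_t - \overline{K}_t} = \mathcal{O}(\decayfactor^{k})$, which is where the hypothesis $k \geq \frac{1}{2}\log(C^3 b a \decayfactor/(\overline{\decayfactor}-\decayfactor))/\log(1/\decayfactor)$ is used to absorb the $\mathcal{O}(\decayfactor^k)$ gap and degrade the rate from $\decayfactor$ to $\overline{\decayfactor}$; and the full-horizon closed loop contracts, $\norm{\prod_{\tau=t}^{t'}(A_\tau + B_\tau \overline{K}_\tau)} \leq C\decayfactor^{t'-t}$. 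Combining them gives $\norm{\prod_{\tau=t_1}^{t_2-1}(A_\tau + B_\tau K^{(k)}_\tau)} \leq C\,\overline{\decayfactor}^{\,t_2-t_1}$. Propagating $\delta x$ by this product and bracketing the first step (which converts the initial state-action discrepancy into a pure state discrepancy with factor $a+b$, since $\norm{\delta x_{t_1+1}}\leq a\norm{\delta x_{t_1}}+b\norm{\delta u_{t_1}}$) and the last step (the action map $\delta u_{t_2}=K^{(k)}_{t_2}\delta x_{t_2}$ with $\norm{K^{(k)}_{t_2}}\leq C$, contributing $1+C$) yields $s(t)=C(1+C)(a+b)\overline{\decayfactor}^{\,t-1}$, with $\sum_t s(t)\leq C_s$ by the geometric sum, so global (indeed $r=\infty$, hence $\gamma$-local) $p$-Wasserstein robustness holds.

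For part (ii) I would run an input-to-state-stability argument on the \ouralg closed loop. The projection step guarantees $\norm{u_t - \overline{\pi}_t(x_t)} \leq R_t \leq \overline{R}$, so decomposing $u_t = K^{(k)}_t x_t + \Delta_t$ with $\norm{\Delta_t}\leq\overline{R}$ gives $x_{t+1} = (A_t + B_t K^{(k)}_t)x_t + (B_t\Delta_t + w_t)$. Using the contraction from part (i) together with $\norm{B_t\Delta_t + w_t}\leq b\overline{R}+d$ and $x_0=0$, summing the geometric series bounds the state by $\norm{x_t}\leq \frac{C(d+b\overline{R})}{1-\overline{\decayfactor}} = \overline{R}_x$, after which $\norm{u_t}\leq \norm{K^{(k)}_t}\norm{x_t}+\overline{R}\leq C\overline{R}_x+\overline{R}=\overline{R}_u$.

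Finally, for part (iii) I would bound the competitive ratio by comparing energies. Smoothness ($Q_t,R_t,P\preceq \ell I$) with the stability bounds of part (ii) upper-bounds $J(\MPC_k)\leq \frac{\ell(1+C^2)}{2}\sum_t\norm{x_t}^2$, and the input-to-state estimate $\sum_t\norm{x_t}^2 \lesssim \frac{C^2}{(1-\overline{\decayfactor})^2}\sum_t\norm{w_t}^2$ (the squared factor arising from the geometric convolution of disturbances through the contracting closed loop) introduces the $(1-\overline{\decayfactor})^{-2}$; strong convexity ($Q_t,R_t\succeq \mu I$) lower-bounds $J^\star$ by the unavoidable disturbance-response energy $\gtrsim \mu\sum_t\norm{w_t}^2$. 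Taking the ratio and collecting the dynamics constants $(1+a^2+b^2)$ yields $\mathsf{ROB}\leq \frac{2\ell C^2(1+C^2)(1+a^2+b^2)}{\mu(1-\overline{\decayfactor})^2}$. I expect the main obstacle to be part (i): porting the FTOCP perturbation bounds of~\cite{lin2022bounded} into the precise closed-loop contraction rate $\overline{\decayfactor}$ with the explicit constant $C$ and verifying the prediction-horizon threshold, since this single estimate controls the Wasserstein rate and, through it, all the constants appearing in (ii) and (iii).
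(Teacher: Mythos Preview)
Your proposal is correct and follows essentially the same route as the paper: reducing Wasserstein robustness to deterministic closed-loop contraction via the linear feedback form $u_t = K^{(k)}_t x_t$ of $\MPC_k$, establishing that contraction by comparing $K^{(k)}_t$ to the full-horizon gain $\overline{K}_t$ through the FTOCP perturbation bounds of~\cite{lin2022bounded}, then running an ISS argument for (ii) and an energy comparison (upper-bounding $J(\MPC_k)$ via the contraction and lower-bounding $J^\star \gtrsim \mu\sum_t\norm{w_t}^2$ from strong convexity) for (iii). The only minor imprecision is that the gain discrepancy is actually $\norm{K^{(k)}_t - \overline{K}_t} = \mathcal{O}(\decayfactor^{2k})$ rather than $\mathcal{O}(\decayfactor^k)$, which is precisely the origin of the factor $\tfrac{1}{2}$ in the prediction-horizon threshold you quote.
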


The first result of Theorem \ref{thm:W-Robustness-and-ROB} shows that $\MPC_k$ satisfies the Wasserstein robustness (see Definition~\ref{def:robust}), which is the critical assumption we require for any robust baseline policy. The second result guarantees that \ouralg (with $\MPC_k$ as the robust baseline) will always stay in a bounded ball in the Euclidean space as long as the radius $R_t$ is uniformly upper bounded. Thus, we can assume $\mathcal{X}$ and $\mathcal{U}$ are compact without loss of generality. The third result gives an upper bound of the robust competitive ratio $\mathsf{ROB}$, which in this application is a special deterministic case of the considered ratio of expectation ($\mathsf{ROE}$) in the general results. For deterministic transitions, Theorem~\ref{thm:grey_robustness_consistency} gives exact consistency under exact Q-value advice. Its robustness conclusion additionally requires the restartable suffix guarantee in~\eqref{eq:restartable_baseline}. The linear normalization can be imposed by adding the same positive constant to every stage cost, which does not change the MPC or optimal actions.
We defer the detailed proof of Theorem \ref{thm:W-Robustness-and-ROB} to Appendix \ref{appendix:MPC-proof}.

\subsection{Baseline Policies for MDPs with Finite State/Action Spaces}
\label{appendix:Wasserstein-TV-distance}
Our second example focuses on an MDP environment $\left(\mathcal{S}, \mathcal{A}, \left(\mathbb{P}_t:t\in [\nt]\right), \left(c_t:t\in [\nt]\right), \nt\right)$ with a finite state space $\mathcal{S}$ and a finite action space $\mathcal{A}$. Given a policy $\overline{\pi}_t: \mathcal{S} \to \Delta(\mathcal{A})$ for $t \in [\nt]$, let $\left(\overline{\mathbb{P}}_t\right)$ denote the state transition probability that maps $\mathcal{S}$ to $\Delta(\mathcal{S})$, which is defined as
\[\overline{\mathbb{P}}_t(s; s') = \sum_{a \in \mathcal{A}} \overline{\pi}_t(s; a) \mathbb{P}_t(s, a; s').\]
We consider the setting when every entry of $\overline{\mathbb{P}}_t$ is strictly positive. Under this assumption, one can show that the one-step transition probability $\overline{\mathbb{P}}_t$ is a contractive mapping in total variance distance \cite{norris1998markov}. We state this result formally in Lemma~\ref{lemma:MDP-exp-contraction}. To simplify the notation, for any $0 \leq t \leq t' < \nt$, we define the multi-step transition matrix as $\overline{\mathbb{P}}_{t:t'} \coloneqq \overline{\mathbb{P}}_t \overline{\mathbb{P}}_{t+1} \cdots \overline{\mathbb{P}}_{t'}$.

\begin{lemma}\label{lemma:MDP-exp-contraction}
Under the assumption that $\min_{t \in [\nt]} \min_{s, s' \in \mathcal{S}} \overline{\mathbb{P}}_t(s; s') \geq \epsilon$, for any $0 \leq t \leq t' < \nt$ and distributions $\mu, \nu \in \Delta(\mathcal{S})$, we have that
\begin{equation}\label{lemma:MDP-exp-contraction:e1}
    \TV\left(\mu^\top \overline{\mathbb{P}}_{t:t'}, \nu^\top \overline{\mathbb{P}}_{t:t'}\right) \leq \decayfactor^{t' - t} \TV(\mu, \nu),
\end{equation}
where $\decayfactor = 1 - \abs{\mathcal{S}}\epsilon$.
\end{lemma}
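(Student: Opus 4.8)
The plan is to reduce the multi-step contraction to a single-step bound and then iterate. The central object is the Dobrushin ergodic coefficient of a stochastic matrix $P$, namely $\delta(P) \coloneqq \max_{s, s' \in \mathcal{S}} \TV\left(P(s; \cdot), P(s'; \cdot)\right)$, and the two facts I would establish are: (i) a single application of $\overline{\mathbb{P}}_t$ contracts $\TV$ by a factor $\delta(\overline{\mathbb{P}}_t)$, and (ii) the strict-positivity hypothesis forces $\delta(\overline{\mathbb{P}}_t) \leq \decayfactor = 1 - \abs{\mathcal{S}}\epsilon$.

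For the single-step contraction I would first decompose $\mu - \nu$ into its positive and negative parts. Since $\mu, \nu \in \Delta(\mathcal{S})$, the signed measure $\mu - \nu$ has total mass zero, so writing $m \coloneqq \TV(\mu, \nu)$, the normalized parts $\alpha \coloneqq (\mu - \nu)^+/m$ and $\beta \coloneqq (\mu - \nu)^-/m$ are genuine probability distributions with $\mu - \nu = m(\alpha - \beta)$. Linearity of the vector-matrix product then gives $\TV\left(\mu^\top \overline{\mathbb{P}}_t, \nu^\top \overline{\mathbb{P}}_t\right) = m \cdot \TV\left(\alpha^\top \overline{\mathbb{P}}_t, \beta^\top \overline{\mathbb{P}}_t\right)$, so it suffices to bound the latter by $\delta(\overline{\mathbb{P}}_t)$. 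Writing $\alpha^\top \overline{\mathbb{P}}_t = \sum_s \alpha(s)\,\overline{\mathbb{P}}_t(s; \cdot)$ and similarly for $\beta$, I would introduce the product coupling $\alpha \otimes \beta$ and invoke joint convexity of the total variation distance to obtain $\TV\left(\alpha^\top \overline{\mathbb{P}}_t, \beta^\top \overline{\mathbb{P}}_t\right) \leq \sum_{s, s'} \alpha(s)\beta(s')\, \TV\left(\overline{\mathbb{P}}_t(s; \cdot), \overline{\mathbb{P}}_t(s'; \cdot)\right) \leq \delta(\overline{\mathbb{P}}_t)$.

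To bound $\delta(\overline{\mathbb{P}}_t)$ I would use the elementary identity $\TV(p, q) = 1 - \sum_{s''} \min\left(p(s''), q(s'')\right)$ valid for distributions $p, q$ on $\mathcal{S}$. Applying it to two rows $p = \overline{\mathbb{P}}_t(s; \cdot)$ and $q = \overline{\mathbb{P}}_t(s'; \cdot)$ and using $\min\left(\overline{\mathbb{P}}_t(s; s''), \overline{\mathbb{P}}_t(s'; s'')\right) \geq \epsilon$ for every $s''$ --- which is exactly where the hypothesis $\min_{s, s'}\overline{\mathbb{P}}_t(s; s') \geq \epsilon$ enters --- I get $\TV\left(\overline{\mathbb{P}}_t(s; \cdot), \overline{\mathbb{P}}_t(s'; \cdot)\right) \leq 1 - \abs{\mathcal{S}}\epsilon = \decayfactor$, hence $\delta(\overline{\mathbb{P}}_t) \leq \decayfactor$. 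Combining with the single-step reduction yields $\TV\left(\mu^\top \overline{\mathbb{P}}_t, \nu^\top \overline{\mathbb{P}}_t\right) \leq \decayfactor\, \TV(\mu, \nu)$ for every $t$.

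Finally, I would iterate this bound along the product $\overline{\mathbb{P}}_{t:t'} = \overline{\mathbb{P}}_t \cdots \overline{\mathbb{P}}_{t'}$, feeding the output of one step as the input distributions of the next. Each intermediate matrix is again strictly positive and hence contractive by the same factor $\decayfactor$, so composing the steps produces one factor of $\decayfactor$ per matrix in the product; since there are $t' - t + 1$ such matrices and $\decayfactor \leq 1$, this is at most $\decayfactor^{t' - t}\,\TV(\mu, \nu)$, which is the claimed bound. The main obstacle is the reduction-to-rows step in the single-step argument, namely proving $\TV\left(\alpha^\top P, \beta^\top P\right) \leq \delta(P)$ cleanly through joint convexity / the product coupling; the remaining ingredients (the mass-zero decomposition and the $\min$-representation of $\TV$) are routine.
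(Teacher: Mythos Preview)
Your argument is correct. The paper does not give its own proof of this lemma at all; it simply cites an external reference (``Proposition~5 in \cite{MarkovChainNote}'') and moves on. Your Dobrushin-coefficient argument is exactly the standard proof of that classical fact, so you have supplied what the paper outsourced. One minor note: as you observe, the product $\overline{\mathbb{P}}_{t:t'}$ contains $t'-t+1$ factors, so your iteration actually yields the slightly sharper exponent $\decayfactor^{\,t'-t+1}$, and you correctly weaken it to $\decayfactor^{\,t'-t}$ to match the stated bound.
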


Lemma~\ref{lemma:MDP-exp-contraction} follows from Proposition 5 in \cite{MarkovChainNote}. In the case that not every entry of $\overline{\mathbb{P}}_t$ is strictly positive, but the entries of $\overline{\mathbb{P}}_{t:t+d}$ are strictly positive for some constant $d \in \mathbb{Z}_+$, we can still obtain a similar contraction property as Lemma~\ref{lemma:MDP-exp-contraction} with a weaker decay rate $\decayfactor$. Note that the exponential contractive property in Lemma~\ref{lemma:MDP-exp-contraction} is different with the one in Wasserstein robustness (Definition~\ref{def:robust}) because the distance between distributions are measured by total variance instead of Wasserstein distance. To convert it into the form required by Wasserstein robustness, we need to define an underlying metric for the discrete state/action space.

Without loss of generality, we assume $\mathcal{X} \coloneqq \left\{e_i:i = 1, \ldots, \abs{S}\right\} \subseteq \mathbb{R}^{\abs{\mathcal{S}}}$, where each element of $\mathcal{X}$ corresponds to a unique state in $\mathcal{S}$. Here, each $e_i$ is an indicator vector of $\mathbb{R}^{\abs{\mathcal{S}}}$ defined as
\begin{align*}
    e_i(j) = \begin{cases}
        1 & \text{ if } i = j,\\
        0 & \text{ otherwise.}
    \end{cases}
\end{align*}
Since a policy in the discrete MDP maps $\mathcal{S}$ to $\Delta(\mathcal{A})$, we set $\mathcal{U} = \Delta(\mathcal{A}) \subseteq \mathbb{R}^{\abs{\mathcal{A}}}$, which denotes the distribution of actions and is compact and convex. To define the Wasserstain distance, we adopt $\ell_1$ distance as the metric on the state space $\mathcal{X}$, action space $\mathcal{U}$, and state-action space $\mathcal{X} \times \mathcal{U}$, i.e.,
\[\norm{(x, u) - (x', u')}_1 = \norm{x - x'}_1 + \norm{u - u'}_1, \text{ for all } x, x' \in \mathcal{X}, u, u' \in \mathcal{U}.\]
Using these definitions, we can use the contraction property in the TV distance (Lemma \ref{lemma:MDP-exp-contraction}) to establish the Wasserstein robustness of the baseline policy $\overline{\pi}$.

\begin{theorem}\label{thm:MDP-baseline-W-Robustness}
Suppose the Markov chain on state space $\mathcal{S}$ induced by the baseline policy $\overline{\pi} = (\overline{\pi}_t: t \in [\nt])$ satisfies that $\overline{\mathbb{P}}_t(s; s') \geq \epsilon$ for all $t \in [\nt]$ and $s, s' \in \mathcal{S}$, then Wasserstein robustness (Definition \ref{def:robust}) holds globally with $s(t) = 2 \decayfactor^{t-1}$, where $\decayfactor = 1 - \abs{\mathcal{S}}\epsilon$.
\end{theorem}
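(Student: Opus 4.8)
The plan is to transport the total-variation contraction of Lemma~\ref{lemma:MDP-exp-contraction} into the Wasserstein geometry demanded by Definition~\ref{def:robust}. The pivotal observation is that, under the indicator-vector embedding $\mathcal{X}=\{e_i\}$ with the $\ell_1$ ground metric, the Wasserstein and total-variation distances on state distributions coincide up to a constant: since $\norm{e_i-e_j}_1 = 2\,\mathbbm{1}[i\neq j]$, the optimal-coupling characterization of both quantities gives $W_1(\mu,\nu)=2\,\TV(\mu,\nu)$ for all $\mu,\nu\in\Delta(\mathcal{S})$. This identity is the bridge that lets me feed the TV contraction rate $\decayfactor = 1-\abs{\mathcal{S}}\epsilon$ directly into a Wasserstein bound, and because it is valid for \emph{all} pairs, it is exactly what will make the robustness hold globally ($r=+\infty$) rather than only within a finite radius.

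First I would reduce the state-action contraction to a pure state-marginal contraction. Write $\phi,\phi'$ for the state marginals of $\rho,\rho'$ and $\psi,\psi'$ for those of $\rho_{t_1:t_2}(\rho),\rho_{t_1:t_2}(\rho')$. On the input side, the marginalization inequality gives $W_1(\phi,\phi')\leq W_1(\rho,\rho')$, since $\norm{x-x'}_1\leq\norm{(x,u)-(x',u')}_1$ and any coupling of $\rho,\rho'$ projects to a coupling of $\phi,\phi'$. On the output side, both evolved distributions are lifts of their state marginals through the \emph{same} policy, of the form $\psi(s)\,\overline{\pi}_{t_2}(s;a)$. I would couple them by taking an optimal state coupling of $\psi,\psi'$ and then coupling actions perfectly whenever the states coincide and arbitrarily otherwise; since $\mathcal{U}=\Delta(\mathcal{A})$ has $\ell_1$-diameter $2$, this coupling pays $\norm{s-s'}_1+\norm{a-a'}_1\leq 4$ only on the event $\{s\neq s'\}$, whose probability is $\TV(\psi,\psi')$. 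Hence $W_1(\rho_{t_1:t_2}(\rho),\rho_{t_1:t_2}(\rho'))\leq 4\,\TV(\psi,\psi')$.

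It then remains to contract $\TV(\psi,\psi')$. The state marginal evolves under the policy-induced matrices $\overline{\mathbb{P}}_t(s;s')=\sum_a\overline{\pi}_t(s;a)\mathbb{P}_t(s,a;s')$, every entry of which is at least $\epsilon$ by hypothesis, so Lemma~\ref{lemma:MDP-exp-contraction} applies and yields $\TV(\psi,\psi')\leq\decayfactor^{\,t_2-t_1-1}\TV(\phi,\phi')$ along the effective chain from $t_1$ to $t_2$. Chaining the three bounds and converting at both ends via $W_1=2\TV$ — the two factors of $2$ cancelling against one another — leaves $W_1(\rho_{t_1:t_2}(\rho),\rho_{t_1:t_2}(\rho'))\leq 2\decayfactor^{\,t_2-t_1-1}W_1(\rho,\rho')$, which is precisely Definition~\ref{def:robust} with $p=1$, $r=+\infty$, and $s(t)=2\decayfactor^{\,t-1}$. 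Summability is immediate: $\sum_{t}s(t)=2/(1-\decayfactor)=2/(\abs{\mathcal{S}}\epsilon)=:C_s<\infty$.

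The step I expect to be the main obstacle is controlling the action component in the joint Wasserstein distance. On the pure state space the equivalence $W_1=2\TV$ is exact, but the continuous action simplex admits no such clean identity, so the argument must lean on the structural fact that both evolved chains share the identical conditional law $\overline{\pi}_{t_2}(s;\cdot)$ — this is what permits perfect action-coupling on coinciding states and caps the cost by the constant diameter elsewhere. A related subtlety is the very first transition out of $\rho$, whose action need not be policy-generated; arguing that this step is non-expansive (a data-processing / pushforward through the shared kernel $\mathbb{P}_{t_1}$) is what shifts the contraction exponent from $t_2-t_1$ to $t_2-t_1-1$ and accounts for the precise form of $s(t)$.
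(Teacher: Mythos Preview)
Your overall strategy is the same as the paper's: exploit $W_1=2\,\TV$ on the discrete state space, contract in $\TV$ via Lemma~\ref{lemma:MDP-exp-contraction}, and handle the state-action lift on the output side by coupling through the common policy $\overline{\pi}_{t_2}$. The output coupling and the summability of $s(t)$ are fine.

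There is, however, a genuine gap in your middle step. You write $\TV(\psi,\psi')\leq\decayfactor^{\,t_2-t_1-1}\TV(\phi,\phi')$, but $\psi$ is \emph{not} a function of the state marginal $\phi$ alone: the very first transition uses the action drawn from $\rho$, so $\psi$ depends on the full joint $\rho$. Your marginalization bound $W_1(\phi,\phi')\leq W_1(\rho,\rho')$ is therefore irrelevant, and the inequality as written can fail (take $\rho,\rho'$ with identical state marginals but different action components). Your proposed fix in the last paragraph, ``data-processing through the shared kernel $\mathbb{P}_{t_1}$'', does not close the gap either: data processing in $\TV$ only gives $\TV(\phi_1,\phi_1')\leq\TV(\rho,\rho')$ where $\phi_1$ is the state law at $t_1{+}1$, and on the joint space $\mathcal{X}\times\Delta(\mathcal{A})$ with continuous actions there is no bound of $\TV(\rho,\rho')$ by $W_1(\rho,\rho')$.

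What the paper does instead --- and what you need --- is to bound $\norm{\phi_1-\phi_1'}_1$ directly by $W_1(\rho,\rho')$. The key pointwise fact is that the one-step map $(x,u)\mapsto P_{t_1}(\cdot\mid x,u)\in\Delta(\mathcal{S})$ is $1$-Lipschitz in $\ell_1$: when $x=x'$ one has $\norm{\sum_a (u(a)-u'(a))\,\mathbb{P}_{t_1}(x,a;\cdot)}_1\leq\norm{u-u'}_1$, and when $x\neq x'$ both sides are at most $2=\norm{x-x'}_1$. Pushing the optimal coupling of $\rho,\rho'$ through this Lipschitz map and then averaging (the paper packages this as Lemmas~\ref{lemma:Wasserstain-preserves-contraction} and~\ref{lemma:push-forward-distribution}) yields $\norm{\phi_1-\phi_1'}_1\leq W_1(\rho,\rho')$. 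From here your chain runs correctly with $\phi_1$ in place of $\phi$: the $\TV$ contraction of Lemma~\ref{lemma:MDP-exp-contraction} applies from $t_1{+}1$ to $t_2$, and your output-side coupling finishes the job with the stated constant $2\decayfactor^{\,t_2-t_1-1}$.
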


We defer the proof of Theorem~\ref{thm:MDP-baseline-W-Robustness} to Appendix~\ref{appendix:MDP-proof}. Theorem~\ref{thm:MDP-baseline-W-Robustness} shows that the Wasserstein robustness in Definition~\ref{def:robust} is general enough to capture a wide class of baseline policies in finite state/action settings. It also enables comparison between our results and  previous studies that assume discrete state/action spaces \cite{jin2018q,mao2021near,golowich2022can}. 

\subsection{Numerical Results}

In light of the applications detailed in Appendix~\ref{sec:mpc_baseline}, we present two case studies. We consider linear dynamics as a specific instance of an MDP and use the MPC described in Algorithm~\ref{alg:mpc-baseline} as our robust baseline. 

\subsubsection{Basic Settings}

\paragraph{Dynamics.}
We investigate the impact of the hyper-parameter $\beta$ used in the empirical robustness budget described below by considering the following update rule:
\begin{align}
\label{eq:canonical2}
 \begin{bmatrix}
    d_{t+1}\\
    v_{t+1}
    \end{bmatrix} = A \begin{bmatrix}
    d_{t}\\
    v_{t}
    \end{bmatrix} + B u_t + w_t,
\end{align}
which is cast in the canonical form~\eqref{eq:canonical}. The system matrices $A$ and $B$ are defined as
\begin{align}
\label{eq:A_and_B}
A\coloneqq
    \begin{bmatrix}
    1 & 0 & 0.2 & 0\\
    0 & 1 & 0 & 0.2\\
    0 & 0 & 1 & 0 \\
    0 & 0 & 0 & 1
    \end{bmatrix}, \quad
B\coloneqq
    \begin{bmatrix}
    0 & 0 \\
    0 & 0 \\
    0.2 & 0 \\
    0 & 0.2
    \end{bmatrix},
\end{align}
and $w_t 
\coloneqq \begin{bmatrix}
    y_t \\ 0_{2}
\end{bmatrix} - \begin{bmatrix}
    y_{t+1} \\ 0_{2}
\end{bmatrix}$, where $(y_t:t\in [\nt])$ specifies an unknown trajectory to be tracked. 
The choice of $A,B$ and $(w_t:t\in [\nt])$ specifies a two-dimensional robot tracking problem as detailed in \cite{li2019online,li2022robustness}.
In this application, the robot controller maneuvers along a fixed but unknown trajectory, given by $(y_t:t\in [\nt])$. At each time $t\in [\nt]$, the robot controller needs to decide an acceleration action $u_t$, without knowing the desired location $y_t$. It can only access the past trajectories $(y_{\tau}:\tau\in [t])$.
The location of the robot controller at time \( t+1 \), denoted \( l_{t+1}\in\mathbb{R}^2 \), is determined by its prior location and its velocity \( v_t\in\mathbb{R}^2 \) according to \( l_{t+1}=l_t+0.2v_t \). Furthermore, at each subsequent time \( t \), the controller has the ability to apply an adjustment \( u_t \) to alter its velocity, resulting in \( v_{t+1}=v_t+0.2u_t \) at the next time step. This system can be reformulated as \eqref{eq:canonical2} by letting $x_t=l_t-y_t$, the tracking error between the current location at time $t$ and the desired location $y_t$.

To efficiently track the trajectory, we use quadratic costs as in~\eqref{eq:quadratic_costs} with
\begin{align}
\label{eq:Q_and_R}
Q \coloneqq
    \begin{bmatrix}
    1 & 0 & 0 & 0\\
    0 & 1 & 0 & 0\\
    0 & 0 & 0 & 0 \\
    0 & 0 & 0 & 0
    \end{bmatrix}, \quad
R \coloneqq
    \begin{bmatrix}
    10^{-2} & 0 \\
    0 & 10^{-2} 
    \end{bmatrix}.
\end{align}

With the settings above, we encapsulate a Gym environment~\cite{brockman2016openai} with action space and state space defined as hyper-cubes in $\mathbb{R}^2$ and $\mathbb{R}^4$ such that each action/state coordinate is within $[-100,100]$.

\paragraph{MPC Baseline $\overline{\pi}$.}
With predictions $(\widetilde{w}_t:t\in [\nt])$ of the perturbations satisfy $\widetilde{w}_t=0$ for all $t\in [\nt]$, with a terminal cost matrix $P$ as the  solution of the discrete algebraic Riccati equation (DARE), 
the MPC baseline in Algorithm~\ref{alg:mpc-baseline} can be stated as the following linear quadratic regulator
$ \overline{\pi}_{\MPC}(x_t) = -Kx_t
$
where $K \coloneqq (R+B^\top P B)^{-1} B^\top P A$.

\paragraph{Machine-Learned Policy $\widetilde{\pi}$.} We use the deep deterministic policy gradient (DDPG) algorithm~\cite{silver2014deterministic} to generate machine learned advice and Q-value functions, with hyper-parameters set as in Table~\ref{table:ddpg}.

\begin{table}[ht]
\footnotesize
\renewcommand{\arraystretch}{1.3}
    \centering
       \caption{Hyper-parameters used in DDPG.}
    \begin{tabular}{l|l}
\specialrule{.11em}{.1em}{.1em} 
   \textbf{Parameter}  & \textbf{Value}  \\
   \hline
   Maximal number of episodes & $10^3$ \\
   Episode length & $10^2$ \\
    Discount factor & $1.0$\\
   Actor network learning rate & $10^{-3}$ \\
   Critic network learning rate & $10^{-3}$ \\
   Soft target update parameter & $10^{-3}$\\
   Replay buffer size & $10^6$ \\
   Minibatch size &  $128$\\
\specialrule{.11em}{.1em}{.1em} 
    \end{tabular}
  \label{table:ddpg}
\end{table}

\paragraph{\ouralg Implementation}

In the experiments, we set $L_Q=1$ and use the empirical heuristic
\[
R_t^{\mathrm{emp}}\coloneqq[\eta_t-\beta|\delta_t|]^+,
\]
where $\delta_t$ is defined in~\eqref{eq:td_error_approximate}. These experiments were run using the absolute sampled TD error and do not use the corrected statistic $G_t$, the confidence boundary $b_{\nt,t}$, or the running maximum $Z_t$ in~\eqref{eq:grey_evidence}. The figures below therefore study this empirical heuristic and do not test the guarantee in Theorem~\ref{thm:grey_robustness_consistency}. All reported figures and numerical values are unchanged.

\begin{figure}[h]
\centering
\includegraphics[width=0.4\textwidth]{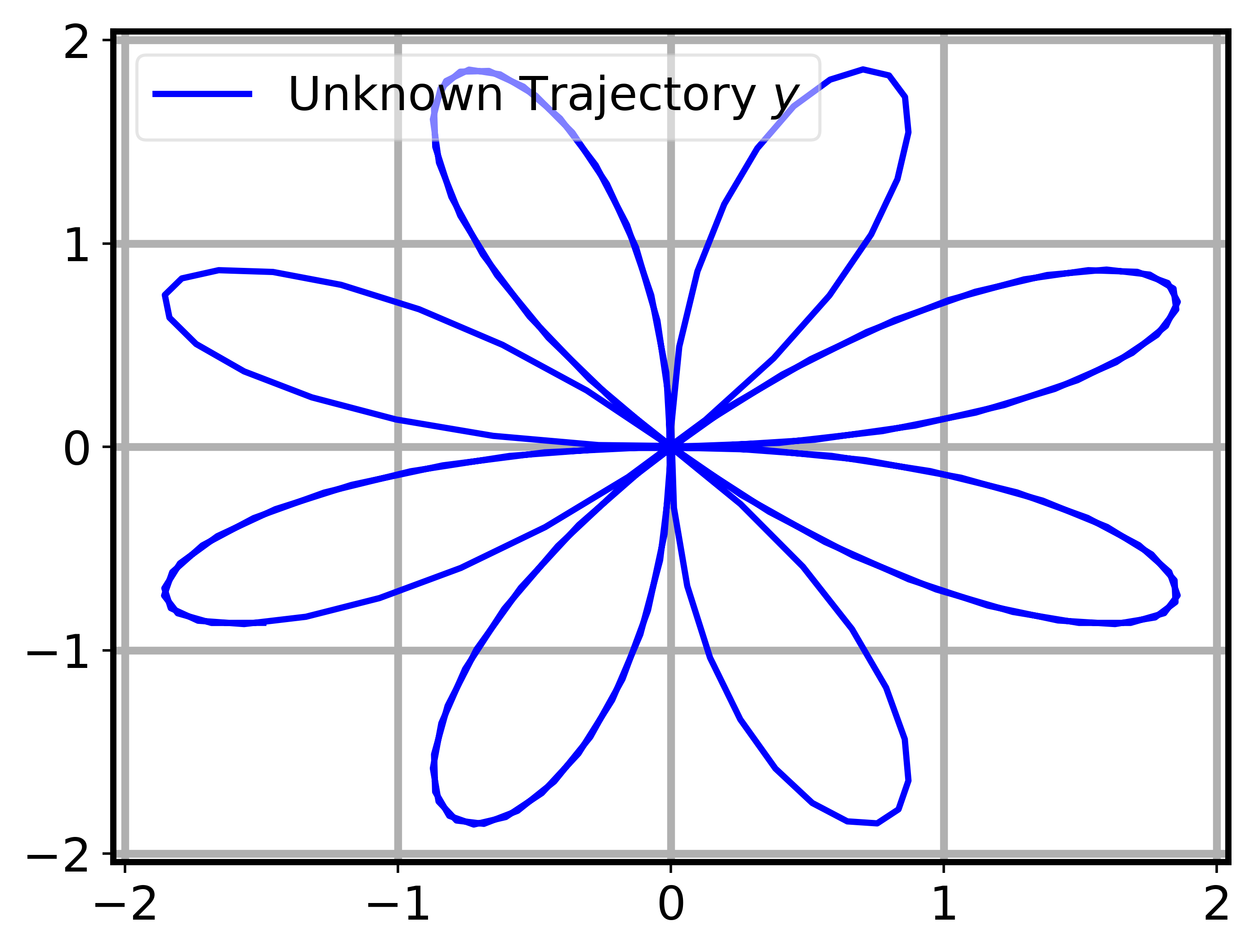}
\caption{Unknown trajectory $y$ used in the case study of the empirical heuristic budget $R_t^{\mathrm{emp}}$.}
\label{fig:trajectory}
\end{figure}

\begin{figure}[h]
\begin{subfigure}[b]{0.495\textwidth}
    \centering
\includegraphics[width=\textwidth]{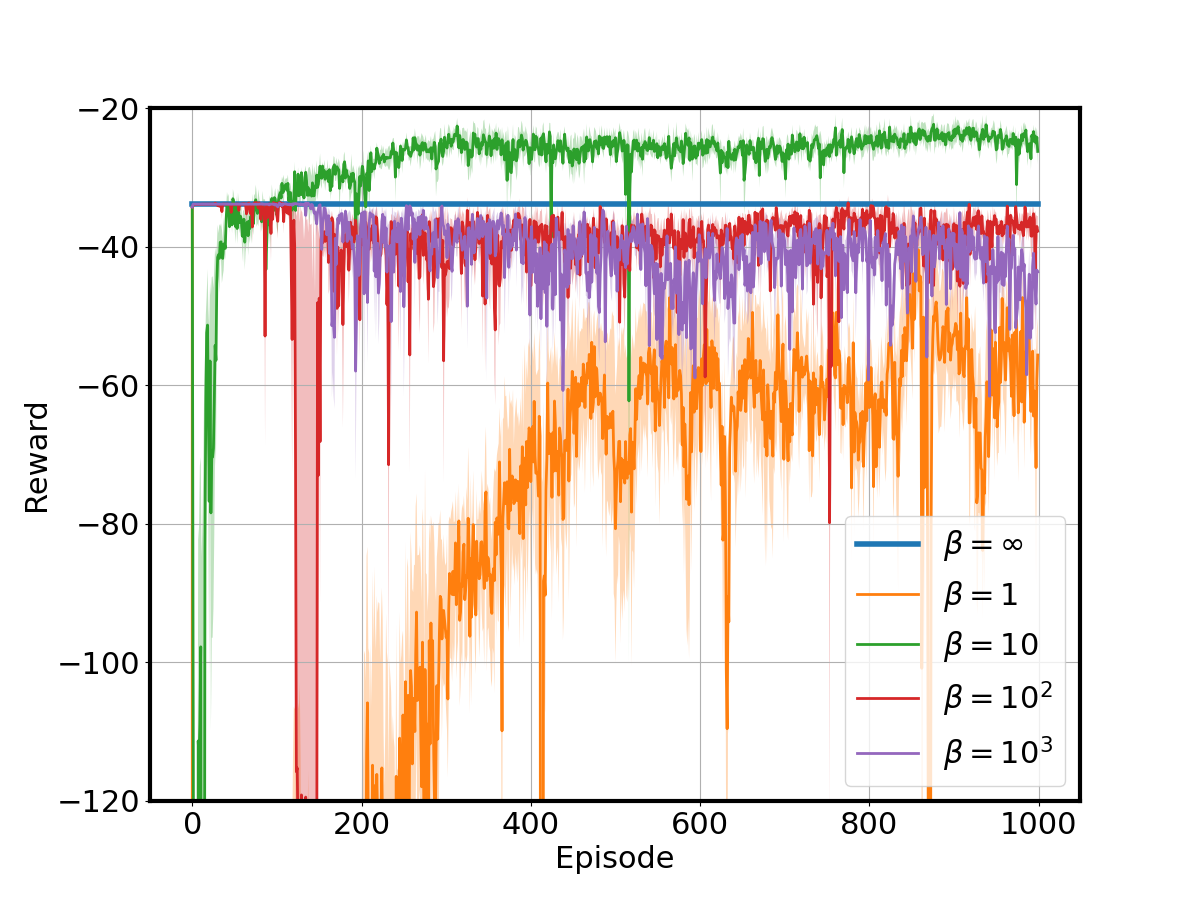}
\end{subfigure}
\hfill
\begin{subfigure}[b]{0.495\textwidth}
    \centering
\includegraphics[width=\textwidth]{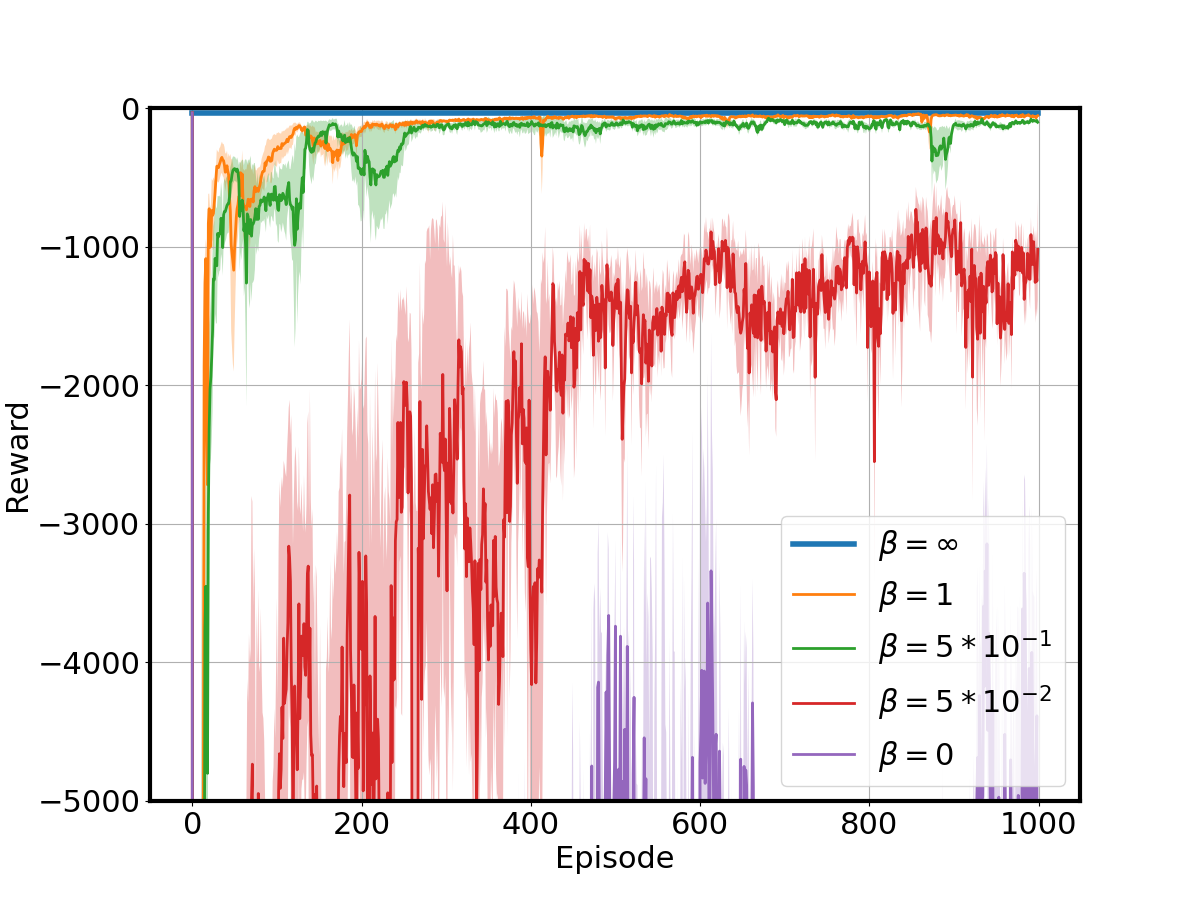}
\end{subfigure}
\caption{Average awards with varying choices of the hyper-parameter $\beta$ in the empirical heuristic budget $R_t^{\mathrm{emp}}$. The shaded area depicts the range of standard deviations for $5$ random tests. The left panel uses $\beta=1,10,10^2,10^3$, and $\infty$, with $\beta=\infty$ implemented by directly applying the MPC baseline. The right panel uses $\beta=0,0.05,0.5,1$, and $\infty$.}
\label{fig:beta}
\end{figure}

\begin{figure}[h]
\centering
\includegraphics[width=0.9\textwidth]{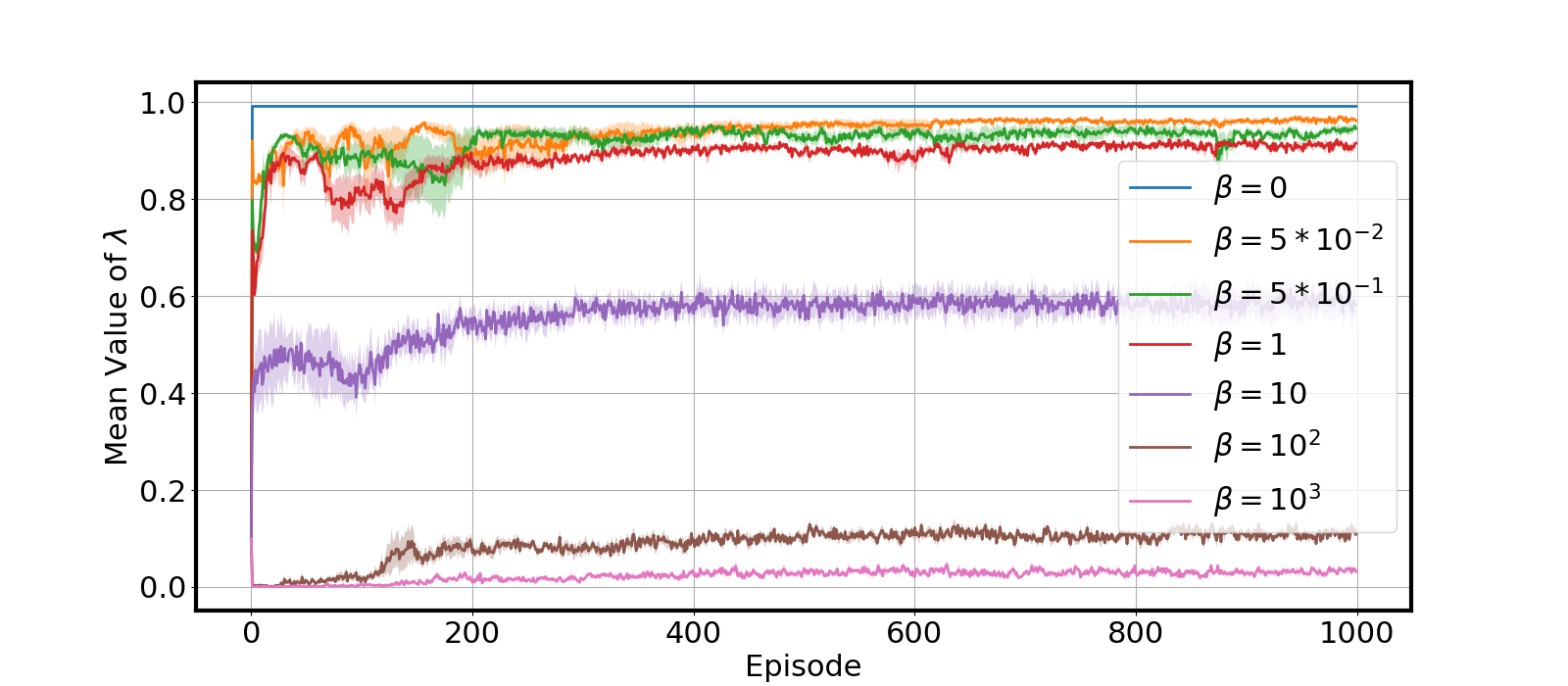}
\caption{The influence of the hyper-parameter \(\beta\) on the projection radii $(R_t^{\mathrm{emp}}:t\in[\nt])$ generated by the empirical heuristic. The shaded region represents the standard deviation range from $5$ random tests. As $\beta$ increases, the average trust coefficient $\lambda(R_t^{\mathrm{emp}})$ decreases.}
\label{fig:lambda}
\end{figure}

\begin{figure}[H]
\centering
\includegraphics[width=\textwidth]{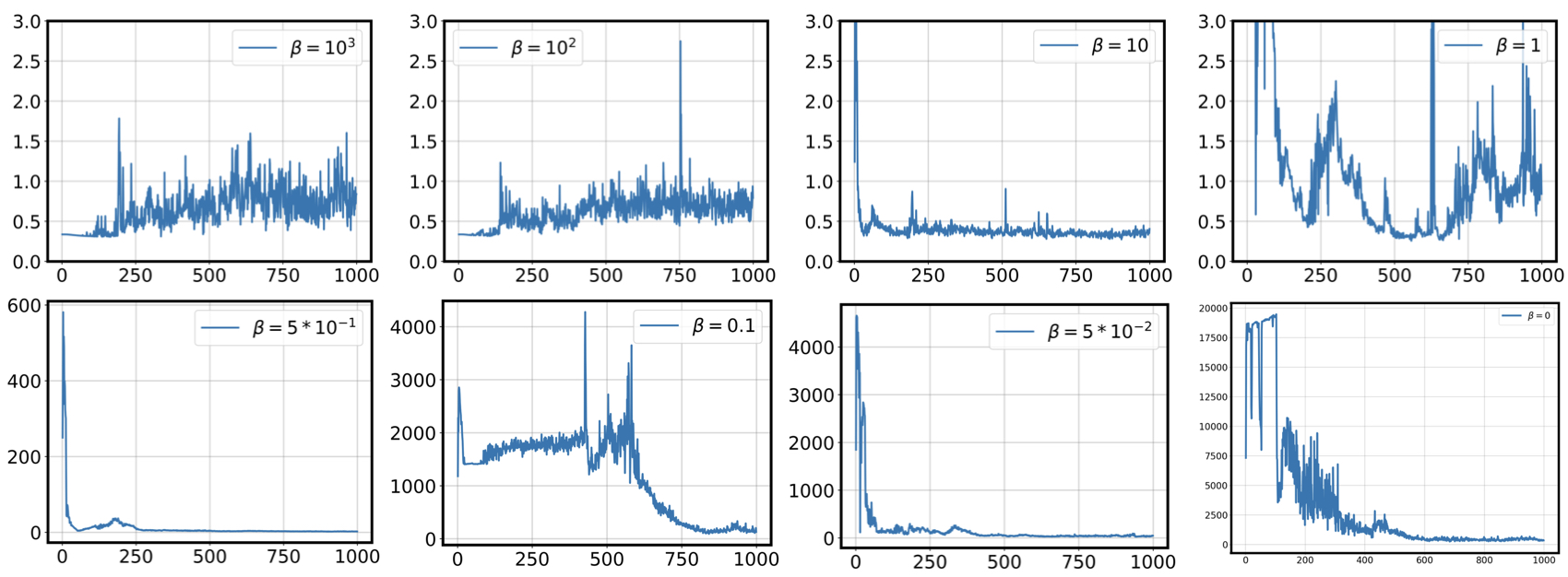}
\caption{Average absolute sampled TD error $|\delta_t|$, defined in~\eqref{eq:td_error_approximate}, with different choices of the hyper-parameter $\beta$ in the empirical heuristic budget $R_t^{\mathrm{emp}}$. The shaded area depicts the range of standard deviations for $5$ random tests.}
\label{fig:td_error}
\end{figure}

\subsubsection{Case Studies}
\label{app:experimental_results}

\paragraph{Impact of Hyper-Parameter $\beta$.} 

With the basic settings described above, we study how the hyper-parameter \(\beta\) affects the empirical heuristic budget $R_t^{\mathrm{emp}}$. This choice influences the average rewards, projection radii, and absolute sampled TD error reported below.

We set the unknown $(y_t:t\in [\nt])$  to be tracked as a rose-shaped trajectory shown in Figure~\ref{fig:trajectory}:
\begin{align}
\label{eq:trajectory}
y_t\coloneqq \begin{bmatrix}
     2\cos\left(\frac{t}{20}\right) \sin\left( \frac{t}{5}\right)\\
    2\sin\left(\frac{t}{20}\right)\sin\left( \frac{t}{5}\right)
    \end{bmatrix}, \quad t\in [\nt].
\end{align}

We vary $\beta$ from \(0\) to \(\infty\). When \(\beta=0\), the empirical implementation is the same as pure DDPG. When \(\beta=\infty\), we directly apply the MPC baseline. Arbitrary exploration in the action space will lead to unstable states, causing the pure DDPG to remain non-convergent throughout its training process. From our experiments, we observe that setting \(\beta\) between \(5\) and \(25\) yields the largest average reward. The results are summarized in Figure~\ref{fig:beta}. Under the projection rule, the action used in the experiments at each time $t\in [\nt]$ can be written as
\begin{align*}
    u_t = \lambda\left(R_t^{\mathrm{emp}}\right)\widetilde{\pi}_t\left(x_t\right)+\left(1-\lambda\left(R_t^{\mathrm{emp}}\right)\right)\overline\pi_t\left(x_t\right)
\end{align*}
where $\lambda\left(R_t^{\mathrm{emp}}\right)\coloneqq \min\left\{1,{R_t^{\mathrm{emp}}}/{\left\|\widetilde{\pi}_t\left(x_t\right)-\overline\pi_t\left(x_t\right)\right\|_{\mathcal{U}}}\right\}$ serves as a \textit{trust coefficient} between $0$ and $1$. Figure~\ref{fig:lambda} illustrates $\lambda(R_t^{\mathrm{emp}})$ averaged over all time steps and tests. Figure~\ref{fig:td_error} displays the average absolute sampled TD error $|\delta_t|$ for different choices of $\beta$. This quantity visibly stabilizes when $\beta=10$, which also yields high average rewards in Figure~\ref{fig:beta}.
It's worth noting, however, that we did not actively optimize for $\beta$.

\begin{figure}[h]
\centering
\includegraphics[width=1\textwidth]{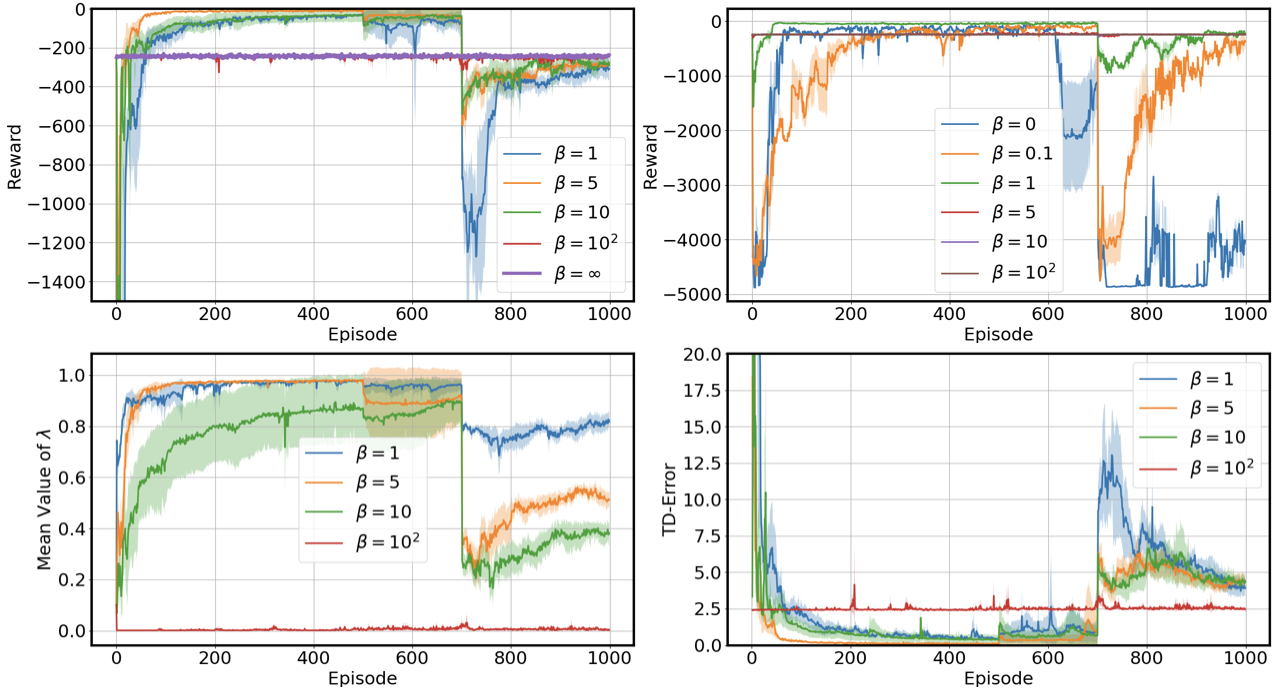}
\caption{Stability against distribution shifts for different choices of $\beta$ in the empirical heuristic budget $R_t^{\mathrm{emp}}$.}
\label{fig:nonstationary}
\end{figure}

\paragraph{Non-Stationary Environment}

In a subsequent experiment, we address scenarios where there is a distribution shift in the underlying MDP. We use the same matrices $A,B,Q$, and $R$ in~\eqref{eq:A_and_B} and~\eqref{eq:Q_and_R}.

For each $w_t$ in~\eqref{eq:canonical2}, we treat it as an independent Gaussian vector. Specifically, every entry $w_t (i)$ of $w_t$ is considered as an independent Gaussian random variable.  For the first $700$ episodes, each $w_t (i)$ is sampled from a normal distribution $\mathcal{N}(\mu,\sigma)$ where $\mu=0.5$ and $\sigma=0.05$. However, in the last $300$ episodes, we adjust $\mu$ to $-0.5$.

In the context of this nonstationary MDP, Figure~\ref{fig:nonstationary} illustrates the reward recovery after the occurrence of a distribution shift for varying choices of $\beta$ in the empirical heuristic budget $R_t^{\mathrm{emp}}$.
The two top figures show the average rewards. The top left figure uses the action space \(\mathcal{U}=[-100,100]\), while the top right uses \(\mathcal{U}=[-5,5]\). The bottom left figure presents the average behavior of \(\lambda(R_t^{\mathrm{emp}})\), and the bottom right presents the average absolute sampled TD error for \(\mathcal{U}=[-100,100]\). With \(\beta=10\) for \(\mathcal{U}=[-100,100]\) and \(\beta=1\) for \(\mathcal{U}=[-5,5]\), these tests show a favorable empirical balance between reward before the shift and recovery afterward. These observations concern the empirical heuristic and should not be interpreted as validating the corrected Grey Box theorem. As in the first set of experiments, we did not explicitly tune \(\beta\).

\newpage

\section{Useful Lemmas}

In this appendix, we present results that will be used when proving our main theorems.

\subsection{Perturbation Lemma}

We first prove the following perturbation lemma as a robustness guarantee, which holds for both the black-box (Section~\ref{sec:black_implementation}) and grey-box (Section~\ref{sec:gray_implementation}) settings.
\begin{lemma}[Perturbation Lemma]
\label{lemma:robustness}
Under Assumption~\ref{ass:Lip} and~\ref{ass:robustness},
the dynamic regret of
Algorithm~\eqref{alg:ppp} (denoted by \ouralg) can be bounded by
$\mathsf{DR}(\ouralg) \leq (\mathsf{ROB}-1)J^\star + L_{\mathrm{C}} C_s\sum_{t\in [\nt]} \mathbb{E}\left[\|u_t-\overline\pi_t(x_t)\|_{\mathcal U}^p\right]^{1/p}$
for constants $L_{\mathrm{C}}, C_s>0$.
\end{lemma}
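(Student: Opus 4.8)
The plan is to view \ouralg as a bounded perturbation of the robust baseline $\overline{\pi}$ and to charge the extra cost to the per-step projection radii. I would start from the decomposition
\[
\mathsf{DR}(\ouralg) = \big(J(\ouralg) - J(\overline{\pi})\big) + \big(J(\overline{\pi}) - J^{\star}\big),
\]
and handle the two brackets separately. The second bracket is pure baseline suboptimality: since $\mathsf{RoE}(\overline{\pi}) = J(\overline{\pi})/J^{\star} \leq \mathsf{ROB}$, we get $J(\overline{\pi}) - J^{\star} \leq (\mathsf{ROB}-1)J^{\star}$, and because the stage costs are bounded under Assumption~\ref{ass:Lip} (so that $J^{\star} \leq \bar{c}\,\nt$ for a uniform cost bound $\bar{c}$), this is $\mathcal{O}\big((\mathsf{ROB}-1)\nt\big)$. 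All the real work is in bounding the first bracket by $L_{\mathrm{C}} C_s \sum_{t} \mathbb{E}[R_t^p]^{1/p}$.

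For the first bracket I would run a hybrid (one-step deviation) argument. Define for each $k$ the policy $\pi^{(k)}$ that follows \ouralg on steps $0,\dots,k-1$ and the baseline $\overline{\pi}$ on steps $k,\dots,\nt-1$, so that $\pi^{(0)} = \overline{\pi}$, $\pi^{(\nt)} = \ouralg$, and
\[
J(\ouralg) - J(\overline{\pi}) = \sum_{k=0}^{\nt-1}\big(J(\pi^{(k+1)}) - J(\pi^{(k)})\big).
\]
Policies $\pi^{(k)}$ and $\pi^{(k+1)}$ agree on every step except step $k$, where one plays the baseline action and the other the \ouralg action; hence they share the same state distribution $\phi_k$ entering step $k$ (the \ouralg state distribution at time $k$), and thereafter both run the baseline. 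Let $\nu_k$ (resp.\ $\mu_k$) be the state-action distribution at step $k$ pairing $\phi_k$ with the baseline action $\overline{\pi}_k(x_k)$ (resp.\ the \ouralg action $u_k = \mathrm{Proj}_{\overline{\mathcal{U}}_k}(\widetilde{u}_k)$). Coupling the two by the identity on the shared state, and using that the projection forces $\|u_k - \overline{\pi}_k(x_k)\|_{\mathcal{U}} \leq R_k$, yields $W_p(\mu_k,\nu_k) \leq \mathbb{E}[R_k^p]^{1/p}$.

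The remaining step propagates this single-step perturbation forward. For every $t \geq k$ the cost gap satisfies $\mathbb{E}_{\rho_{k:t}(\mu_k)}[c_t] - \mathbb{E}_{\rho_{k:t}(\nu_k)}[c_t] \leq L_{\mathrm{C}}\, W_p(\rho_{k:t}(\mu_k),\rho_{k:t}(\nu_k))$ by Kantorovich--Rubinstein duality together with the $L_{\mathrm{C}}$-Lipschitzness of $c_t$ and the inequality $W_1 \leq W_p$. Since the radii produced by either procedure obey $R_k \leq \eta_k \leq \gamma$, the local-radius precondition $W_p(\mu_k,\nu_k) \leq \gamma = r$ holds, so Assumption~\ref{ass:robustness} (Definition~\ref{def:robust}) applies and gives $W_p(\rho_{k:t}(\mu_k),\rho_{k:t}(\nu_k)) \leq s(t-k)\,W_p(\mu_k,\nu_k)$. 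Summing over $t$ and using $\sum_{j} s(j) \leq C_s$ bounds each telescoping term by $L_{\mathrm{C}} C_s\, \mathbb{E}[R_k^p]^{1/p}$; summing over $k$ completes the bound on the first bracket and hence the lemma.

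The main obstacle, and the step I would be most careful about, is the perturbation propagation: correctly matching the hybrid construction to the notation $\rho_{t_1:t_2}(\cdot)$ of Definition~\ref{def:robust} (in particular that $\mu_k,\nu_k$ carry the \emph{same} state marginal, so the entire perturbation lives in the action coordinate), verifying the local-radius precondition $W_p(\mu_k,\nu_k)\leq r$ so that Wasserstein robustness is genuinely invocable, and justifying the interplay of expectation, coupling, and the $L_{p,\rho}$/Wasserstein estimates when $R_k$ is itself a random, trajectory-dependent quantity. By contrast, the conversion from expected cost differences to $W_p$ via $L_{\mathrm{C}}$-Lipschitz test functions and $W_1 \leq W_p$ is routine once the coupling is set up.
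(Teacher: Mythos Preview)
Your proposal is correct and follows essentially the same approach as the paper: both arguments telescope through hybrid policies that switch from \ouralg to $\overline{\pi}$ at a single step, bound the one-step perturbation by $\mathbb{E}[R_k^p]^{1/p}$ via the identity coupling on the shared state marginal, propagate it forward using the $s(\cdot)$ factor from Wasserstein robustness, and convert to cost via Kantorovich--Rubinstein and $W_1\leq W_p$. The only cosmetic difference is ordering---the paper telescopes $W_p(\rho_t,\overline{\rho}_t)$ at each fixed $t$ and then sums over $t$, whereas you telescope $J(\ouralg)-J(\overline{\pi})$ directly and sum forward over $t\geq k$ for each hybrid step $k$; these are the same double sum.
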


\begin{proof}

Our proof consists of two parts. We first bound the Wasserstein distance between joint action-state distributions for the robust baseline and \ouralg. Next, we bound the dynamic regret.

\textbf{Step 1. Wasserstein Distance between Joint Action-State Distributions.}
Denote by $\pi$ and $\overline{\pi}$ the \ouralg policy and the robust baseline. Enlarge $s(0)$ to $\max\{s(0),1\}$ if necessary and adjust $C_s$ accordingly. Let $\rho_t$ and $\overline\rho_t$ be their state-action laws at time $t$. For $k\in\{-1,0,\ldots,t\}$, define the hybrid law $\rho_t^{(k)}$ by using $\pi$ through time $k$ and $\overline\pi$ afterward. Thus $\rho_t^{(-1)}=\overline\rho_t$ and $\rho_t^{(t)}=\rho_t$. The triangle inequality gives
\begin{equation}
\label{eq:norm_bound}
W_p(\rho_t,\overline\rho_t)
\leq\sum_{k=0}^t W_p\left(\rho_t^{(k)},\rho_t^{(k-1)}\right).
\end{equation}
The two hybrids in the $k$th term have the same state law at time $k$ and differ only in the action applied there.  Coupling them through the common state gives
\begin{equation}
\label{eq:hybrid_input}
W_p\left(\rho_k^{(k)},\rho_k^{(k-1)}\right)
\leq
\left(\mathbb E_{P,\pi}\left[\|u_k-\overline\pi_k(x_k)\|_{\mathcal U}^p\right]\right)^{1/p}
\leq\gamma.
\end{equation}
Propagating both coupled laws with the baseline from time $k$ to time $t$ and applying Definition~\ref{def:robust} yields
\begin{equation}
\label{eq:hybrid_propagation}
W_p\left(\rho_t^{(k)},\rho_t^{(k-1)}\right)
\leq s(t-k)\left(\mathbb E_{P,\pi}\left[\|u_k-\overline\pi_k(x_k)\|_{\mathcal U}^p\right]\right)^{1/p}.
\end{equation}
Combining~\eqref{eq:norm_bound} and~\eqref{eq:hybrid_propagation},
\begin{align}
\label{eq:action}
{W}_p\left(\rho_t,\overline{\rho}_t\right) \leq \sum_{\tau=0}^{t}s(\tau) \mathbb{E}_{P,\pi}\left[\|u_{t-\tau}-\overline\pi_{t-\tau}(x_{t-\tau})\|_{\mathcal U}^p\right]^{1/p}.
\end{align}


\textbf{Step 2. Dynamic Regret Analysis.}
Since the cost functions $(c_t:t\in [\nt])$  are Lipschitz continuous with a Lipschitz constant $L_{\mathrm{C}}$, using the Kantorovich-Rubinstein duality theorem~\cite{kantorovitch1958translocation}, since $ {W}_p(\mu,\nu)\leq  {W}_q(\mu,\nu)$ for all $1\leq p\leq q<\infty$, for all $t\in [\nt]$,
\begin{align}
\nonumber
&\mathbb{E}_{(x,u)\sim \rho_{t}}\left[c_t\left(x,u\right)\right]-\mathbb{E}_{(x,u)\sim \overline{\rho}_{t}}\left[c_t\left(x,u\right)\right] \\
\label{eq:kantorovich}
\leq & \sup_{\|f\|_{L}\leq L_{\mathrm{C}}}\mathbb{E}_{(x,u)\sim \rho_{t}}\left[f\left(x,u\right)\right]-\mathbb{E}_{(x,u)\sim \overline{\rho}_{t}}\left[f\left(x,u\right)\right]
\leq   L_{\mathrm{C}} W_p\left(\rho_{t},\overline{\rho}_{t}\right),
\end{align}
where $\|\cdot\|_L$ denotes the Lipschitz semi-norm and the supremum is over all Lipschitz 
 continuous functions $f$ with a Lipschitz constant $ L_{\mathrm{C}} $.
Therefore,
the difference between the expected cost of Algorithm~\ref{alg:ppp}, denoted by $\pi$, and the baseline policy $\overline{\pi}$ satisfies
\begin{align}
\nonumber
J(\pi) - J(\overline{\pi}) & =  \sum_{t\in [\nt]}\left( \mathbb{E}_{(x,u)\sim \rho_{t}}\left[c_t\left(x,u\right)\right]-\mathbb{E}_{(x,u)\sim \overline{\rho}_{t}}\left[c_t\left(x,u\right)\right]\right) \\
\nonumber
    & \leq L_{\mathrm{C}} \sum_{t\in [\nt]} \sum_{\tau=0}^{t}s(\tau) \mathbb{E}_{P,\pi}\left[\|u_{t-\tau}-\overline\pi_{t-\tau}(x_{t-\tau})\|_{\mathcal U}^p\right]^{1/p}\\
    \label{eq: final_bound_robustness}
    & \leq L_{\mathrm{C}} C_{s}\sum_{t\in [\nt]} \mathbb{E}_{P,\pi}\left[\|u_t-\overline\pi_t(x_t)\|_{\mathcal U}^p\right]^{1/p}
\end{align}
where we have used the  assumption of the $r$-locally robustness policy so that $ \sum_{t\in [\nt]}s(t)\leq C_{s}$ for some constant $C_{s} >0$. Moreover, since the robust baseline $\overline{\pi}$ has a ratio of expectations bound such that
$\frac{J(\overline{\pi})}{J^{\star}}\leq  \mathsf{ROB}.
$ Using Assumption~\ref{ass:Lip}, from~\eqref{eq: final_bound_robustness},
we obtain
\begin{align*}
 \mathsf{DR}(\ouralg)\coloneqq J(\pi) - J^{\star} \leq 
 (\mathsf{ROB}-1)J^\star + L_{\mathrm{C}} C_{s}\sum_{t\in [\nt]}\mathbb{E}_{P,\pi}\left[\|u_t-\overline\pi_t(x_t)\|_{\mathcal U}^p\right]^{1/p}.
\end{align*}
The projection identity also gives
\[
\|u_t-\overline\pi_t(x_t)\|_{\mathcal U}
=\min\{R_t,\eta_t\}\leq\eta_t\leq\gamma.
\]
\end{proof}

\subsection{Projection Lemma}
\label{app:projection_lemma}

The following lemma implies a useful consistency bound. It is worth noting that the lemma also holds if \ouralg adopts an alternative approach instead of projecting the actions as shown in~\eqref{eq:projection_definition}:
\begin{align*}
u_t \in \argmin_{v\in\mathsf{U}} \widetilde{Q}_t\left(x_t,v\right)\
\text{subject to } \left\|\overline{u}_t-v\right\|\leq R_t
\end{align*}
Implementing the projection rule in \ouralg can significantly reduce computational complexity, particularly when dealing with non-convex Q-advice.

\begin{lemma}[Projection Lemma]
\label{lemma:consistency}
Under Assumption~\ref{ass:Q-Lip}, the actions and states $(x_t,u_t)$ at $t\in [\nt]$ and $t\in [\nt]$ generated by \ouralg (Algorithm~\ref{alg:ppp}) satisfy
\begin{align}
\label{eq:consistency_perstep}
Q_{t}^{\star}\left(x_t,u_t\right)-\inf_{v\in\mathcal{U}} Q_{t}^{\star}\left(x_t,v\right)\leq L_Q \left([\eta_t\left(x_t\right)-R_t]^{+}\right)+  \mu_t\left(x_t,u_t\right)
\end{align}
where $\eta_t\left(x\right)\coloneqq \left\|\widetilde{\pi}_t\left(x\right)- \overline{\pi}_t\left(x\right)\right\|_{\mathcal{U}}$, and $Q^{\star}$ denotes the optimal {Q}-value functions satisfying the Bellman optimality equations in~\eqref{eq:bellman_optimal}.
\end{lemma}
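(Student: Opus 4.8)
The plan is to decompose the per-step suboptimality gap $Q_t^\star(x_t,u_t)-\inf_{v\in\mathcal{U}} Q_t^\star(x_t,v)$ by inserting the estimated Q-function $\widetilde Q_t$ as an intermediary, and then to control the single remaining ``projection gap'' using the Lipschitz continuity of $\widetilde Q_t$ (Assumption~\ref{ass:Q-Lip}) together with a geometric bound on how far the projection in~\eqref{eq:projection_definition} displaces the advice. Throughout I write $\widetilde u_t = \widetilde\pi_t(x_t) \in \arginf_{v\in\mathcal{U}}\widetilde Q_t(x_t,v)$ and $\overline u_t = \overline\pi_t(x_t)$, so that $\eta_t(x_t) = \|\widetilde u_t - \overline u_t\|_{\mathcal{U}}$.

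First I would establish the key geometric fact that the projection moves the advice by at most $[\eta_t(x_t)-R_t]^+$, i.e., $\|u_t - \widetilde u_t\|_{\mathcal{U}} \le [\eta_t(x_t)-R_t]^+$. If $\eta_t(x_t)\le R_t$, the advice already lies in the ball $\overline{\mathcal{U}}_t$, so the projection returns $u_t=\widetilde u_t$ and the displacement is zero. If $\eta_t(x_t) > R_t$, I would exhibit the explicit feasible competitor $\overline u_t + R_t(\widetilde u_t - \overline u_t)/\|\widetilde u_t - \overline u_t\|_{\mathcal{U}}$, which lies in $\mathcal{U}$ by convexity (being a convex combination of $\overline u_t,\widetilde u_t\in\mathcal{U}$) and satisfies the radius constraint with equality; its distance to $\widetilde u_t$ equals $\eta_t(x_t)-R_t$. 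Since $u_t$ minimizes the distance to $\widetilde u_t$ over the feasible set, $\|u_t-\widetilde u_t\|_{\mathcal{U}}\le \eta_t(x_t)-R_t$, and combining the two cases gives the claimed bound.

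Next I would decompose the gap. Using that $\widetilde u_t$ minimizes $\widetilde Q_t(x_t,\cdot)$, so $\widetilde Q_t(x_t,\widetilde u_t)=\inf_{v\in\mathcal{U}}\widetilde Q_t(x_t,v)$, I write
\begin{align*}
Q_t^\star(x_t,u_t) - \inf_{v\in\mathcal{U}} Q_t^\star(x_t,v) = &\bigl(\widetilde Q_t(x_t,u_t) - \widetilde Q_t(x_t,\widetilde u_t)\bigr) \\
&+ \bigl(Q_t^\star(x_t,u_t) - \widetilde Q_t(x_t,u_t)\bigr) + \bigl(\inf_{v\in\mathcal{U}} \widetilde Q_t(x_t,v) - \inf_{v\in\mathcal{U}} Q_t^\star(x_t,v)\bigr).
\end{align*}
The last two parenthesized terms are exactly the pointwise Q-advice error $\mu_t(x_t,u_t)$, the per-step analogue of the two summands in the consistency definition~\eqref{eq:def_epsilon}. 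For the first parenthesis, the Lipschitz continuity of $\widetilde Q_t$ in $u$ (Assumption~\ref{ass:Q-Lip}) gives $\widetilde Q_t(x_t,u_t)-\widetilde Q_t(x_t,\widetilde u_t)\le L_Q\|u_t-\widetilde u_t\|_{\mathcal{U}}$, which by the first step is at most $L_Q[\eta_t(x_t)-R_t]^+$. Collecting these terms yields~\eqref{eq:consistency_perstep}.

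I expect the only subtle step to be the projection-distance bound for a general norm $\|\cdot\|_{\mathcal{U}}$, since the nearest-point projection onto a norm ball need not admit a closed form. The crux is that one does not need the exact projection: exhibiting a single feasible competitor on the segment between $\overline u_t$ and $\widetilde u_t$ (feasible precisely because $\mathcal{U}$ is convex and the ball is centered at $\overline u_t$) already upper-bounds the minimal displacement by $\eta_t(x_t)-R_t$, which is all the Lipschitz argument requires. The tie-breaking convention in~\eqref{eq:projection_definition} is consistent with this competitor but is not needed for the inequality itself.
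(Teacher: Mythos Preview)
Your proposal is correct and follows essentially the same route as the paper's proof: both insert $\widetilde{Q}_t$ to split the gap into a Lipschitz-controlled projection term plus $\mu_t=\zeta_t^V-\zeta_t^Q$, and both bound $\|u_t-\widetilde u_t\|_{\mathcal{U}}\le[\eta_t(x_t)-R_t]^+$. The only cosmetic difference is that the paper derives the displacement bound by writing $u_t$ as the explicit convex combination $\lambda(R_t)\widetilde u_t+(1-\lambda(R_t))\overline u_t$ (using the tie-breaking rule), whereas you obtain the same inequality via a competitor argument that does not require tie-breaking.
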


\begin{proof}
Choose $\widetilde u_t\in\argmin_{v\in\mathcal U}\widetilde Q_t(x_t,v)$ and define
\[
\mu_t(x,u)
\coloneqq
\left(\inf_{v\in\mathcal U}\widetilde Q_t(x,v)
-\inf_{v\in\mathcal U}Q_t^\star(x,v)\right)
-\left(\widetilde Q_t(x,u)-Q_t^\star(x,u)\right).
\]
If $R_t<\eta_t$, convexity makes the radial point at distance $R_t$ from $\overline\pi_t(x_t)$ feasible, and the reverse triangle inequality shows that it minimizes the distance to $\widetilde u_t$. The tie-breaking rule in~\eqref{eq:projection_definition} selects this point. If $R_t\geq\eta_t$, the advised action is feasible. Therefore
\[
\|u_t-\widetilde u_t\|_{\mathcal U}
=[\eta_t-R_t]^+.
\]
Lipschitz continuity of $\widetilde Q_t$ now gives
\begin{align*}
Q_t^\star(x_t,u_t)-\inf_{v\in\mathcal U}Q_t^\star(x_t,v)
&=\widetilde Q_t(x_t,u_t)-\inf_{v\in\mathcal U}\widetilde Q_t(x_t,v)+\mu_t(x_t,u_t)\\
&\leq\widetilde Q_t(x_t,\widetilde u_t)-\inf_{v\in\mathcal U}\widetilde Q_t(x_t,v)
+L_Q[\eta_t-R_t]^++\mu_t(x_t,u_t)\\
&=L_Q[\eta_t-R_t]^++\mu_t(x_t,u_t),
\end{align*}
which proves the claim.
\end{proof}

\subsection{Analysis of Approximate TD-Error}

The following result that rewrites the approximate TD-error (c.f.~\eqref{eq:td_error_approximate}) is useful.

\begin{lemma}
\label{lemma:td_error}
    Consider the approximate TD-error in~\eqref{eq:td_error_approximate} and assume that the displayed variables below are integrable. Suppose
    \begin{align*}
        \delta_t\left(u_{t-1},x_{t-1},x_{t}\right) \coloneqq & c_{t-1}\left(x_{t-1},u_{t-1}\right) + \inf_{v\in\mathcal{U}}\widetilde{Q}_{t}\left(x_{t},v\right) - \widetilde{Q}_{t-1}\left(x_{t-1},u_{t-1}\right).
    \end{align*}
    It follows that for any $1\leq t\leq\nt-1$,
    \begin{align*}
\mathbb{E}_{P,\pi}\left[\delta_t\left(u_{t-1},x_{t-1},x_{t}\right)\right] = \mathbb{E}_{P,\pi}\left[\zeta_{t}^{V}(x_t) - \zeta_{t-1}^{Q}(x_{t-1},u_{t-1})\right],
\end{align*}
where $\zeta_{t}^{Q}$ and $\zeta_{t}^{V}$ are defined as
\begin{align*}
\zeta_{t}^{Q}\left(x_t,u_t\right) &\coloneqq \widetilde{Q}_{t}\left(x_t,u_t\right)-Q_{t}^{\star}\left(x_t,u_t\right),\\
\zeta_{t}^{V}\left(x_t\right) &\coloneqq \inf_{v\in\mathcal{U}}\widetilde{Q}_{t}\left(x_t,v\right)-\inf_{v\in\mathcal{U}}Q_{t}^{\star}\left(x_t,v\right).
\end{align*}
\end{lemma}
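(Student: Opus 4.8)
The plan is to establish the identity by algebraically decomposing $\delta_t$ using the definitions of $\zeta_t^{V}$ and $\zeta_{t-1}^{Q}$, thereby isolating a residual term, and then to show that this residual has zero expectation under $\mathbb{E}_{P,\pi}$ via the Bellman optimality equation~\eqref{eq:bellman_optimal} combined with the tower property of conditional expectation.

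First I would rewrite the two $\widetilde{Q}$-terms in the definition~\eqref{eq:td_error_approximate} of $\delta_t$ in terms of their optimal counterparts. By definition of $\zeta_t^{V}$, and using $V_t^{\star}(x)=\inf_{v\in\mathcal{U}}Q_t^{\star}(x,v)$, we have $\inf_{v\in\mathcal{U}}\widetilde{Q}_{t}(x_t,v)=\zeta_t^{V}(x_t)+V_t^{\star}(x_t)$; similarly $\widetilde{Q}_{t-1}(x_{t-1},u_{t-1})=\zeta_{t-1}^{Q}(x_{t-1},u_{t-1})+Q_{t-1}^{\star}(x_{t-1},u_{t-1})$. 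Substituting both into $\delta_t$ and rearranging yields the pointwise decomposition
\[\delta_t = \zeta_t^{V}(x_t)-\zeta_{t-1}^{Q}(x_{t-1},u_{t-1}) + \Big(c_{t-1}(x_{t-1},u_{t-1})+V_t^{\star}(x_t)-Q_{t-1}^{\star}(x_{t-1},u_{t-1})\Big).\]
It therefore suffices to show that the parenthesized residual vanishes in expectation.

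To do so, I would condition on the pair $(x_{t-1},u_{t-1})$ and use that under $\mathbb{E}_{P,\pi}$ the next state satisfies $x_t\sim P_{t-1}(\cdot\mid x_{t-1},u_{t-1})$. The Bellman optimality equation~\eqref{eq:bellman_optimal} gives $Q_{t-1}^{\star}(x,u)=c_{t-1}(x,u)+(\mathbb{P}_{t-1}V_t^{\star})(x,u)=c_{t-1}(x,u)+\mathbb{E}_{x'\sim P_{t-1}(\cdot\mid x,u)}[V_t^{\star}(x')]$. Hence the conditional expectation of the residual equals $c_{t-1}(x_{t-1},u_{t-1})+\mathbb{E}[V_t^{\star}(x_t)\mid x_{t-1},u_{t-1}]-Q_{t-1}^{\star}(x_{t-1},u_{t-1})=0$. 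Applying the tower property to take the outer expectation collapses the residual and delivers the claimed identity.

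I do not expect a genuine obstacle here: the argument is a single Bellman substitution followed by the law of iterated expectations, and the two error definitions are deliberately arranged so that the $Q^{\star}$ and $V^{\star}$ terms cancel against the transition expectation. The only points requiring care are the alignment of conventions---namely that $\inf_{v}\widetilde{Q}_t$ must be paired with $\zeta_t^{V}$ (via $V_t^{\star}$) rather than with $\zeta_t^{Q}$---and the base-case convention $\zeta_{-1}^{Q}=0$, which is harmless because the robustness-budget sum in~\eqref{eq:budget} begins at $s=1$ and so $\delta_0$ is never actually invoked.
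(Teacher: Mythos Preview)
Your proposal is correct and essentially identical to the paper's proof: both substitute the Bellman optimality equation~\eqref{eq:bellman_optimal} and invoke the tower property to replace $\mathbb{E}_{P,\pi}[V_t^{\star}(x_t)]$ by $\mathbb{E}_{P,\pi}[(\mathbb{P}_{t-1}V_t^{\star})(x_{t-1},u_{t-1})]$, the only cosmetic difference being that the paper substitutes Bellman first and then identifies the $\zeta$-terms, whereas you isolate the $\zeta$-terms first and then kill the residual.
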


\begin{proof}
Let $\widetilde V_t(x)=\inf_{v\in\mathcal U}\widetilde Q_t(x,v)$ and $V_t^\star(x)=\inf_{v\in\mathcal U}Q_t^\star(x,v)$. For $1\leq t\leq\nt-1$, let $\mathcal F_{t-1}$ contain the history, advice, and policy randomness through the selected pair $(x_{t-1},u_{t-1})$, immediately before $x_t$ is sampled, and define
\begin{equation}
\label{eq:bellman_innovation}
\xi_t\coloneqq V_t^\star(x_t)
-(\mathbb P_{t-1}V_t^\star)(x_{t-1},u_{t-1}).
\end{equation}
The Bellman equation gives
\[
Q_{t-1}^\star(x_{t-1},u_{t-1})
=c_{t-1}(x_{t-1},u_{t-1})
+(\mathbb P_{t-1}V_t^\star)(x_{t-1},u_{t-1}).
\]
Adding and subtracting $V_t^\star(x_t)$ and $Q_{t-1}^\star(x_{t-1},u_{t-1})$ in the definition of $\delta_t$ gives the pathwise identity
\begin{equation}
\label{eq:pathwise_td}
\delta_t=\zeta_t^V-\zeta_{t-1}^Q+\xi_t.
\end{equation}
Assuming integrability, since $x_t\sim P_{t-1}(\cdot\mid x_{t-1},u_{t-1})$, $\mathbb E[\xi_t\mid\mathcal F_{t-1}]=0$. Taking expectations proves the stated identity. If $\mu_t=\zeta_t^V-\zeta_t^Q$ and $M_t=\sum_{s=1}^t\xi_s$, then
\begin{align}
\label{eq:correct_telescope}
\sum_{s=1}^t(\mu_s-\delta_s)
&=\zeta_0^Q-\zeta_t^Q-M_t,\\
\label{eq:causal_td}
\sum_{s=1}^t\delta_s
&=\sum_{j=0}^{t-1}\mu_j+\zeta_t^V-\zeta_0^V+M_t.
\end{align}
The first equality follows by summing $\mu_t-\delta_t=\zeta_{t-1}^Q-\zeta_t^Q-\xi_t$. The second is an equivalent rearrangement.

For the corrected statistic in~\eqref{eq:corrected_td}, define the realized optimal advantage and the advice gap by
\[
a_t\coloneqq Q_t^\star(x_t,u_t)-V_t^\star(x_t),
\qquad
g_t\coloneqq\widetilde Q_t(x_t,u_t)-\widetilde V_t(x_t).
\]
Since $\mu_t=a_t-g_t$, Equation~\eqref{eq:pathwise_td} gives
\begin{equation}
\label{eq:augmented_td_identity}
\widehat\delta_t
=a_{t-1}+\zeta_t^V-\zeta_{t-1}^V+\xi_t.
\end{equation}
Consequently, with $A_t\coloneqq\sum_{j=0}^{t-1}a_j$ and $G_t$ defined in~\eqref{eq:grey_evidence},
\begin{equation}
\label{eq:augmented_td_sum}
G_t=A_t+\zeta_t^V-\zeta_0^V+M_t.
\end{equation}
Thus $G_t$ tracks the actual accumulated optimal advantage, up to the two endpoint errors and the martingale innovation.
\end{proof}

Next, we present our analysis of the black-box setting by proving Theorem~\ref{thm:black_dr} and Theorem~\ref{thm:black_impossibility}.

\section{Black-Box Consistency and Robustness Analysis}
\label{app:black_box}

\subsection{Proof of Theorem~\ref{thm:black_dr}}
\label{app:proof_black_dr}

Consider an MDP model with Assumption~\ref{ass:Lip},\ref{ass:robustness}, and~\ref{ass:Q-Lip}. We prove the theorem below.

\begin{theorem}
\label{app:black_box_dr}
Suppose the machine-learned policy $\widetilde{\pi}$ is $(\infty,\varepsilon)$-consistent.
The expected dynamic regret of \ouralg with the \textsc{Black-Box} Procedure is bounded by
\begin{align}
\nonumber
    \mathsf{DR}(\ouralg)\leq\min\Big\{\mathcal{O}(\varepsilon)+ \mathcal{O}((1-\lambda)\gamma\nt),
    \mathcal{O}\left(\left(\mathsf{ROB}+ \lambda\gamma-1\right)\nt\right) \Big\}
\end{align}
where $\varepsilon$ is defined in~\eqref{eq:def_epsilon}, $\gamma$ is the diameter of the action space $\mathcal{U}$, $\nt$ is the length of the time horizon, $\mathsf{ROB}$ is the ratio of expectations of the robust baseline $\overline{\pi}$, and $0\leq\lambda\leq 1$ is a hyper-parameter.
\end{theorem}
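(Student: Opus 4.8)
The plan is to establish the two quantities inside the minimum as separate upper bounds on $\mathsf{DR}(\ouralg)$ and then take their minimum. The second bound, $\mathcal{O}\left(\left(\mathsf{ROB}+\lambda\gamma-1\right)\nt\right)$, follows almost directly from the Perturbation Lemma (Lemma~\ref{lemma:robustness}). Under the \textsc{Black-Box} Procedure we have $R_t=\lambda\eta_t$ with $\eta_t=\left\|\widetilde{u}_t-\overline{u}_t\right\|_{\mathcal{U}}$, and since both advice actions lie in $\mathcal{U}$ (whose diameter is $\gamma$) we get $\eta_t\leq 2\gamma$, hence $R_t\leq 2\lambda\gamma$ deterministically. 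Therefore each $\mathbb{E}_{P,\pi}\left[(R_t)^p\right]^{1/p}\leq 2\lambda\gamma=\mathcal{O}(\lambda\gamma)$, the cumulative projection term is $\mathcal{O}(\lambda\gamma\nt)$, and the Perturbation Lemma yields $\mathsf{DR}(\ouralg)\leq\mathcal{O}\left((\mathsf{ROB}-1)\nt\right)+\mathcal{O}(\lambda\gamma\nt)$, which is the second bound.

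The first bound, $\mathcal{O}(\varepsilon)+\mathcal{O}((1-\lambda)\gamma\nt)$, is where the substantive work lies, and I would begin by establishing a performance-difference identity that rewrites the dynamic regret as a sum of per-step {Q}-value suboptimalities,
\[
\mathsf{DR}(\ouralg) = \sum_{t\in[\nt]} \mathbb{E}_{(x_t,u_t)\sim\rho_t}\left[Q_t^{\star}(x_t,u_t) - \inf_{v\in\mathcal{U}} Q_t^{\star}(x_t,v)\right].
\]
To derive this I would telescope $\sum_{t}\left(V_{t+1}^{\star}(x_{t+1})-V_t^{\star}(x_t)\right)=V_{\nt}^{\star}(x_{\nt})-V_0^{\star}(x_0)$ along the trajectory induced by \ouralg, using the terminal condition $V_{\nt}^{\star}\equiv 0$, the fixed initial state $x_0$ (so $J^{\star}=V_0^{\star}(x_0)$), and the Bellman optimality equation $c_t+\mathbb{P}_t V_{t+1}^{\star}=Q_t^{\star}$ from~\eqref{eq:bellman_optimal} to replace each one-step term, together with $V_t^{\star}(x)=\inf_{v}Q_t^{\star}(x,v)$. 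The expectations must be taken over the policy-induced state-action distribution $\rho_t$ of \ouralg.

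With this identity in hand I would apply the Projection Lemma (Lemma~\ref{lemma:consistency}) to each summand. Under the \textsc{Black-Box} radius $R_t=\lambda\eta_t(x_t)$ the hinge term collapses to $[\eta_t(x_t)-R_t]^{+}=(1-\lambda)\eta_t(x_t)\leq 2(1-\lambda)\gamma$, so the cumulative contribution of $L_Q[\eta_t(x_t)-R_t]^{+}$ over $t\in[\nt]$ is $\mathcal{O}((1-\lambda)\gamma\nt)$. The residual error terms $\mu_t=\zeta_t^{V}-\zeta_t^{Q}$ are controlled by the $(\infty,\varepsilon)$-consistency assumption: since $|\zeta_t^{Q}|\leq\|\widetilde{Q}_t-Q_t^{\star}\|_{\infty}$ and, using that the infimum operator is nonexpansive in sup-norm, $|\zeta_t^{V}|\leq\|\widetilde{Q}_t-Q_t^{\star}\|_{\infty}$ as well, their total contribution is at most $2\sum_{t}\|\widetilde{Q}_t-Q_t^{\star}\|_{\infty}\leq 2\varepsilon=\mathcal{O}(\varepsilon)$. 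Summing the two pieces gives the first bound, and combining with the second bound via the minimum completes the proof.

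The main obstacle is the performance-difference identity: it must be set up carefully so that the expectations are taken over the correct policy-induced distribution $\rho_t$ rather than an arbitrary one, and so that the telescoping remains valid despite the time-varying dynamics and costs. Once that identity is secured, the Projection Lemma supplies the tight per-step control needed for the consistency-governed bound, the Perturbation Lemma supplies the robustness-governed bound, and the remaining manipulations are routine applications of the diameter bound $\eta_t\leq 2\gamma$ and the $(\infty,\varepsilon)$-consistency definition.
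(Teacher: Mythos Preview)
Your proposal is correct and follows essentially the same route as the paper: the robustness bound via the Perturbation Lemma with $R_t=\lambda\eta_t\leq\mathcal{O}(\lambda\gamma)$, and the consistency bound via the performance-difference identity (derived by telescoping the Bellman optimality equation along the \ouralg trajectory) followed by the Projection Lemma and the $(\infty,\varepsilon)$-consistency bound on $\mu_t$. The only cosmetic discrepancy is that you bound $\eta_t\leq 2\gamma$ via the triangle inequality whereas the paper writes $\eta_t\leq\gamma$ directly (treating $\gamma$ as the action-space diameter), but this constant is absorbed by the $\mathcal{O}(\cdot)$ notation anyway.
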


\textbf{Consistency Analysis.}
To show the first bound in Theorem~\ref{thm:black_dr} regarding the consistency result, we consider the following steps. For any $t\in [\nt]$, denote by $\left(x_t,u_t\right)$ the corresponding state and action generated by the projection pursuit policy \ouralg, denoted by $\pi$. The Bellman optimality equations~\eqref{eq:bellman_optimal} imply:
\begin{align}
\label{eq:bellman}
Q_{t}^{\star}\left(x_{t},u_{t}\right) = c_{t}\left(x_{t},u_{t}\right)+\mathbb{E}_{P}\left[\inf_{v\in\mathcal{U}} Q_{t+1}^{\star}\left(x_{h+1},v\right)\big | x_t,u_t \right].
\end{align}
Therefore the dynamic regret of the projection pursuit policy $\pi$ can be rewritten as
\begin{align}
\label{eq:proof_dynamic_1}
\mathsf{DR}(\ouralg)=& J(\pi)-J^{\star}
=  \left(\mathbb{E}_{P,\pi}\left[\sum_{t=0}^{\nt-1}c_t\left(x_t,u_t\right)\right] -\inf_{v\in\mathcal{U}}Q_{t,0}^{\star}\left(x_0,v\right)\right).
\end{align}
Combining~\eqref{eq:proof_dynamic_1} with~\eqref{eq:bellman}, we obtain the following cost-difference bound:
\begin{align*}
&\mathsf{DR}(\ouralg)\\
=& \sum_{t=0}^{\nt-1}\left(\mathbb{E}_{P,\pi}\left[Q_{t}^{\star}\left(x_t,u_t\right)\right]-\mathbb{E}_{P}\left[\inf_{v\in\mathcal{U}} Q_{t+1}^{\star}\left(x_{t+1},v\right)\right]\right)-\inf_{v\in\mathcal{U}}Q_{t,0}^{\star}\left(x_0,v\right)\\
=&Q_{t,0}^{\star}\left(x_0,u_0\right)-\inf_{v\in\mathcal{U}}Q_{t,0}^{\star}\left(x_0,v\right)+\sum_{t=1}^{\nt-1}\left(\mathbb{E}_{P,\pi}\left[Q_{t}^{\star}\left(x_t,u_t\right)\right]-\mathbb{E}_{P}\left[\inf_{v\in\mathcal{U}} Q_{t}^{\star}\left(x_t,v\right)\right]\right)\\
=&\sum_{t=0}^{\nt-1}\mathbb{E}_{P,\pi}\left[Q_{t}^{\star}\left(x_t,u_t\right)-\inf_{v\in\mathcal{U}} Q_{t}^{\star}\left(x_t,v\right)\right].
\end{align*}

Recall that for the \textsc{Black-Box} Procedure in Section~\ref{sec:black_implementation}, the robustness budget is set as $R_t=\lambda\eta_t$ for all $t\in [\nt]$.
Applying the bound in Lemma~\ref{lemma:consistency} gives the following consistency bound:
\begin{align*}
    \mathsf{DR}(\ouralg)=\mathcal{O}\left(\sum_{t=0}^{\nt-1}\left([\eta_t\left(x_t\right)-R_t]^{+}\right)+  \mathbb{E}_{P,\pi}\left[\mu_t\right]\right) \leq \mathcal{O}\left(\sum_{t=0}^{\nt-1}\mathbb{E}_{P,\pi}\left[\mu_t\right]\right) + \mathcal{O}\left((1-\lambda)\gamma\nt\right)
\end{align*}
since $\eta_t(x_t)\leq \gamma$ for all $t\in [\nt]$, and 
\begin{align*}
    \mu_t = \zeta_{t}^{V}-\zeta_{t}^{Q} = & \widetilde{Q}_{t}(x_t,\widetilde{u}_{t})-Q_{t}^{\star}(x_t,u_{t}^{\star}) - \left(\widetilde{Q}_{t}(x_t,u_t)-Q_{t}^{\star}(x_t,u_t)\right)\\
    \leq & \Big\|\inf_{v\in\mathcal{U}}\widetilde{Q}_{t}(x_t,v)-\inf_{v\in\mathcal{U}}Q_{t}^{\star}(x_t,v)\Big\|_{\infty} + \Big\|\widetilde{Q}_{t}(x_t,u_t)-Q_{t}^{\star}(x_t,u_t)\Big\|_{\infty}.
\end{align*}
Noting that the machine-learned policy $\widetilde{\pi}$ is $(\infty,\varepsilon)$-consistent, we obtain $\sum_{t=0}^{\nt-1}\mathbb{E}_{P,\pi}\left[\mu_t\right]\leq \varepsilon$. Hence, 
\begin{align}
    \label{eq:black_dr_1}
    \mathsf{DR}(\ouralg)=\mathcal{O}\left(\varepsilon\right) + \mathcal{O}\left((1-\lambda)\gamma\nt\right).
\end{align}

\textbf{Robustness Analysis.} Note that for any $t\in [\nt]$, $\eta_t(x_t)\leq \gamma$, where $\gamma$ is the diameter of the compact action space $\mathcal{U}$. Hence, noting the black-box setting of the robustness budget $R_t=\lambda\eta_t$ for all $t\in [\nt]$ and applying Lemma~\ref{lemma:robustness}, the sum of expected discrepancies, over all $t$ can be bounded by 
\begin{align}
\nonumber
    \mathsf{DR}(\ouralg)\leq &
\mathcal{O}\left((\mathsf{ROB}-1)\nt\right) + L_{\mathrm{C}} C_{s}\sum_{t\in [\nt]}\mathbb{E}_{P,\pi}\left[\left(\lambda\gamma\right)^p\right]^{1/p}\\
 \label{eq:black_dr_2}
 \leq &\mathcal{O}\left((\mathsf{ROB}+\lambda\gamma-1)\nt\right).
\end{align}
Combining~\eqref{eq:black_dr_1} and~\eqref{eq:black_dr_2}, we complete the proof.

\subsection{Proof of Theorem~\ref{thm:black_consistency_robustness}}

Let $\mathcal{MDP}$ be the set of all MDP models $\mathsf{MDP}(\mathcal{X},\mathcal{U},\nt,{P},c)$ satisfying Assumption~\ref{ass:Lip},\ref{ass:robustness}, and~\ref{ass:Q-Lip}. 
To prove Theorem~\ref{thm:black_consistency_robustness}, noting that by the definitions of consistency and robustness, we apply Theorem~\ref{thm:black_dr} to derive a bound on the worst-case ratio of expectations:
\begin{align*}
   \sup_{\mathcal{MDP}} \mathsf{RoE}(\varepsilon) \leq 1+ \sup_{\mathcal{MDP}}\frac{\mathsf{DR}(\ouralg)}{J^{\star}} \leq \min \left\{1+\mathcal{O}\left(\frac{\varepsilon}{\nt}\right)+\mathcal{O}((1-\lambda)\gamma),\mathsf{ROB}+\mathcal{O}(\lambda\gamma)\right\},
\end{align*}
which implies that \ouralg with the \textsc{Black-Box} Procedure is $(1+\mathcal{O}((1-\lambda)\gamma)$-consistent and $(\mathsf{ROB}+\mathcal{O}(\lambda\gamma))$-robust.

\subsection{Proof of Theorem~\ref{thm:black_impossibility}}

\begin{proof}
According to Lemma~\ref{lemma:consistency}, the expected dynamic regret of \ouralg satisfies 
\begin{align*}
\mathsf{DR}(\ouralg)
=&\sum_{t\in [\nt]}\mathbb{E}_{P,\pi}\left[Q_{t}^{\star}\left(x_t,u_t\right)-\inf_{v\in\mathcal{U}} Q_{t}^{\star}\left(x_t,v\right)\right],
\end{align*}
where $\pi$ denotes \ouralg.
For notational simplicity, we introduce the following notation:
\begin{align*}
\Delta {Q}_t^{\star}(P,\pi) \coloneqq & \mathbb{E}_{P,\pi}\left[Q_t^{\star}(x_t,u_t) - \inf_{v\in\mathcal{U}}Q_t^{\star}(x_t,v) \right], \\ 
\Delta \widetilde{Q}_t (P,\pi)\coloneqq & \mathbb{E}_{P,\pi}\left[\widetilde{Q}_t(x_t,u_t) - \inf_{v\in\mathcal{U}}\widetilde{Q}_t(x_t,v) \right].
\end{align*}
With the \textsc{Black-Box} Procedure, we set $R_t = \lambda \eta_t$ with some hyper-parameter $0\leq \lambda \leq 1$.
Therefore, there exists Lipschitz continuous {Q}-value predictions $(\widetilde{Q}_t:t\in [\nt])$ with a Lipschitz constant $L_Q$ such that 
\begin{align}
\label{eq:proof_lower_bdd_1}
\mathsf{DR}\left(\ouralg({\textsc{Black-Box}})\right)
\geq &\sum_{t\in [\nt]} \left(\Delta {Q}_t^{\star}(P,\pi) 
 - \Delta\widetilde{Q}_t (P,\pi) + (1-\lambda)L_Q\gamma\right).
\end{align}
First, we verify that Wasserstein robust policies exist since we can construct a transition probability $P$ such that the states in different times are independent.
Denote by $\mathsf{OPT}$ the expected optimal total cost.
We can construct cost functions $(c_t:t\in [\nt])$ that are Lipschitz continuous with a Lipschitz constant $L_c$ and Q-advice $(\widetilde{Q}_t:t\in [\nt])$ satisfying
\begin{align*}
\frac{\sum_{t\in [\nt]} \left(\Delta {Q}_t^{\star}(P,\pi) - \Delta\widetilde{Q}_t (P,\pi) \right)}{\mathsf{OPT}} \geq \Omega\left(\mathsf{ROB}+
 \frac{\lambda\gamma  L_c}{\mathsf{OPT}}\nt \right).
\end{align*}
Note that the corresponding Q-value predictions satisfy Assumption~\ref{ass:Q-Lip}.
Let $\mathcal{MDP}$ be the set of all MDP models $\mathsf{MDP}(\mathcal{X},\mathcal{U},\nt,{P},c)$ satisfying Assumption~\ref{ass:Lip},\ref{ass:robustness}, and~\ref{ass:Q-Lip}.
Combining above with~\eqref{eq:proof_lower_bdd_1}, and noting that in Assumption~\ref{ass:Lip}, $c_t(x,u)>0$ for all $t\in [\nt]$, $x\in\mathcal{X}$, and $u\in\mathcal{U}$, for any $\varepsilon\geq 0$, the ratio of expectations can be bounded by
\begin{align*}
    \mathsf{RoE}(\ouralg) = 1+ \sup_{\mathcal{MDP}}\frac{\mathsf{DR}(\ouralg)}{\mathsf{OPT}} = 1+ \Omega\Big( (1-\lambda)L_Q\gamma  + \min\{\varepsilon,\lambda \gamma L_c +\mathsf{ROB}\}\Big),
\end{align*}
which implies that $\ouralg$ cannot be both $\left(1+o(\lambda\gamma)\right)$-consistent and $(\mathsf{ROB}+o((1-\lambda)\gamma))$-robust for any $0\leq\lambda\leq 1$.
\end{proof}

\section{Grey-Box Consistency and Robustness Analysis} 
\label{app:grey_box}

In the following, we present a dynamic regret bound for the grey-box setting (Section~\ref{sec:gray_implementation}) that is analogous to the one presented in Theorem~\ref{thm:black_dr} for the black-box scenario.

First, in addition to Definition~\ref{def:tradeoff}, we further recall the following quantities used in Lemma~\ref{lemma:td_error} for notational convenience:
\begin{align}
\label{eq:zeta_Q}
\zeta_{t}^{Q}\left(x_t,u_t\right) &\coloneqq \widetilde{Q}_{t}\left(x_t,u_t\right)-Q_{t}^{\star}\left(x_t,u_t\right),\\
\label{eq:zeta_V}
\zeta_{t}^{V}\left(x_t\right) &\coloneqq \inf_{v\in\mathcal{U}}\widetilde{Q}_{t}\left(x_t,v\right)-\inf_{v\in\mathcal{U}}Q_{t}^{\star}\left(x_t,v\right),
\end{align}
where by definition, $\zeta_{t}^{Q}$ and $\zeta_{t}^{V}$ depend on the random trajectory $((x_t,u_t):t\in [\nt])$.
Denote $\mu_t\coloneqq\zeta_{t}^{V}-\zeta_{t}^{Q}$.

Note that  when the environment is stationary, under some model assumptions and with a Reproducing kernel Hilbert space (RKHS) being the function class, the optimism lemma (Lemma 5.2) in~\cite{yang2020function} shows that with probability at least $1-(2\nt^2\nh^2)^{-1}$, the generated {Q}-value functions satisfy
$\sum_{(h,t)\in [\nh]\times  [\nt]}\mathbb{E}_{P,\pi}\left[\delta_{h,t}+ \mu_{h,t}\right] = \widetilde{O}(\nh {\Gamma_K(\nt,\lambda)}\sqrt{\nt})$ where $\nh$ is the number of episodes and $\widetilde{O}(\cdot)$ omits logarithmic terms and ${\nt\Gamma_K(\nt,\lambda)}$ is the  maximal information gain~\cite{srinivas2010gaussian} that characterizes the intrinsic complexity of the function class.

\begin{theorem}[Grey-Box: Dynamic Regret]
\label{thm:grey_dynamic_regret}
Consider any MDP model satisfying Assumption~\ref{ass:Lip},\ref{ass:robustness}, and~\ref{ass:Q-Lip}. The expected dynamic regret of \ouralg (Algorithm~\ref{alg:ppp}) with the \textsc{Grey-Box} Procedure satisfies the following bounds:
\begin{align}
\label{eq:global_consistency}
\mathsf{DR}(\ouralg)
&\leq
\mathbb E_{P,\pi}\sum_{t\in[\nt]}
\left(\mu_t+L_Q[\eta_t-R_t]^+\right),\\
\label{eq:global_robustness}
\mathsf{DR}(\ouralg)
&\leq
(\mathsf{ROB}-1)J^\star
+L_{\mathrm C}C_s\sum_{t\in[\nt]}
\left(\mathbb E_{P,\pi}
\left[\|u_t-\overline\pi_t(x_t)\|_{\mathcal U}^p\right]\right)^{1/p}.
\end{align}
Consequently, the minimum of the two global right-hand sides is also an upper bound.

For the budget in~\eqref{eq:budget}, the two bounds imply
\begin{align}
\label{eq:global_ratio}
\mathsf{RoE}(\ouralg)
\leq\min\Bigg\{&
1+\frac{1}{J^\star}\mathbb E_{P,\pi}\sum_{t\in[\nt]}
\left(\mu_t+\min\{L_Q\eta_t,\beta Z_t\}\right),\notag\\
&\mathsf{ROB}+\frac{L_{\mathrm C}C_s}{J^\star}
\sum_{t\in[\nt]}\left(\mathbb E_{P,\pi}
\left[\|u_t-\overline\pi_t(x_t)\|_{\mathcal U}^p\right]\right)^{1/p}
\Bigg\}.
\end{align}
\end{theorem}

\subsection{Proof of Theorem~\ref{thm:grey_dynamic_regret}}
\label{appendix:thm_grey-proof}

The Bellman equations give the performance-difference identity
\[
\mathsf{DR}(\ouralg)
=\mathbb E_{P,\pi}\sum_{t\in[\nt]}
\left(Q_t^\star(x_t,u_t)-\inf_{v\in\mathcal U}Q_t^\star(x_t,v)\right).
\]
Applying the Projection Lemma to each summand proves~\eqref{eq:global_consistency}. The Perturbation Lemma proves~\eqref{eq:global_robustness}. Both inequalities hold for the full horizon, so their minimum is also an upper bound.

The budget identity gives
\[
L_Q[\eta_t-R_t]^+
=\min\{L_Q\eta_t,\beta Z_t\}.
\]
Substituting this identity into~\eqref{eq:global_consistency} and dividing both global bounds by $J^\star$ proves~\eqref{eq:global_ratio}. The consequences under exact and arbitrary advice are proved next.

\subsection{Proof of Theorem~\ref{thm:grey_robustness_consistency}}
\label{appendix:thm_grey_consistency}

\paragraph{Conditions and finite horizon bounds.}
Consider a class of horizon-$\nt$ MDPs satisfying Assumptions~\ref{ass:Lip}, \ref{ass:robustness}, and~\ref{ass:Q-Lip}, with constants uniform over the class. Suppose there are constants $0<c_0\leq c_1<\infty$ such that
\[
c_0\leq c_t(x,u)\leq c_1
\]
for every $t$, $x$, and $u$, so $J^\star\geq c_0\nt$. Let $\xi_t$ and $M_t$ be the Bellman innovation and martingale in Lemma~\ref{lemma:td_error}. Assume that a deterministic nondecreasing boundary $(b_{\nt,t}:0\leq t<\nt)$, with $b_{\nt,0}=0$, satisfies
\[
\mathbb P\left(\left|M_t\right|>b_{\nt,t}
\text{ for some }1\leq t<\nt\right)\leq\alpha_{\nt}
\]
uniformly over the admissible adaptive policies. Write $b_{\nt}\coloneqq b_{\nt,\nt-1}$. Finally, define
\[
J_t^{\overline\pi}(x)
\coloneqq
\mathbb E_{P,\overline\pi}\left[\sum_{s=t}^{\nt-1}c_s(x_s,u_s)\mid x_t=x\right]
\]
with $J_{\nt}^{\overline\pi}(x)=V_{\nt}^\star(x)=0$, and suppose the baseline guarantee is restartable in the sense that, for a class-uniform $\rho_{\nt}=o(\nt)$,
\begin{equation}
\label{eq:restartable_baseline}
J_t^{\overline\pi}(x)
\leq \mathsf{ROB}\,V_t^\star(x)+\rho_{\nt}
\end{equation}
for every $0\leq t\leq\nt$ and every reachable $x\in\mathcal X$.

For each horizon, set $\beta=\beta_{\nt}$ in the \textsc{Grey-Box} Procedure. Let $\overline a_{\nt}\coloneqq L_Q\gamma+2q_{\nt}$ and
\[
K_{\nt}\coloneqq
\frac{L_Q\gamma}{\beta_{\nt}}+2b_{\nt}+L_Q\gamma+4q_{\nt}.
\]
The consistency factor under exact Q-value advice and the robustness factor over the admissible advice class are respectively bounded by
\begin{equation}
\label{eq:grey_finite_factors}
1+\frac{\alpha_{\nt}L_Q\gamma}{c_0}
\qquad\text{and}\qquad
\mathsf{ROB}+
\frac{\mathsf{ROB}\left(K_{\nt}+\alpha_{\nt}\nt\overline a_{\nt}\right)+\rho_{\nt}}{c_0\nt}.
\end{equation}
Consequently, Theorem~\ref{thm:grey_robustness_consistency} follows if $\alpha_{\nt}\to0$, $\alpha_{\nt}\overline a_{\nt}=o(1)$, $b_{\nt}=o(\nt)$, and $1/\beta_{\nt}=o(\nt)$, together with $q_{\nt}=o(\nt)$ and $\rho_{\nt}=o(\nt)$ stated above.

The confidence condition is a concrete property of $P_t$ and $V_t^\star$. For example, if the innovations are conditionally $\sigma_{\nt}^2$-sub-Gaussian uniformly over the class, one may take
\[
b_{\nt,t}=\sigma_{\nt}\sqrt{2t\log(2\nt/\alpha_{\nt})}.
\]
Uniformly Lipschitz $V_t^\star$ and transition distributions with uniformly bounded support diameter give such a condition by Hoeffding's lemma. If $\sigma_{\nt}=\mathcal O(1)$, $\alpha_{\nt}=\nt^{-2}$, and $\beta>0$ is fixed, then the two factors in~\eqref{eq:grey_finite_factors} are
\[
1+\mathcal O(\nt^{-2})
\qquad\text{and}\qquad
\mathsf{ROB}+\mathcal O\left(
\sqrt{\frac{\log\nt}{\nt}}+
\frac{q_{\nt}+\rho_{\nt}}{\nt}+
\frac{1}{\beta\nt}
\right).
\]
Condition~\eqref{eq:restartable_baseline} makes explicit that the baseline guarantee must hold after a data-dependent switching time. It holds whenever the same baseline ratio applies uniformly to every restarted suffix MDP, up to the sublinear term $\rho_{\nt}$.

\paragraph{Proof.}

Let
\[
\mathcal E_{\nt}
\coloneqq
\left\{|M_t|\leq b_{\nt,t}
\text{ for every }1\leq t<\nt\right\}.
\]
By assumption, $\mathbb P(\mathcal E_{\nt}^c)\leq\alpha_{\nt}$. We first record two pathwise bounds. The projection identity and the Lipschitz continuity of $\widetilde Q_t$ give
\begin{equation}
\label{eq:advice_gap_bound}
0\leq g_t
\leq L_Q\|u_t-\widetilde u_t\|_{\mathcal U}
=L_Q[\eta_t-R_t]^+
\leq L_Q\gamma.
\end{equation}
Moreover, Assumption~\ref{ass:Q-Lip} gives
\begin{equation}
\label{eq:advantage_envelope}
0\leq a_t
\leq g_t+2q_{\nt}
\leq \overline a_{\nt}.
\end{equation}

Let
\[
\kappa_{\nt}\coloneqq\frac{L_Q\gamma}{\beta_{\nt}},
\qquad
\tau\coloneqq\inf\{t\in\{1,\ldots,\nt-1\}:Z_t\geq\kappa_{\nt}\},
\]
with $\tau=\nt$ if the set is empty. Since $Z_t$ is nondecreasing and $\eta_t\leq\gamma$, Equation~\eqref{eq:budget} gives $R_t=0$ for every $t\geq\tau$. Thus \ouralg follows the baseline from $\tau$ onward.

On $\mathcal E_{\nt}$, the accumulated advantage before the switch satisfies
\begin{equation}
\label{eq:prefix_advantage_bound}
A_{\tau}\leq K_{\nt}.
\end{equation}
To see this, if $2\leq\tau<\nt$, then $Z_{\tau-1}<\kappa_{\nt}$ and hence $G_{\tau-1}<b_{\nt,\tau-1}+\kappa_{\nt}$. Equation~\eqref{eq:augmented_td_sum} gives
\[
A_{\tau-1}
\leq \kappa_{\nt}+2b_{\nt}+2q_{\nt}.
\]
The crossing action satisfies $a_{\tau-1}\leq L_Q\gamma+2q_{\nt}$ by~\eqref{eq:advantage_envelope}, which proves~\eqref{eq:prefix_advantage_bound}. The case $\tau=1$ follows directly from~\eqref{eq:advantage_envelope}. If $\tau=\nt$, the same argument at time $\nt-1$, followed by the bound on the final action $a_{\nt-1}$, proves~\eqref{eq:prefix_advantage_bound}. On $\mathcal E_{\nt}^c$, Equation~\eqref{eq:advantage_envelope} gives $A_\tau\leq\nt\overline a_{\nt}$. Therefore
\begin{equation}
\label{eq:expected_prefix_advantage}
\mathbb E[A_\tau]
\leq K_{\nt}+\alpha_{\nt}\nt\overline a_{\nt}.
\end{equation}

The time $\tau$ is a bounded stopping time for the trajectory filtration. With $V_{\nt}^\star=0$, optional stopping in the Bellman performance-difference identity gives
\begin{equation}
\label{eq:stopped_performance_difference}
\mathbb E\left[\sum_{t=0}^{\tau-1}c_t(x_t,u_t)+V_\tau^\star(x_\tau)\right]
=J^\star+\mathbb E[A_\tau].
\end{equation}
Applying the restartable baseline bound~\eqref{eq:restartable_baseline} conditionally at $\tau$ and using nonnegative costs,
\begin{align*}
J(\ouralg)
&\leq
\mathbb E\left[\sum_{t=0}^{\tau-1}c_t(x_t,u_t)
+\mathsf{ROB}V_\tau^\star(x_\tau)\right]+\rho_{\nt}\\
&\leq
\mathsf{ROB}\left(J^\star+\mathbb E[A_\tau]\right)+\rho_{\nt}.
\end{align*}
Combining this inequality with~\eqref{eq:expected_prefix_advantage} and $J^\star\geq c_0\nt$ proves the robustness factor in~\eqref{eq:grey_finite_factors}.

Finally, suppose the Q-value advice is exact. On $\mathcal E_{\nt}$, induction gives $G_t=M_t$, $Z_t=0$, $R_t=\eta_t$, and $a_t=g_t=0$ for every $t$. Indeed, this is true at $t=0$. If it is true before time $t$, Equation~\eqref{eq:augmented_td_sum} gives $G_t=M_t\leq b_{\nt,t}$, so the next action again follows the optimal Q-value advice. Hence $A_{\nt}=0$ on $\mathcal E_{\nt}$. On its complement, exact advice and~\eqref{eq:advantage_envelope} give $A_{\nt}\leq\nt L_Q\gamma$. The performance-difference identity therefore yields
\[
\mathsf{DR}(\ouralg)
=\mathbb E[A_{\nt}]
\leq\alpha_{\nt}\nt L_Q\gamma.
\]
Dividing by $J^\star\geq c_0\nt$ proves the consistency factor. If transitions are deterministic, then $M_t=0$ and the same induction gives exact consistency without a failure event.

\section{Proof of Theorem~\ref{thm:W-Robustness-and-ROB}}\label{appendix:MPC-proof}
To show Theorem \ref{thm:W-Robustness-and-ROB}, we first show a technical lemma with respect to $\MPC_\nt$, which plans until the end of the episode from the first time step.

\begin{lemma}\label{lemma:infinite-horizon-MPC}
Suppose Assumptions \ref{assump:bounded-costs-and-dynamics} and \ref{assump:uniform-stability} hold. For each step $t \in [\nt]$, the control policy of $\MPC_{\nt}$ can be rewritten as $u_t = \overline{K}_t x_t$, for some matrices $\left(\overline{K}_t:{t \in [\nt]}\right)$ satisfy that $\norm{\overline{K}_t} \leq C$ for all $t \in [\nt]$, and
\[\norm{(A_{t'-1} + B_{t'-1} \overline{K}_{t'-1}) \cdots (A_t + B_t \overline{K}_t)} \leq C \decayfactor^{t' - t}, \forall t, t' \in [\nt], \ t' \geq t,\]
where $\lambda, C$ are as defined in Theorem \ref{thm:W-Robustness-and-ROB}.
\end{lemma}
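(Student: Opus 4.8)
The plan is to prove the two claims in turn, exploiting the fact that $\MPC_{\nt}$ at step $t$ plans over the \emph{entire} remaining horizon $t,\dots,\nt-1$ with all predicted disturbances set to zero. By the principle of optimality this is exactly the optimal feedback controller for the noiseless time-varying LQR problem, so its policy admits a linear-feedback form. Concretely, I would run the backward Riccati recursion $P_{\nt-1}=Q_{\nt-1}$ and $P_t = Q_t + A_t^\top P_{t+1} A_t - A_t^\top P_{t+1} B_t (R_t + B_t^\top P_{t+1} B_t)^{-1} B_t^\top P_{t+1} A_t$, so that the committed action is $u_t = \overline{K}_t x_t$ with $\overline{K}_t = -(R_t + B_t^\top P_{t+1} B_t)^{-1} B_t^\top P_{t+1} A_t$ and the executed trajectory obeys $x_{t+1} = (A_t + B_t \overline{K}_t) x_t = F_t x_t$. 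This settles the linear form; the substance is the uniform gain bound $\norm{\overline{K}_t}\le C$ and the exponential contraction of $\prod_{\tau=t}^{t'-1} F_\tau$.

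The key observation is that $\prod_{\tau=t}^{t'-1} F_\tau$ equals the sensitivity $\partial x_{t'}/\partial x_t$ of the optimal noiseless trajectory to its initial state, while $\overline{K}_t$ equals $\partial u_t/\partial x_t$; both can therefore be read off from the solution operator of the strongly convex quadratic program solved by $\MPC_{\nt}$. I would write this program over the stacked variable $z=(x_t,u_t,x_{t+1},\dots,u_{t'-1},x_{t'})$: the objective Hessian is block-diagonal with blocks $Q_\tau,R_\tau$, so by Assumption~\ref{assump:bounded-costs-and-dynamics} it satisfies $\mu I \preceq H \preceq \ell I$, while the dynamics-plus-initial-condition constraints are encoded by the banded matrix $\Xi$ of Assumption~\ref{assump:uniform-stability} with $\sigma_{\min}(\Xi)\ge\sigma$. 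Restricting to the affine feasible manifold yields a reduced operator that retains a nearest-neighbor (banded) coupling in time and whose spectrum I would sandwich between $\underline{\sigma}^2$ and $\overline{\sigma}^2$, with $\underline{\sigma},\overline{\sigma}$ exactly the constants in Theorem~\ref{thm:W-Robustness-and-ROB}: the lower bound combines the cost curvature $\mu$ with the controllability constant $\sigma$ (this is where Assumption~\ref{assump:uniform-stability} is indispensable, as it is what keeps the optimal gains finite), and the upper bound follows from $\ell,a,b$.

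With these uniform spectral bounds, the decay follows from the classical fact that the inverse of a well-conditioned banded positive-definite operator has off-diagonal blocks that decay geometrically at rate $\decayfactor^2=(\overline{\sigma}-\underline{\sigma})/(\overline{\sigma}+\underline{\sigma})$, i.e.\ the condition number $\kappa=(\overline{\sigma}/\underline{\sigma})^2$ controls the decay of the squared quantity, so the state norm contracts at rate $\decayfactor$ per step. Reading the initial-state-to-state and initial-state-to-action blocks of this inverse operator then yields $\norm{\prod_{\tau=t}^{t'-1} F_\tau}\le C\decayfactor^{t'-t}$ and $\norm{\overline{K}_t}\le C$ with the stated $C=4(\ell+1+a+b)/(\underline{\sigma}^2\decayfactor)$. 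Since Assumptions~\ref{assump:bounded-costs-and-dynamics} and~\ref{assump:uniform-stability} are uniform in time, one pair $(\decayfactor,C)$ serves every starting time $t$ and every sub-horizon, and the continuation consistency of the Riccati feedback (the gain computed by $\MPC_{\nt}$ at step $\tau$ coincides with the time-$\tau$ gain of the problem started at any earlier $t$) guarantees the $F_\tau$ are well defined along the executed trajectory. This is the exponentially-decaying-perturbation analysis of~\cite{lin2021perturbation,lin2022bounded} specialized to $\MPC_{\nt}$, so an alternative route is to verify that our assumptions imply theirs and invoke their perturbation bound as a black box.

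I expect the main obstacle to be the coercivity/lower spectral bound $\underline{\sigma}$: converting the controllability condition $\sigma_{\min}(\Xi)\ge\sigma$ into uniform positive curvature of the constraint-reduced objective is the delicate step, since it is precisely what forces the gains to stay bounded and produces a contraction rate $\decayfactor<1$ bounded away from $1$ uniformly in $t$ and $\nt$. A secondary technical point is carrying the constants cleanly through the banded-inverse decay estimate so as to recover the exact $\decayfactor$ and $C$ rather than merely their existence.
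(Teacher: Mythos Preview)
Your proposal is correct and follows essentially the same route as the paper: the paper writes the KKT system for the stacked FTOCP (your block-diagonal Hessian $H$ together with the constraint matrix $\Xi_{t,\nt-1}$) to obtain the linear form $u_{t\mid t}=\overline{K}_t x_t$, then invokes Lemma~G.2 of \cite{lin2022bounded}---which is precisely the banded-operator/exponentially-decaying-inverse bound you sketch---to read off $\norm{\overline{K}_t}\le C$ and $\norm{\Phi_{t,t'}}\le C\decayfactor^{t'-t}$, using the principle of optimality (as you do) to identify the predictive trajectory with the executed one. In short, you have unpacked the content of the cited lemma rather than taking a different path; your closing remark that one can ``invoke their perturbation bound as a black box'' is exactly what the paper does.
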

\begin{proof}
To simplify the notation, we define 
\begin{align}\label{lemma:infinite-horizon-MPC:e0}
    \Gamma_{t, t'} = 
    \begin{cases}
        \DIAG\left(Q_t, R_t, \ldots, R_{t'-1}, Q_{t'}\right) & \text{ if } t' = \nt - 1\\
        \DIAG\left(Q_t, R_t, \ldots, R_{t'-1}, P_{t'}\right) & \text{ otherwise}
    \end{cases}.
\end{align}

By the KKT conditions, we see that for any $t \in [\nt]$, the predictive optimal solution $\psi_{t, \nt-1}(x_t)$ is given by
\begin{align}\label{lemma:infinite-horizon-MPC:e1}
    \left(\begin{array}{c}
         x_{t|t}\\
         u_{t|t}\\
         \vdots\\
         x_{\nt-1|t}\\
         \hline
         \eta_{t|t}\\
         \vdots\\
         \eta_{\nt-1|t}
    \end{array}\right) = 
    \left(\begin{array}{c|c}
        \Gamma_{t, \nt-1} & (\Xi_{t, \nt-1})^\top \\
        \hline
        \Xi_{t, \nt-1} & 
    \end{array}\right)^{-1} \left(\begin{array}{c}
         0\\
         \vdots\\
         0\\
         \hline
         x_t\\
         0\\
         \vdots\\
         0
    \end{array}\right).
\end{align}
Therefore, $u_{t|t}$ is a linear function of $x_t$, and this relationship defines $\overline{K}_t$. Lemma G.2 in \cite{lin2022bounded} implies that $\norm{\overline{K}_t} \leq C$. Note that the block matrix is invertible since $(Q_t:t\in [\nt])$ and $(R_t:t\in [\nt])$ are positive definite.

To simplify the notation, we define the state transition matrix
\[\Phi_{t, t'} \coloneqq (A_{t'-1} + B_{t'-1} \overline{K}_{t'-1}) \cdots (A_t + B_t \overline{K}_t).\]

Consider an arbitrary state $x_t$. Note that $\left( x_{t'\mid t}:{t \leq t' < \nt}\right)$ is the optimal trajectory when there is no disturbance after step $t$. By the principle of optimality, we see that $\left(x_{t'\mid t}:t \leq t'< \nt\right)$ is identical with the actual trajectory of $\MPC_\nt$ after step $t$. In other words, for arbitrary $x_t$, the multi-step transition matrix $\Phi_{t, t'}$ satisfies
\[x_{t'\mid t} = \Phi_{t, t'} x_t.\]
Lemma G.2 in \cite{lin2022bounded} implies that $\norm{\Phi_{t, t'}} \leq C \decayfactor^{t'-t}$.
\end{proof}

Lemma \ref{lemma:infinite-horizon-MPC} shows that $\MPC_{\nt}$ has the same effect as a time-varying linear feedback controller that is exponentially stable. We generalize this property to $\MPC_k$ with a smaller prediction horizon (Lemma \ref{lemma:finite-horizon-MPC}) by showing that $\MPC_k$ behaves similar to $\MPC_{\nt}$ when $k$ is sufficiently large.

\begin{lemma}\label{lemma:finite-horizon-MPC}
Suppose Assumptions \ref{assump:bounded-costs-and-dynamics} and \ref{assump:uniform-stability} hold. Let $(C, \decayfactor)$ be the same as Lemma \ref{lemma:infinite-horizon-MPC}. For each step $t \in [\nt]$, the control policy of $\MPC_{k}$ can be rewritten as $u_t = K_t^{k} x_t$, for some matrices $\{K_t^{k}\}_{t \in [\nt]}$ satisfy that 
\[\norm{K_t^{k}} \leq C, \text{ and }\norm{K_t^{k} - \overline{K}_t} \leq C^2 a \cdot \decayfactor^{2k}.\]
Further, for any $\widehat{\decayfactor} > \decayfactor$, when
$k \geq \min\{\nt, \frac{1}{2}\log\left(C^3 b a \decayfactor / (\widehat{\decayfactor} - \decayfactor)\right)/\log(1/\decayfactor)\},$
we have
\[\norm{(A_{t'-1} + B_{t'-1} K_{t'-1}^{k}) \cdots (A_t + B_t K_t^{k})} \leq C \widehat{\decayfactor}^{t' - t}, \text{ for any } t, t' \in [\nt], t' \geq t.\]
\end{lemma}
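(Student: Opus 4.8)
The plan is to prove the three claims in sequence, in each case reducing to the infinite-horizon analysis of Lemma~\ref{lemma:infinite-horizon-MPC} together with the perturbation machinery of \cite{lin2022bounded}. For the linear form and the uniform bound $\norm{K_t^k}\le C$, I would mirror the KKT derivation used for Lemma~\ref{lemma:infinite-horizon-MPC}. The FTOCP solved by $\MPC_k$ at step $t$ over the horizon $[t,t']$ with $t'=\min\{t+k,\nt-1\}$, zero predicted disturbances, and terminal block $P_{t'}$ (or $Q_{t'}$ when $t'=\nt-1$) has exactly the same banded, symmetric, positive-definite KKT structure as in~\eqref{lemma:infinite-horizon-MPC:e1}, only with a shorter horizon. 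Hence $u_{t\mid t}$ is a linear function of $x_t$, which defines $K_t^k$, and the uniform gain bound (Lemma G.2 in \cite{lin2022bounded}), which depends only on the conditioning constants of Assumptions~\ref{assump:bounded-costs-and-dynamics} and~\ref{assump:uniform-stability} and not on the horizon length, yields $\norm{K_t^k}\le C$. The degenerate case $t+k\ge \nt-1$ is immediate, since there $\MPC_k$ coincides with $\MPC_\nt$, so $K_t^k=\overline{K}_t$ and every bound below holds trivially; this is also why the threshold on $k$ is stated as a minimum with $\nt$.

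The crux is the second claim, $\norm{K_t^k-\overline{K}_t}\le C^2 a\,\decayfactor^{2k}$, and I would establish it through the disturbance-injection reformulation. Let $\left(x_{\tau\mid t},u_{\tau\mid t}\right)$ be the $k$-step optimal trajectory of $\MPC_k$ from $x_t$. Extending it by zeros past step $t+k$ produces a trajectory that is optimal for the full-horizon problem from $x_t$ subject to a single injected disturbance at step $t+k$ that resets the state to zero, namely $w_{t+k}=-A_{t+k}\,x_{t+k\mid t}$; the tail is then identically zero and satisfies the KKT conditions of the full problem by the principle of optimality. Consequently $K_t^k x_t$ and $\overline{K}_t x_t$ are the first actions of two full-horizon optimal control problems that differ only through this localized disturbance at step $t+k$. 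Applying the exponential-decay bound on the inverse of the banded KKT matrix from \cite{lin2022bounded} then produces two factors of $\decayfactor^{k}$: one because the injected disturbance is proportional to the already-decayed tail state, $\norm{w_{t+k}}\le a\norm{x_{t+k\mid t}}\le a\,C\decayfactor^{k}\norm{x_t}$, and one because the influence of a step-$(t+k)$ perturbation on the step-$t$ action is attenuated by a factor $C\decayfactor^{k}$. Multiplying these and reading off the coefficient of $x_t$ gives the claimed $C^2 a\,\decayfactor^{2k}$.

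I expect this two-sided decay argument to be the main obstacle: the point is to track \emph{both} the smallness of the tail state \emph{and} the attenuation of its influence, so that one obtains $\decayfactor^{2k}$ rather than a single $\decayfactor^{k}$. This is exactly where the structure of $\MPC$ (as opposed to an arbitrary feedback law) is used, and it is the step where the principle-of-optimality bookkeeping and the banded-inverse decay estimate of \cite{lin2022bounded} must be combined carefully.

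Finally, for the degraded closed-loop decay rate I would write each perturbed factor as $A_\tau+B_\tau K_\tau^k=(A_\tau+B_\tau\overline{K}_\tau)+E_\tau$ with $E_\tau\coloneqq B_\tau(K_\tau^k-\overline{K}_\tau)$, so that $\norm{E_\tau}\le b\,C^2 a\,\decayfactor^{2k}$ by the second claim. Expanding the perturbed product $\prod_\tau (A_\tau+B_\tau K_\tau^k)$ around the unperturbed products $\Phi_{t,t'}$ via the standard telescoping identity and using the \emph{joint} decay $\norm{\Phi_{t,t'}}\le C\decayfactor^{t'-t}$ from Lemma~\ref{lemma:infinite-horizon-MPC}, the terms with $p$ insertions contribute at most $\binom{t'-t}{p}C^{p+1}\norm{E}^{p}\decayfactor^{(t'-t)-p}$, whose sum has the geometric form $C\left(\decayfactor+\mathcal{O}(C\norm{E})\right)^{t'-t}$. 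The stated lower bound on $k$ is calibrated precisely so that $\norm{E_\tau}$ is small enough to keep the effective base at or below $\widehat{\decayfactor}$; indeed, the threshold is algebraically equivalent to $C\decayfactor\norm{E_\tau}\le \widehat{\decayfactor}-\decayfactor$, and the sharp perturbation-of-decay-rate estimate in \cite{lin2022bounded} then yields $\norm{(A_{t'-1}+B_{t'-1}K_{t'-1}^{k})\cdots(A_t+B_t K_t^{k})}\le C\widehat{\decayfactor}^{\,t'-t}$, completing the proof.
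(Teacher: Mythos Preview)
Your proposal is correct and follows essentially the same route as the paper: KKT linearity plus Lemma~G.2 of \cite{lin2022bounded} for $\norm{K_t^k}\le C$; the disturbance-injection trick (setting $w_{t+k}=-A_{t+k}x_{t+k\mid t}$ so that the $k$-step trajectory becomes a full-horizon optimal trajectory with one localized perturbation) together with two applications of the exponential-decay perturbation bound to obtain the $C^2 a\,\decayfactor^{2k}$ estimate; and finally the binomial expansion of the perturbed closed-loop product around $\Phi_{t,t'}$ to transfer decay from $\decayfactor$ to $\widehat{\decayfactor}$. One small bookkeeping point: your per-term contribution $\binom{t'-t}{p}C^{p+1}\norm{E}^p\decayfactor^{(t'-t)-p}$ sums to $C(\decayfactor+C\norm{E})^{t'-t}$, which would require $C\norm{E}\le\widehat{\decayfactor}-\decayfactor$ rather than the $C\decayfactor\norm{E}\le\widehat{\decayfactor}-\decayfactor$ you state as equivalent to the threshold; since $\decayfactor<1$ the lemma's stated threshold on $k$ is still sufficient for your bound, so the argument goes through, but the claimed algebraic equivalence is not exact.
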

\begin{proof}
Let $\overline{t} \coloneqq \min\{t+k, \nt-1\}$. By the KKT conditions, we see that for any $t \in [\nt]$, the predictive optimal solution $\psi_{t, \overline{t}}(x_t; P_{\overline{t}})$ is given by
\begin{align}\label{lemma:finite-horizon-MPC:e1}
    \left(\begin{array}{c}
         x_{t|t}\\
         u_{t|t}\\
         \vdots\\
         x_{\overline{t}|t}\\
         \hline
         \eta_{t|t}\\
         \vdots\\
         \eta_{\overline{t}|t}
    \end{array}\right) = 
    \left(\begin{array}{c|c}
        \Gamma_{t, \overline{t}} & (\Xi_{t, \overline{t}})^\top \\
        \hline
        \Xi_{t, \overline{t}} & 
    \end{array}\right)^{-1} \left(\begin{array}{c}
         0\\
         \vdots\\
         0\\
         \hline
         x_t\\
         0\\
         \vdots\\
         0
    \end{array}\right).
\end{align}
Therefore, $u_{t|t}$ is a linear function of $x_t$, and this relationship defines $K_t^{k}$. By Lemma G.2 of \cite{lin2022bounded}, we see that $\norm{K_t^{k}} \leq C$.

When $\overline{t} < \nt-1$, construct an auxiliary disturbance sequence $\widehat{w}_{t:\nt-2\mid t}$ with $\widehat{w}_{\overline{t}\mid t} \coloneqq - A_t \psi_{t, \overline{t}}(x_t)$ and $\widehat{w}_{t'\mid t} = 0$ for all $t'\not=\overline{t}$. We see that
\[\psi_{t, \overline{t}}(x_t)[u_{t\mid t}] = \psi_{t, \nt-1}(x_t, \widehat{w}_{t:\nt-2\mid t}; Q_{\nt-1})[u_{t\mid t}].\]
Therefore, we see that
\begin{subequations}\label{lemma:finite-horizon-MPC:e2}
\begin{align}
    &\norm{\psi_{t, \overline{t}}(x_t)[u_{t\mid t}] - \psi_{t, \nt-1}(x_t)[u_{t\mid t}]}\nonumber\\
    ={}& \norm{\psi_{t, \nt-1}(x_t, \widehat{w}_{t:\nt-2\mid t}; Q_{\nt-1})[u_{t\mid t}] - \psi_{t, \nt-1}(x_t, 0_{\times (\nt - t - 1)}; Q_{\nt-1})[u_{t\mid t}]}\nonumber\\
    \leq{}& C \decayfactor^{k} \norm{\widehat{w}_{\overline{t}\mid t}} \label{lemma:finite-horizon-MPC:e2:s1}\\
    \leq{}& C^2 a \cdot \decayfactor^{2k}\norm{x_t}, \label{lemma:finite-horizon-MPC:e2:s2}
\end{align}
\end{subequations}
where we have applied the perturbation bounds in Lemma G.2 of \cite{lin2022bounded} in \eqref{lemma:finite-horizon-MPC:e2:s1} and \eqref{lemma:finite-horizon-MPC:e2:s2}. Since this inequality holds for any arbitrary $x_t$, we see that $\norm{K_t^{k} - \overline{K}_t} \leq C^2 a \cdot \decayfactor^{2k}$. To simplify the notation, we denote $\epsilon \coloneqq C^2 a \cdot \decayfactor^{2k}$.

We can derive the following bound in terms of the $\ell_2$ norm:
\begin{subequations}\label{lemma:finite-horizon-MPC:e3}
\begin{align}
    &\norm{(A_{t'-1} + B_{t'-1} K_{t'-1}^{k}) \cdots (A_t + B_t K_t^{k})}\nonumber\\
    \leq{}& \sum_{j = 0}^{t' - t} \binom{t' - t}{j} C^{j+1}\decayfactor^{t - t'} (b\epsilon)^j\label{lemma:finite-horizon-MPC:e3:s1}\\
    ={}& C \decayfactor^{t' - t}\left(1 + C b \epsilon\right)^{t' - t}\nonumber\\
    \leq{}& C \widehat{\decayfactor}^{t' - t}, \label{lemma:finite-horizon-MPC:e3:s2}
\end{align}
\end{subequations}
where we use the decomposition that for any $t'' \in \{t, \ldots, t'-1\}$,
\[A_{t''} + B_{t''} K_{t''}^{k} \leq (A_{t''} + B_{t''} \overline{K}_{t''}) + B_{t''}(K_{t''}^{k} - \overline{K}_{t''})\]
and $\norm{B_{t''}(K_{t''}^{k} - \overline{K}_{t''})} \leq b \epsilon$ in \eqref{lemma:finite-horizon-MPC:e3:s1}. We also use Lemma \ref{lemma:infinite-horizon-MPC} in \eqref{lemma:finite-horizon-MPC:e3:s1} and the assumption that
\[k \geq \frac{1}{2}\log\left(C^3 b a \decayfactor / (\widehat{\decayfactor} - \decayfactor)\right)/\log(1/\decayfactor)\]
in \eqref{lemma:finite-horizon-MPC:e3:s2}.
\end{proof}

To establish a dynamic regret bound that depends on the offline optimal cost, we first need to show a lower bound of $J^*$ that depends on the ``power'' of the unknown disturbances.

\begin{lemma}\label{lemma:opt-lower-bound}
The offline optimal cost is lower bounded by
\[J^* \geq \frac{\mu}{4(1 + a^2 + b^2)}\sum_{t=0}^{\nt-2} \norm{w_t}^2.\]
\end{lemma}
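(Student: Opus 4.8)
The plan is to lower-bound $J^*$ by the quadratic ``energy'' of an offline optimal trajectory and then to show that this energy must absorb the fixed disturbance energy $\sum_t \norm{w_t}^2$. Let $(x_t,u_t)_{t\in[\nt]}$ be an offline optimal control sequence, so $J^* = \sum_t c_t(x_t,u_t)$, and recall from the dynamics \eqref{eq:canonical} that any such trajectory is feasible, i.e. $w_t = x_{t+1} - A_t x_t - B_t u_t$ for every $t$. I would first record the cost lower bound that follows directly from Assumption~\ref{assump:bounded-costs-and-dynamics}: since $Q_t \succeq \mu I_n$ and $R_t \succeq \mu I_m$, each stage cost in \eqref{eq:quadratic_costs} obeys $c_t(x_t,u_t) \ge \tfrac{\mu}{2}(\norm{x_t}^2 + \norm{u_t}^2)$, and summing gives $J^* \ge \tfrac{\mu}{2}(S_x + S_u)$, where I abbreviate $S_x \coloneqq \sum_{t=0}^{\nt-1}\norm{x_t}^2$ and $S_u \coloneqq \sum_{t=0}^{\nt-1}\norm{u_t}^2$.

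The key step is to bound a single disturbance norm without losing constants to a crude triangle inequality. Writing $w_t = [\,I\;\; {-A_t}\;\; {-B_t}\,]\,(x_{t+1}^\top, x_t^\top, u_t^\top)^\top$ and using that the spectral norm of the block row satisfies $\norm{[\,I\;\; {-A_t}\;\; {-B_t}\,]}^2 = \norm{I + A_t A_t^\top + B_t B_t^\top} \le 1 + \norm{A_t}^2 + \norm{B_t}^2 \le 1 + a^2 + b^2$, I obtain $\norm{w_t}^2 \le (1+a^2+b^2)\bigl(\norm{x_{t+1}}^2 + \norm{x_t}^2 + \norm{u_t}^2\bigr)$. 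This operator-norm estimate is the crucial improvement over a naive split $(p+q+r)^2 \le 3(p^2+q^2+r^2)$, and it is exactly what produces the constant $4$ rather than a worse one in the final bound.

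I would then sum this inequality over $t = 0,\ldots,\nt-2$. Each of the two state sums $\sum_{t=0}^{\nt-2}\norm{x_{t+1}}^2$ and $\sum_{t=0}^{\nt-2}\norm{x_t}^2$ is a sub-sum of $S_x$, so their total is at most $2 S_x$; likewise $\sum_{t=0}^{\nt-2}\norm{u_t}^2 \le S_u$. Hence $\sum_{t=0}^{\nt-2}\norm{w_t}^2 \le (1+a^2+b^2)(2 S_x + S_u) \le 2(1+a^2+b^2)(S_x + S_u)$. Combining with the cost bound $S_x + S_u \le \tfrac{2}{\mu}J^*$ gives $\sum_{t=0}^{\nt-2}\norm{w_t}^2 \le \tfrac{4(1+a^2+b^2)}{\mu}\,J^*$, which is the claim after rearranging. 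Since the disturbance sequence is oblivious, $\sum_t\norm{w_t}^2$ is a fixed quantity independent of the controller, so applying the argument to the optimal trajectory is legitimate.

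I expect the only genuinely delicate point to be the constant tracking. The two factors of $2$ in the denominator $4(1+a^2+b^2)$ arise from distinct sources that must be kept straight: one factor of $2$ from the double-counting of each intermediate state (it appears once as $x_{t+1}$ and once as $x_t$), and one factor of $2$ from the $\tfrac12$ in the quadratic stage cost. The main temptation to avoid is bounding $\norm{w_t}^2$ by three separate triangle-inequality terms, which would inflate $(1+a^2+b^2)$ to $3(1+a^2+b^2)$ and break the stated constant; the block-row spectral-norm bound is what keeps it tight.
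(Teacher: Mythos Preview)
Your proof is correct and follows essentially the same route as the paper: bound each $\norm{w_t}^2$ by $(1+a^2+b^2)(\norm{x_{t+1}}^2+\norm{x_t}^2+\norm{u_t}^2)$, sum, and combine with the quadratic lower bound on the stage costs. The only cosmetic difference is that the paper obtains the key pointwise inequality via the triangle inequality followed by the scalar Cauchy--Schwarz estimate $(\norm{x_{t+1}}+a\norm{x_t}+b\norm{u_t})^2\le(1+a^2+b^2)(\norm{x_{t+1}}^2+\norm{x_t}^2+\norm{u_t}^2)$, whereas you package the same bound as a block-row operator-norm estimate; both yield the identical constant and the remainder of the argument is the same.
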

\begin{proof}
Note that the dynamics of the LTV system can be rewritten as
\[x_{t+1} - A_t x_t - B_t u_t = w_t.\]
Taking norms on both sides of the equality gives
\begin{subequations}\label{lemma:opt-lower-bound:e1}
\begin{align}
    \norm{w_t} ={}& \norm{x_{t+1} - A_t x_t - B_t u_t}\nonumber\\
    \leq{}& \norm{x_{t+1}} + \norm{A_t x_t} + \norm{B_t u_t}\label{lemma:opt-lower-bound:e1:s1}\\
    \leq{}& \norm{x_{t+1}} + a\norm{x_t} + b\norm{u_t}, \label{lemma:opt-lower-bound:e1:s2}
\end{align}
\end{subequations}
where we have used the triangle inequality in \eqref{lemma:opt-lower-bound:e1:s1} and the definition of the induced matrix $\ell_2$-norm in \eqref{lemma:opt-lower-bound:e1:s2}. Taking the squares of both sides and applying the Cauchy-Schwartz inequality together imply
\begin{align}\label{lemma:opt-lower-bound:e2}
    \norm{w_t}^2 \leq \left(\norm{x_{t+1}} + a\norm{x_t} + b\norm{u_t}\right)^2 \leq \frac{1 + a^2 + b^2}{\mu} \left(\mu \norm{x_{t+1}}^2 + \mu \norm{x_t}^2 + \mu \norm{u_t}^2\right).
\end{align}
By \eqref{lemma:opt-lower-bound:e2} and the assumptions on $Q_t$ and $R_t$, we obtain that
\begin{align*}
    \frac{\mu}{2(1 + a^2 + b^2)}\cdot \sum_{t=0}^{\nt-1}\norm{w_t}^2 \leq{}& \frac{1}{2}\sum_{t=0}^{\nt-2} \left(\mu \norm{x_{t+1}}^2 + \mu \norm{x_t}^2 + \mu \norm{u_t}^2\right)\\
    \leq{}& 2 \sum_{t=0}^{\nt-1} c_t(x_t, u_t).
\end{align*}
Since the above inequality holds for any arbitrary trajectory $\left((x_t, u_t):t\in [\nt]\right)$, we conclude that Lemma \ref{lemma:opt-lower-bound} holds.
\end{proof}

Since the Wasserstein robustness (see Definition~\ref{def:robust}) is for distributions on the state-action space, we also prove a technical lemma below that helps convert the contraction on deterministic state/action pairs to the contraction on distributions. Let ${W}_1(\mu,\nu)$ denote the Wasserstein $1$-distance between two distributions $\mu$ and $\nu$.

\begin{lemma}\label{lemma:Wasserstain-preserves-contraction}
Suppose $\varphi: \mathcal{Y} \to \mathcal{W}$ is a deterministic function that satisfies $\norm{\varphi(v) - \varphi(v')}_{\mathcal{W}} \leq \kappa \norm{v - v'}_{\mathcal{Y}}$ for any $v, v' \in \mathcal{Y}$. Then, for any pair of distributions  $\rho$ and $\rho'$ on $\mathcal{Y}$, we have $W_1(\varphi(\rho), \varphi(\rho')) \leq \kappa W_1(\rho, \rho')$.
\end{lemma}
\begin{proof}
Recall that $W_1(\rho, \rho') \coloneqq \inf_{J} \int \norm{v - v'}_{\mathcal{Y}} d J(v, v')$, where $J$ is a joint distribution on $\mathcal{Y}\times \mathcal{Y}$ with marginals $\rho$ and $\rho'$. We define a mapping $\Phi: \mathcal{Y} \times \mathcal{Y} \to \mathcal{Z} \times \mathcal{Z}$ as $\Phi(v, v') \coloneqq (\varphi(v), \varphi(v'))$. We see that $\Phi J$ gives a joint distribution on $\mathcal{W}\times \mathcal{W}$ with marginals $\varphi(\rho)$ and $\varphi(\rho')$, and it satisfies
\[\int \norm{u - u}_{\mathcal{W}} \mathrm{d} (\Phi J)(u, u') = \int \norm{\varphi(v) - \varphi(v')}_\mathcal{Y} \mathrm{d} J(v, v') \leq \varepsilon \int \norm{v - v'}_{\mathcal{Y}} \mathrm{d} J(v, v').\]
Note that the above inequality holds for any $J$ with marginals $\rho$ and $\rho'$. Thus Lemma \ref{lemma:Wasserstain-preserves-contraction} holds.
\end{proof}

Now we are ready to show Theorem \ref{thm:W-Robustness-and-ROB}.

For a state $x$ at time step $t\in [\nt]$, let $x_{t:t'}(x)$ and $u_{t:t'}(x)$ denote the corresponding state and action of MPC at time step $t'$. By Lemma \ref{lemma:finite-horizon-MPC}, we see that for any state-action pairs $(x, u)$ and $(x', u')$ at step $t_1$, we have
\begin{subequations}\label{thm:MPC-baseline-complete:e1}
\begin{align}
    &\norm{(x_{t_1+1:t_2}, u_{t_1+1:t_2})(A_{t_1} x + B_{t_1} u + w_{t_1}) - (x_{t_1+1:t_2}, u_{t_1+1:t_2})(A_{t_1} x' + B_{t_1} u' + w_{t_1})}\nonumber\\
    \leq{}& (1 + C)\norm{x_{t_1+1:t_2}(A_{t_1} x + B_{t_1} u + w_{t_1}) - x_{t_1+1:t_2}(A_{t_1} x' + B_{t_1} u' + w_{t_1})}\label{thm:MPC-baseline-complete:e1:s1}\\
    \leq{}& (1 + C)C \widehat{\decayfactor}^{t_2 - t_1 - 1} \norm{A_{t_1}(x - x') + B_{t_1}(u - u')} \label{thm:MPC-baseline-complete:e1:s2}\\
    \leq{}& (1 + C)C(a + b) \widehat{\decayfactor}^{t_2 - t_1 - 1} \norm{(x, u) - (x', u')}, \label{thm:MPC-baseline-complete:e1:s3}
\end{align}
\end{subequations}
where we have used Lemma \ref{lemma:finite-horizon-MPC} in \eqref{thm:MPC-baseline-complete:e1:s1} and \eqref{thm:MPC-baseline-complete:e1:s2}; Moreover, we have applied the assumption that $\norm{A_{t_1}} \leq a$ and $\norm{B_{t_1}} \leq b$ and the triangle inequality in \eqref{thm:MPC-baseline-complete:e1:s3}. Since \eqref{thm:MPC-baseline-complete:e1} establishes a contraction for a deterministic state-action pair and the dynamics is deterministic, applying Lemma \ref{lemma:Wasserstain-preserves-contraction} finishes the proof of the first conclusion of Theorem \ref{thm:W-Robustness-and-ROB}.

Using a similar decomposition technique with \cite{lin2022bounded}, by Lemma \ref{lemma:finite-horizon-MPC}, we see that the trajectory $\left(x_t:t\in [\nt]\right)$ of \ouralg satisfies that
\begin{align}\label{thm:MPC-baseline-complete:e2}
    \norm{x_t} \leq \sum_{t' = 0}^{t-1}\norm{\Phi_{t', t}^{k}} \cdot (\norm{w_{t'}} + b \overline{R}) \leq C\sum_{t' = 0}^{t-1} \widehat{\lambda}^{t - t'}(d + b \overline{R}) \leq \frac{C(d + b\overline{R})}{1 - \widehat{\lambda}},
\end{align}
where we denote $\Phi_{t', t}^{k} \coloneqq (A_{t'-1} + B_{t'-1} K_{t'-1}^{k}) \cdots (A_t + B_t K_t^{k})$ and the assumption that \ouralg deviates at most $\overline{R}$ from $\MPC_k$'s action. We also see that
\begin{align}\label{thm:MPC-baseline-complete:e3}
    \norm{u_t} \leq \norm{K_t^{k} x_t} + \overline{R} \leq C \overline{R}_x + \overline{R}.
\end{align}
This finishes the proof of the second statement in Theorem \ref{thm:W-Robustness-and-ROB}.

Let the trajectory of $\MPC_k$ when executed without the machine-learned advice be denoted by $\left(\overline{x}_t : t \in [\nt]\right)$. We see that
\begin{align*}
    \norm{\overline{x}_t} = \norm{\sum_{t' = 0}^{t-1} \Phi_{t', t}^{k} w_{t'}} \leq C \sum_{t'=0}^{t-1} \widehat{\lambda}^{t - t'} \norm{w_{t'}}.
\end{align*}
Applying the Cauchy-Schwarz inequality, we obtain
\begin{align}\label{thm:MPC-baseline-complete:e4}
    \norm{\overline{x}_t}^2 &\leq \left(C \sum_{t'=0}^{t-1} \widehat{\lambda}^{t-t'} \norm{w_{t'}}\right)^2 \leq C^2\left(\sum_{t'=0}^{t-1} \widehat{\lambda}^{t-t'}\right)\left(\sum_{t'=0}^{t-1} \widehat{\lambda}^{t-t'} \norm{w_{t'}}^2\right)\nonumber\\
    &\leq \frac{C^2}{1 - \widehat{\lambda}} \sum_{t'=0}^{t-1} \widehat{\lambda}^{t-t'} \norm{w_{t'}}^2.
\end{align}
For the control actions of $\MPC_k$, we also see that
\begin{align}\label{thm:MPC-baseline-complete:e5}
    \norm{\overline{u}_t}^2 = \norm{K_t^{k} \overline{x}_t}^2 \leq C^2 \norm{\overline{x}_t}^2 \leq \frac{C^4}{1 - \widehat{\lambda}} \sum_{t'=0}^{t-1} \widehat{\lambda}^{t-t'} \norm{w_{t'}}^2.
\end{align}
Therefore, we get the following bound on the total cost:
\begin{subequations}\label{thm:MPC-baseline-complete:e6}
\begin{align}
    J(\MPC_k) ={}& \sum_{t=0}^{\nt-1} \left(\frac{1}{2}(\overline{x}_t)^\top Q_t \overline{x}_t + \frac{1}{2}(\overline{u}_t)^\top R_t \overline{u}_t\right)\nonumber\\
    \leq{}& \frac{\ell}{2} \sum_{t=0}^{\nt-1}\left(\norm{\overline{x}_t}^2 + \norm{\overline{u}_t}^2\right)\label{thm:MPC-baseline-complete:e6:s1}\\
    \leq{}& \frac{C^2(1 + C^2)}{2(1 - \widehat{\decayfactor})}\sum_{t=0}^{\nt-1} \sum_{t'=0}^{t-1} \widehat{\lambda}^{t - t'} \norm{w_{t'}}^2\label{thm:MPC-baseline-complete:e6:s2}\\
    \leq{}& \frac{C^2(1 + C^2)}{2(1 - \widehat{\decayfactor})^2}\sum_{t=0}^{\nt-2} \norm{w_t}^2,\nonumber
\end{align}
\end{subequations}
where we have used the assumption that $Q_t \preceq \ell I$ and $R_t \preceq \ell I$ in \eqref{thm:MPC-baseline-complete:e6:s1}; we have also used the inequalities \eqref{thm:MPC-baseline-complete:e4} and \eqref{thm:MPC-baseline-complete:e5} in \eqref{thm:MPC-baseline-complete:e6:s2}. Combining \eqref{thm:MPC-baseline-complete:e6} with the lower bound of $J^*$ in Lemma \ref{lemma:opt-lower-bound} finishes the proof of the third Statement in Theorem~\ref{thm:W-Robustness-and-ROB}.

\section{Proof of Theorem~\ref{thm:MDP-baseline-W-Robustness}}\label{appendix:MDP-proof}
Before showing Theorem~\ref{thm:MDP-baseline-W-Robustness}, we first state a technical lemma that establishes the relationship between the TV distance and the Wasserstein distance.

\begin{lemma}\label{lemma:TV-to-Wasserstein}
For any distributions $\mu, \nu$ on $\mathcal{X}$, we have
\[W_1(\mu, \nu) = 2\TV(\mu, \nu) = \norm{\mu - \nu}_1.\]
\end{lemma}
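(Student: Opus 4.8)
The plan is to exploit the very simple geometry of the state space $\mathcal{X} = \{e_i : i = 1, \ldots, \abs{\mathcal{S}}\}$. First I would observe that under the $\ell_1$ metric the indicator vectors are all equidistant: $\norm{e_i - e_j}_1 = 2$ whenever $i \neq j$, and $\norm{e_i - e_j}_1 = 0$ when $i = j$. Thus the underlying metric on $\mathcal{X}$ is exactly twice the discrete ($0$--$1$) metric, and the problem reduces to relating $W_1$ under this rescaled metric to the total variation distance. I would also record the elementary identity $\TV(\mu, \nu) = \frac{1}{2}\norm{\mu - \nu}_1$ for distributions on a finite set, which immediately yields the second equality $2\TV(\mu,\nu) = \norm{\mu - \nu}_1$; the real content of the lemma is therefore the first equality $W_1(\mu,\nu) = \norm{\mu-\nu}_1$.

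For the upper bound $W_1(\mu,\nu) \le \norm{\mu-\nu}_1$ I would construct an explicit coupling. Write $m \coloneqq \sum_i (\mu_i - \nu_i)^+$; since $\mu$ and $\nu$ are probability vectors, $\sum_i (\mu_i-\nu_i)^+ = \sum_i (\nu_i - \mu_i)^+ = m$ and hence $\norm{\mu-\nu}_1 = 2m$. The coupling keeps the common mass $\min(\mu_i,\nu_i)$ on the diagonal (transport cost $0$) and transports the excess mass, of total weight $m$, from the coordinates where $\mu_i > \nu_i$ to those where $\nu_i > \mu_i$ in any manner respecting the marginals. Every such unit of mass is moved between two distinct states, at cost $\norm{e_i - e_j}_1 = 2$, so the total transport cost is $2m = \norm{\mu-\nu}_1$, giving the desired upper bound.

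For the matching lower bound I would invoke Kantorovich--Rubinstein duality, $W_1(\mu,\nu) = \sup_{\|f\|_{\mathrm{Lip}}\le 1}\left(\int f\,\mathrm{d}\mu - \int f\,\mathrm{d}\nu\right)$, where the Lipschitz constant is taken with respect to the $\ell_1$ metric on $\mathcal{X}$. Choosing $f(e_i) = \mathrm{sign}(\mu_i - \nu_i) \in \{-1,0,1\}$, I note that $\abs{f(e_i)-f(e_j)} \le 2 = \norm{e_i-e_j}_1$ for $i \neq j$, so $f$ is $1$-Lipschitz and admissible. Then $\int f\,\mathrm{d}\mu - \int f\,\mathrm{d}\nu = \sum_i \mathrm{sign}(\mu_i-\nu_i)(\mu_i-\nu_i) = \sum_i \abs{\mu_i - \nu_i} = \norm{\mu-\nu}_1$, so $W_1(\mu,\nu) \ge \norm{\mu-\nu}_1$. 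Combining the two bounds closes the first equality, and chaining it with the TV identity completes the proof.

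I do not foresee a genuine obstacle here; the only subtlety worth flagging is the bookkeeping of the factor $2$, which enters because distinct indicator vectors are at $\ell_1$-distance $2$ rather than $1$ --- this is precisely why $W_1$ equals $2\TV$ under this metric rather than $\TV$, as in the standard Wasserstein--TV correspondence for the plain discrete metric. As an alternative to duality, the lower bound can also be obtained directly by noting that any coupling must place total mass at least $m$ off the diagonal, each unit contributing cost $2$; I would nonetheless keep the duality argument since it is the cleanest and the admissible test function is transparent.
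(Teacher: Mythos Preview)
Your proposal is correct and follows essentially the same approach as the paper: both exploit that distinct indicator vectors are at $\ell_1$-distance exactly $2$, so the Wasserstein cost reduces to twice the mass that must be moved off the diagonal, which is $\TV(\mu,\nu)$. The paper's proof is terser, handling the lower bound by the direct ``any coupling must move at least $\sum_i(\mu_i-\nu_i)^+$ off the diagonal'' argument that you list as your alternative, rather than via Kantorovich--Rubinstein duality; either route is fine and the content is the same.
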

\begin{proof}
To see this, note that since $\norm{x - x'}_1 = 2$ for any $x \not = x'$, the Wasserstein $1$-distance $W_1(\mu, \nu)$ equals 2 times the probability mass we need to transport to convert $\mu$ to $\nu$. For every $i \in \{1, \ldots, n\}$ such that $\mu_i > \nu_i$, we need to move out exactly $(\mu_i - \nu_i)$ from the probability mass at $e_i$ to other points $\left(e_j:j \not = i\right)$. Therefore, we must have
\[W_1(\mu, \nu) = 2 \sum_{i = 1}^n \mathbf{1}(\mu_i > \nu_i) \cdot (\mu_i - \nu_i) = 2\TV(\mu, \nu) = \norm{\mu - \nu}_1.\]
\end{proof}

Note that the MDP's transition kernel acts as a deterministic function. It maps the current state-action pair from $\mathcal{X} \times \mathcal{U}$ to the distribution of the subsequent state in $\mathcal{X}$.
Hence, the current state-action distribution on $\mathcal{X} \times \mathcal{U}$ maps to a distribution on $\Delta(\mathcal{X})$.
To proceed with this recursion, we require the distribution of the next state, which should be on $\mathcal{X}$. This is in contrast to needing the distribution of the distribution of the next state, which would be on $\Delta(\mathcal{X})$.
Therefore, to convert the distributions on $\Delta(\mathcal{X})$ back to distributions on $\mathcal{X}$, we require the following lemma.

\begin{lemma}\label{lemma:push-forward-distribution}
Let $\mu,\mu'$ be two distributions on $\Delta(\mathcal{X})$. It follows that $\norm{\mathbb{E}[\mu] - \mathbb{E}[\mu']}_1 \leq W_1(\mu, \mu')$.
\end{lemma}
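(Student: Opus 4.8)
The plan is to reduce the statement to the linearity of the expectation (barycenter) map combined with the primal (coupling) definition of the Wasserstein-1 distance. Regard each point of $\Delta(\mathcal{X})$ as a probability vector $p \in \mathbb{R}^{\abs{\mathcal{S}}}$, so that $\mathbb{E}[\mu] = \int_{\Delta(\mathcal{X})} p \, \mathrm{d}\mu(p) \in \mathbb{R}^{\abs{\mathcal{S}}}$ is well defined as a componentwise vector-valued integral (the integrand is bounded and $\Delta(\mathcal{X})$ is compact), and take the ground metric on $\Delta(\mathcal{X})$ to be $\norm{p - q}_1$, consistent with the $\ell_1$ metric adopted for the discrete state space.

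First I would fix an arbitrary coupling $\gamma \in \mathcal{J}(\mu, \mu')$, i.e.\ a joint distribution on $\Delta(\mathcal{X}) \times \Delta(\mathcal{X})$ with marginals $\mu$ and $\mu'$. Because these are exactly the marginals of $\gamma$, the difference of the two barycenters can be rewritten as a single integral against $\gamma$:
\[
\mathbb{E}[\mu] - \mathbb{E}[\mu'] = \int p \, \mathrm{d}\mu(p) - \int q \, \mathrm{d}\mu'(q) = \int \left(p - q\right) \mathrm{d}\gamma(p, q).
\]
Next I would apply the triangle inequality for vector-valued integrals (equivalently, Jensen's inequality for the convex map $v \mapsto \norm{v}_1$), which gives
\[
\norm{\mathbb{E}[\mu] - \mathbb{E}[\mu']}_1 = \norm{\int \left(p - q\right) \mathrm{d}\gamma(p, q)}_1 \leq \int \norm{p - q}_1 \, \mathrm{d}\gamma(p, q).
\]
Finally, taking the infimum over all couplings $\gamma \in \mathcal{J}(\mu, \mu')$ on the right-hand side yields $\norm{\mathbb{E}[\mu] - \mathbb{E}[\mu']}_1 \leq W_1(\mu, \mu')$, which is exactly the claim.

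The argument is short, so there is no substantial obstacle; the only points requiring care are bookkeeping rather than depth. I would make sure that $\mathbb{E}[\mu]$ is interpreted as the componentwise expectation, so that the barycenter map is genuinely linear, which is precisely what allows merging the two separate integrals into one integral against the coupling. I would also be explicit that the integral triangle inequality applies because $\norm{\cdot}_1$ is a norm and $\gamma$ is a probability measure, and that the resulting bound holds for \emph{every} coupling before passing to the infimum. This lemma then feeds into the proof of Theorem~\ref{thm:MDP-baseline-W-Robustness} by converting the one-step contraction stated for $\Delta(\mathcal{X})$-valued distributions back into a contraction on $\mathcal{X}$-valued distributions, as anticipated in the discussion preceding the statement.
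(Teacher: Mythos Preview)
Your proof is correct and follows essentially the same approach as the paper: both start from the coupling definition of $W_1$, write $\mathbb{E}[\mu]-\mathbb{E}[\mu'] = \int (p-q)\,\mathrm{d}\gamma(p,q)$ for an arbitrary coupling $\gamma$, apply the triangle inequality for vector-valued integrals to pull the $\ell_1$-norm inside, and then take the infimum over couplings. Your write-up is slightly more explicit about the bookkeeping (componentwise interpretation of the barycenter, why Jensen applies), but the argument is the same.
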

Note that $\mathbb{E}[\mu]$ and $\mathbb{E}[\mu']$ are distributions on $\mathcal{X}$.
\begin{proof}
By the definition of the Wasserstein distance, we have
\[W_1(\mu, \mu') = \inf_J \int \norm{x - y}_1 \mathrm{d} J(x, y),\]
where $J$ is a joint distribution on $\Delta(\mathcal{X})\times \Delta(\mathcal{X})$ with marginals $\mu$ and $\mu'$. For any such joint distribution $J$, we have
\[\int \norm{x - y}_1 \mathrm{d} J(x, y) \geq \norm{\int (x - y) \mathrm{d}  J(x, y)}_1 = \norm{\mathbb{E}[\mu] - \mathbb{E}[\mu']}_1.\]
This finishes the proof of Lemma~\ref{lemma:push-forward-distribution}.
\end{proof}

We now resume our discussion with the proof of Theorem~\ref{thm:MDP-baseline-W-Robustness}.

Given the state-action distribution $\rho$ at step $t$, let $\mu_{t:t'}(\rho)$ denote the resulting state distribution at step $t'$. We slightly abuse the notation so that for any pair $(x, u) \in \mathcal{X}\times \mathcal{U}$, $\mu_{t:t'}((x, u))$ still outputs the resulting state distribution at step $t'$. We see that
\[\norm{\mu_{t:t+1}((x, u)) - \mu_{t:t+1}((x', u'))}_1 \leq \norm{x - x'}_1 + \norm{u - u'}_1.\]
Therefore, by Lemmas~\ref{lemma:Wasserstain-preserves-contraction} and \ref{lemma:push-forward-distribution}, we see that
\begin{align}\label{thm:MDP-baseline-W-Robustness:e1}
    W(\mu_{t_1:t_1+1}(\rho), \mu_{t_1:t_1+1}(\rho')) \leq W(\rho, \rho').
\end{align}
Note that Lemma \ref{lemma:MDP-exp-contraction} and Lemma~\ref{lemma:TV-to-Wasserstein} imply that
\begin{align*}
    W(\mu_{t_1:t_2}(\rho), \mu_{t_1:t_2}(\rho')) \leq \decayfactor^{t_2 - t_1 - 1} W(\mu_{t_1:t_1+1}(\rho), \mu_{t_1:t_1+1}(\rho')).
\end{align*}
Combining this with \eqref{thm:MDP-baseline-W-Robustness:e1} gives that
\begin{align}\label{thm:MDP-baseline-W-Robustness:e2}
    W(\mu_{t_1:t_2}(\rho), \mu_{t_1:t_2}(\rho')) \leq \decayfactor^{t_2 - t_1 - 1} W(\rho, \rho').
\end{align}
For any distributions $\mu, \mu'$ on $\mathcal{X}$, we also see that $\norm{\bar{\pi}_{t_2}(\mu) - \bar{\pi}_{t_2}(\mu')}_1 \leq \norm{\mu - \mu'}$, which implies $$W(\bar{\pi}_{t_2}(\mu), \bar{\pi}_{t_2}(\mu')) \leq W(\mu, \mu').$$ Substituting this into \eqref{thm:MDP-baseline-W-Robustness:e2} gives that
\[W(\rho_{t_1:t_2}(\rho), \rho_{t_1:t_2}(\rho')) \leq 2 \decayfactor^{t_2 - t_1 - 1} W(\rho, \rho'),\]
validating that the Wasserstein robustness (Definition \ref{def:robust}) is satisfied.

\end{document}